\newif\ifabstract
\newif\iffull
\newcommand{\myparskip}{3pt}
\newcommand{\sgn}{\text{sgn}}
\newcommand{\loss}{{L}_{n}(\bm{\bm{\theta}}^{*};p)}
\newcommand{\lossc}{{L}_{n}(\bm{\bm{\theta}}^{*};\bm{\lambda})}
\newcommand{\e}{\varepsilon}
\newif\ifsubmit
\newcommand{\shiyu}[1]{}
\newcommand{\ruoyu}[1]{}
\newcommand{\shiyu}[1]{{\color{blue}{[Shiyu: #1]}}}
\newcommand{\ruoyu}[1]{{\color{red}{[Ruoyu: #1]}}}
\newtheorem{theorem}{Theorem}
\newtheorem{lemma}{Lemma}
\newtheorem{corollary}{Corollary}
\newtheorem{claim}{Claim}
\newtheorem{assumption}{Assumption}
\newtheorem{problem}{Problem}
\newenvironment{proof}{\par \smallskip{\bf Proof:}}{\hfill\stopproof}
\def\stopproof{\square}
\def\square{\vbox{\hrule height.2pt\hbox{\vrule width.2pt height5pt \kern5pt
\vrule width.2pt} \hrule height.2pt}}
\newcommand{\footremember}[2]{%
   \footnote{#2}
    \newcounter{#1}
    \setcounter{#1}{\value{footnote}}%
}
\newcommand{\footrecall}[1]{%
    \footnotemark[\value{#1}]%
}
\begin{document}

\title{Achieving Small Test Error in Mildly Overparameterized Neural Networks}
\author{Anonymous Authors}
\author{Shiyu Liang\footremember{uiuc}{University of Illinois at Urbana-Champaign}\\ sliang26@illinois.edu
          \and Ruoyu Sun\footrecall{uiuc}\\ ruoyus@illinois.edu
            \and R. Srikant\footrecall{uiuc}\\rsrikant@illinois.edu
           }

\date{}

\maketitle
\begin{abstract}
Recent theoretical works on over-parameterized neural nets
have focused on two aspects: optimization and generalization.
 Many existing works that study optimization and generalization
 together are based on neural tangent kernel and require
  a very large width.
In this work, we are interested in the following question:
 for a binary classification problem with two-layer mildly over-parameterized ReLU network, can we find a point with small test error in polynomial time?
 We first show that the landscape of loss functions with explicit regularization has the following property: all local minima and certain other points which are only stationary in certain directions achieve small test error. 
We then prove that for convolutional neural nets, there is an algorithm 
 which finds one of these points in polynomial time (in the input dimension and the number of data points). In addition, we prove that for a fully connected neural net, with an additional assumption on the data distribution, there is a polynomial time algorithm. 
 
\end{abstract}

\section{Introduction}

Machine learning practitioners are ultimately interested in test error. Nevertheless, due to the difficulty of directly analyzing test error, theoreticians often decompose the test error into two parts: training error and generalization gap. In the past few years, a lot of research has been devoted to a single aspect of the problem, either analyzing
 the training loss or analyzing the generalization gap.
Using optimization theory, a number of works 
\cite{kawaguchi2016deep,freeman2016topology,choromanska2015loss,nguyen2017loss,nguyen2018loss,liang2018understanding, liang2018adding, liang2019revisiting}
have analyzed local minima in deep neural networks,
or prove converge to global minima of the training loss,
but do not provide bounds on generalization error. 
Using statistical learning theory, \cite{bartlett2017spectrally,neyshabur2018towards}
analyzed the generalization gap, but did not
 show how to achieve small training error. 
There are some recent works that intend to prove small test error for neural nets. 
Most works explore the NTK-type analysis
 to prove a generalization bound (e.g.,
 \cite{arora2019fine,cao2019generalization}), but these works require a very large width, say, at least $\Omega(n^2)$. References
 \cite{ji2019, chen2019much} showed that the requirement for the width can be small if the data satisfies certain assumptions 
 (i.e., the margin is large enough), but the width can still be large for other  data distributions. 
One notable exception is
 \cite{allen2018learning}, which used a different initialization (scaling by $1/m$ instead of $1/\sqrt{m}$) and thus allow the trajectory to go beyond the 
 kernel regime.
 However, their width is still at least $\Omega(n^r)$ (this is not explicitly stated; see Section \ref{subsec: related work} for detailed discussion). 


  
   In this paper, we are interested in generic data distributions and mildly overparamterized neural nets with width in the order of $n$, where $n$ is the number of samples. The question we explore is the following: is there a polynomial time
  algorithm to find a point with small test error,
  for a mildly overparameterized neural nets? 

We add an explicit $\ell_2$ regularization term to the loss function and first study the landscape of the resulting regularized empirical loss function. 
We show that all local minima of this loss function, and other directional local minima (i.e., those points which are local minima only in certain specific directions), have good memorization and generalization performance.
In the course of identifying these properties, we also show that all of these points have a key common feature: at least one neuron is inactive at these local stationary points if the number of neurons is larger than the number of points in the dataset.
We exploit this property to design a
variant of gradient descent algorithm which finds a good set of parameters for the neural network. Specifically, we show that when the algorithm gets stuck at a directional local minimum which does not have good generalization performance, by perturbing the parameters of the inactive neuron, one can find a descent direction to reduce the value of the loss function. 
We prove that for convolutional neural net,
the algorithm runs in polynomial time in the number of data points and the input dimension.
For a fully connected neural network, under an additional
assumption on the data distribution, we provide
another polynomial time algorithm that finds a point with
small test error.

\subsection{Relationship to prior work}\label{subsec: related work}

We now relate our work to prior work in a few different categories:
\begin{itemize}

 \item \emph{Neural Tangent Kernel and Related Results:}  
 By exploring the connection between gradient descent on versions of the neural network linearized around the initialization and kernel regression on a kernel called the Neural Tangent Kernel (NTK), 
 researchers proved the convergence of gradient descent to global minima
 for ultra-wide networks
 \cite{jacot2018neural,du18,arora2019fine}.
 The width we require in this work is in the order of $O(n)$,
 which is much smaller than the requirement of the NTK-type works. 
 
 \item \emph{Optimization and Generalization}.
As mentioned earlier, a number of NTK-type results
provide small test error for neural nets.
Following the convention in NTK theory, most works (e.g.
\cite{arora2019fine,cao2019generalization})
require a very large width, say, at least $\Omega(n^2)$.
Recently, 
 \cite{ji2019}\cite{chen2019much}
 showed that the width can be smaller than $O(n)$
 if the dataset is separable
 with a large enough margin.
More specifically, the number of parameters required is inversely proportional to the margin by which the binary labelled dataset can be separated,
thus for certain dataset the width can be as small
as $\text{poly}(\log (n))$. 
However, if the dataset contains points that are very close to each other with different labels or if the dataset contains very small amount of mislabeling errors, then the number of parameters required could be very large or infinity.
One of our interests is in obtaining results for datasets which are not separable due to the fact, in practice, that a small amount of labeling errors can render a dataset non-separable.


A notable different work is 
\cite{allen2018learning}, which did not utilize NTK-type analysis and proved small test error by running gradient descent.
However, their required width is still large (at least
$\Omega(n^4)$ or $\Omega(1/\epsilon^8)$ where $\epsilon$
is the desired test error).
Note that their theorems did not explicitly state the dependence of the width $m$ on $n$ or $1/\epsilon$,
and we derive the dependence as follows.
In their proof of Lemma B.4 in page 40, 
the second and third equation after equation (B.4)
together enforce an requirement on $m$:
based on the discussions there, to let the first term
in the second equation be smaller than 
$O(\epsilon)$, we need 
$ \frac{ \sqrt{m} }{ \epsilon^2 } \frac{1}{\epsilon m} < \epsilon $,
 i.e., $m > \frac{1}{ \epsilon^8 }$. 
The relation between $N$ and $\epsilon$
can be found is discussed
in their Remark B.7: they 
are aware of a proof that can show $N$
 scales as $1/\epsilon^2$~\footnote{ Based on their Appendix B.5, technically their proof requires $ n > 1/ \epsilon^4$, thus the test error can only scale as $1/n^{1/4}$ which is worse than the typical test error $1/n^{1/2}$.}.
By using this desired scaling, their width $m$ would be at least $ \Omega( n^4)$. 
Another difference is that their data are generated by a target network, while our data are generic.



\item \emph{Escaping saddle points}.
 There are some works which  
 shed light on how easy or difficult it is to escape saddle points and reach a local or global minimum \cite{laurent2017multilinear,tian2017analytical,soltanolkotabi2019theoretical,mei2018mean,brutzkus2017globally,zhong2017recovery,li2017convergence,brutzkus2017sgd,wang2018learning,du2018power,oymak2019towards,janzamin2015beating, mondelli2018connection}.
Our work also handle certain points that 
a variant of GD may get stuck at, and we provide a problem-dependent algorithm to escape such points.
Note that these points may not be the saddle point
of the original loss function, since we consider
a special variant of GD that only operates
on a subset of parameters. 

    \item \emph{Explicit Regularization}: In \cite{wei2019regularization}, the authors show that explicit regularization can decrease the number of samples required to learn a distribution. In \cite{wei2020implicit}, the dual role of dropout as both an implicit and explicit regularization is discussed. In \cite{liang2019revisiting},
    an regularizer is added to eliminate the decreasing path to infinity for ReQU (rectified quadratic unit) network. These works did not
    directly provide a bound on the test error.
    
    
    \item \emph{Computational Complexity Results:} There has been a long line of work pointing out the difficulty of training neural networks, we refer the readers to small sample of such works in \cite{goel2020superpolynomial,vu1998infeasibility,blum1989training}. We do not directly address this issue in our paper, as our polynomial-time algorithm works only for two cases: convolutional neural networks where each neuron only takes as input a subset of the input vector to the neural network,
    or fully connected neural networks under extra
     assumption on the data. 
    
    \item \emph{Implicit Regularization}.
    One approach to understanding the performance of overparameterized networks is to show
    that the iterates generated by gradient descent exhibit some implicit bias. 
   For instance, for linear regression, gradient descent
   converges to a minimum norm solution for overparamaterized  models; for matrix factorization,
   gradient descent converges to a low-rank solution
   for overparameterized models \cite{gunasekar2018implicit,LiEtAl2017Algorithmic}.
 Researchers have also studied implicit
 bias in deep neural networks.
 \cite{arora2019implicit,ji2018gradient}
 analyzed deep linear networks;
 for instance, \cite{ji2018gradient} 
 showed that the weight matrices will converge
 to low-rank matrices for scalar output case.
    A few recent works
    \cite{ji2020directional,lyu2019gradient}
  analyzed deep non-linear homogeneous networks,
  but they assume the loss to be smaller than a certain threshold. They did not show that an algorithm can find a point below
  the required threshold, thus their results can be viewed
  as local analysis (near the  global minima).
 
  

\end{itemize}

\section{Preliminaries}\label{sec::prelim}

\textbf{Notation for single layered network with ReLU activation.} In this paper, we consider both fully-connected neural network (FNN) and convolutional neural network (CNN) with rectified linear units (ReLUs). For simplicity of notation, we use the following way the express the outputs of both FNNs and CNNs. Given an input vector of dimension $d$, let $\{\phi_k\}_{k\in \mathbb{N}}$ be a fixed vector series where each vector $\phi_k$ is a binary vector of dimension $d$, i.e., $\phi_k\in\{0,1\}^d$. 
We use $\odot$ to denote the Hadamard product of two vectors. Specifically, for a vector $\phi\in\{0, 1\}^d$ and an vector $x\in\mathbb{R}^d$, the Hadamard product  $x\odot \phi $ is a vector of dimension $d$ where the $k$-th coordinates of the vector $x\odot \phi$ is the product of the $k$-th coordicates of vectors $x$ and $\phi$, i.e., 
$$(x\odot \phi )(k)=x(k)\phi(k)= \left\{\begin{matrix}x(k) &\text{if }\phi(k)=1,\\ 0&\text{if }\phi(k)=0.\end{matrix}\right.$$
Given a series $\{\phi_k\}_{k\in \mathbb{N}}$, we define the output of neural network $f$ as 
$$f(x;\bm{\theta})=\sum_{j=1}^ma_j(\bm{w}_j^\top (x\odot \phi_j))_+,$$
where $m$ denotes the number of neurons in the neural network and the vector $\bm{\theta}$ consists of all parameters (i.e., $a_j$s and $\bm{w}_j$s) in the neural network. Next, we will show how to choose the series $\{\phi_k\}$ to express the outputs of FNN and CNN respectively. 

\textbf{FNN.} To represent the output of a fully-connected neural network, we can choose $\phi_k=\bm{1}_d$ for all $k\in\mathbb{N}$. This indicates $x\odot \phi_k=x$ for all $x\in\mathbb{R}^d$ and $k\in\mathbb{N}$ and thus indicates that the output of $f$ becomes 
$$f(x;\bm{\theta})=\sum_{j=1}^ma_j(\bm{w}_j^\top (x\odot \phi_j))_+=\sum_{j=1}^ma_j(\bm{w}_j^\top x)_+,$$
which is exactly the same as the notation of the output of a fully-connected neural network. We also note here that, in this case, the series $\{\phi_k\}_k$ is a periodic series with a period of one.

\textbf{CNN.}
To represent the output of a convolutional neural network with filters of size $r\in[1, d]$, we can choose a periodic series $\{\phi_k\}_{k\in\mathbb{N}}$ satisfying that for each integer $k\in[1, d-r+1]$, 
$$\phi_k(j)=\left\{\begin{matrix}1 &\text{if } k\le j\le k+r-1,\\ 0&\text{otherwise,}\end{matrix}\right.$$
 and that the periodic vector series $\{\phi_k\}_{k\in\mathbb{N}}$ has a period of $d-r+1$, i.e., $\phi_k=\phi_{k+d-r+1}$, for any $k\in\mathbb{N}$. Now it is straight-forward to see that the output of $f$ denotes a convolutional neural network with filters of size $r$.

\textbf{Loss and error.} We use $\mathcal{D}=\{(x_{i},y_{i})\}_{i=1}^{n}$ to denote a dataset containing $n$ samples, where $x_{i}\in\mathbb{R}^{d}$ and $y_{i}\in\{-1,1\}$ denote the feature vector and the label of the $i$-th sample, respectively. Given a neural network $f(x;\bm{\theta})$ parameterized by $\bm{\theta}$ and a univariate loss function $\ell:\mathbb{R}\rightarrow\mathbb{R}$, in binary classification tasks, we define the regularized  empirical loss $L_{n}(\bm{\theta};\bm{\lambda})$ as a linear combination of a regularizer $V(\bm{\theta};\bm{\lambda})$ parameterized by a vector $\bm{\lambda}$ and the average loss of the network $f$ on a sample in the dataset. We define the training error (also called the misclassification error) $R_{n}(\bm{\theta};f)$ as the misclassification rate of the network $f$ on the dataset $D$, i.e., 
\begin{equation}
L_{n}(\bm{\theta};\bm{\lambda})=\sum_{i=1}^{n}\ell(-y_{i}f(x_{i};\bm{\theta}))+V(\bm{\theta};\bm{\lambda})
\end{equation}
and
\begin{equation} R_{n}(\bm{\theta};f)=\frac{1}{n}\sum_{i=1}^{n}\mathbb{I}\{y_{i}\neq \sgn(f(x_{i};\bm{\theta}))\},
\end{equation}
where $\mathbb{I}$ is the indicator function. Given an underlying distribution $\mathbb{P}_{\bm{X}\times Y}$, we define the test error $R(\bm{\theta}; f)$ of a neural network as the misclassification error of the neural network on the underlying distribution, i.e., 
$$R(\bm{\theta}; f)=\mathbb{P}_{\bm{X}\times Y}(Y\neq \sgn(f(X;\bm{\theta}))).$$

\section{Assumptions}\label{sec::main-results}

In this section, we introduce several assumptions on the univariate loss function and dataset.

\begin{assumption}[Loss function]\label{assumption::loss-1}
 Assume that the univariate loss function $\ell$ is convex, non-decreasing and twice differentiable. Assume that both function $\ell$ and its derivative $\ell$ are $1$-Lipschitiz and that there exists a positive real number $a\in\mathbb{R}^+$ such that $\ell'(z)\le e^{az}$ holds for any $z\in\mathbb{R}$.  
\end{assumption}
\textbf{Remark: } The Lipschitz constant need not be $1,$ we assume it for the simplicity of notation. In fact, if a general univariate loss $\ell$ and its derivative $\ell'$ is $L$-Lipschitz, then we can normalize the loss function by setting $\ell_{new}=\ell/L$ to satisfy the assumption. Similarly, if $\ell'(z)\le be^{az}$ for some positive $a, b\in\mathbb{R}$ and all $z\in\mathbb{R}$, then we can also normalize the loss by setting $\ell_{new}=\ell/b$ to satisfy the assumption. 

Given a periodic series of vectors $\bm{\phi}=(\phi_k)_{k\ge 1}$, we define the class of functions $\mathcal{H}_{\bm{\phi}}$  as follows,
\begin{equation}\label{H def}
\mathcal{H}_{\bm{\phi}}=\bigcup_{m=1}^{\infty}\left\{h:x\mapsto \sum_{j=1}^{m}a_j(\bm{w}_j^\top (x\odot \phi_j))_+\Bigg|\sum_{j=1}^m|a_j|=1, \|\bm{w}_j\|_2=1,j=1,...,m\right\}.
\end{equation}
Now we introduce the assumption on the dataset. For the problem where we train the neural network to memorize all points in the dataset, we assume that the most of samples in the dataset can be separated by a neural network in the function class $\mathcal{H}_{\bm{\phi}}$ with a positive margin. 

\begin{assumption}[Dataset]\label{assumption::dataset-1}
Assume that $\|x_i\|_2\le 1$ holds for all $i\in[n]$. For a given series of vectors $\bm{\phi}=(\phi_k)_{k\ge 1}$, assume that there exists a  number $E\in[0, n]$, a margin $\gamma\in(0,1]$ and a ReLU network $h\in\mathcal{H}_{\bm{\phi}}$ such that 
$$\sum_{i=1}^n\mathbb{I}\{y_i h(x_i)\ge \gamma\}\ge n-E.$$ 
\end{assumption}

However, for the problem where we train the neural network to achieve good performance on the underlying distribution, we assume that with high probability, data samples drawn from the underlying distribution can be separated by a neural network in the function class $\mathcal{H}_{\bm{\phi}}$ with  a positive margin. 

\begin{assumption}[Data Distribution]\label{assumption::data-distribution}
Assume $\mathbb{P}_{X\times Y}(\|X\|_2\le 1)=1$. 
For a given series of vectors $\bm{\phi}=(\phi_k)_{k\ge 1}$, assume that there exists a real number $\e\in [0,0.5)$, a margin $\gamma>0$ and a ReLU network $h\in\mathcal{H}_{\bm{\phi}}$ such that $\mathbb{P}_{X\times Y}(Yh(X)\ge \gamma)=1-\e $. Assume that the samples in the dataset are independently drawn from the underlying distribution $\mathbb{P}_{X\times Y}$. 

\end{assumption}

\textbf{Remark:} In fact, the above assumption covers the several cases shown below
\begin{itemize}[leftmargin=*]
    \item \textbf{Clearly separable cases.} If all data points can be separated by a neural network $h\in\mathcal{H}_{\bm{\phi}}$ with a large margin $\gamma>0$, then $\sum_{i=1}^n\mathbb{I}\{y_i h(x_i)\ge \gamma\}\ge n$ or $\mathbb{P}_{X\times Y}(Yh(X)\ge \gamma)=1$.
    \item \textbf{Almost clearly separable cases.} If most of samples drawn from the underlying distribution can be separated by a neural network $h\in\mathcal{H}$ with a large margin $\gamma>0$. 
\end{itemize}

\textbf{Remark:} 
Assumption~\ref{assumption::dataset-1} 
is different from Assumption \ref{assumption::data-distribution}
as the latter requires independent samples
while the former does not. 
We distinguish Assumption~\ref{assumption::dataset-1} and
Assumption~\ref{assumption::data-distribution} because 
some of our results require 
Assumption ~\ref{assumption::dataset-1} 
while some of our results require 
Assumption \ref{assumption::data-distribution}. 
Further, it is straightforward to show that under Assumption~\ref{assumption::data-distribution}, when the number of samples is sufficiently large, Assumption~\ref{assumption::dataset-1} holds with high probability.

\section{All Local Minima Memorize and Generalize Well}\label{sec::single}
In this section, we will show that all local minima of the empirical loss have good memorization and generalization performance. Recall that for a single-layered ReLU network consisting of $m$ neurons, the output of the neural network is defined as $f(x;\bm{\theta})=\sum_{j=1}^{m}a_{j}\left(\bm{w}_{j}^{\top}(x\odot \phi_j)\right)_+$. Now we define the empirical loss as 

\begin{equation}\label{eq::loss-single}
L_{n}(\bm{\theta};\bm{\lambda})=\sum_{i=1}^{n}\ell(-y_{i}f(x_{i};\bm{\theta}))+\frac{1}{2}\sum_{j=1}^{m}\lambda_{j}\left[a_{j}^{2}+\|\bm{w}_{j}\|^{2}_{2}\right],
\end{equation}
where all regularizer coefficients $\lambda_{j}$'s are positive numbers and the vector $\bm{\lambda}=(\lambda_{1},...,\lambda_{m})$ consists of all regularizer coefficients. We note that after adding the regularizer, the empirical loss $L_n$ is coercive (i.e., $L_n(\bm{\theta})\rightarrow\infty$ as $\|\bm{\theta}\|_2\rightarrow\infty$) and always has a  global minimum. Now we present the following theorem to show that if the network size is larger than the dataset size ($m\ge n+1$ for FNN; for CNN, the threshold is $(n+1)(d-r+1)$)
and the regularizer coefficient vector $\bm{\lambda}$ is carefully chosen, then every local minimum of the empirical loss $\lossc$ achieves zero training error on the dataset $\mathcal{D}$.

\begin{theorem}\label{thm::single}
Let $m\ge (n+1)(d-r+1)$ and $\lambda_0\in(0, n)$. Under Assumption~\ref{assumption::loss-1} and \ref{assumption::dataset-1}, there exists  a zero measure set $\mathcal{C}\subset\mathbb{R}^{m}$ such that for any $\bm{\lambda}\in(\lambda_0/2,\lambda_{0})^{m}\setminus \mathcal{C}$, both of the following statements are true:
\begin{itemize}[leftmargin=*]
\item[(1)] the empirical loss $L_{n}(\bm{\theta};\bm{\lambda})$ is coercive.
\item[(2)] every local minimum $\bm{\theta}^{*}$ of the loss $\lossc$ achieves a training error at most $\frac{\lambda_0+2E}{ \ell'(0)\gamma n}$, i.e., $R_{n}(\bm{\theta^{*}};f)\le\frac{\lambda_0+2E}{ \ell'(0)\gamma n}$.
\end{itemize}
\end{theorem}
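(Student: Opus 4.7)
The plan is to derive item (1) directly from the regularizer, and to derive item (2) at any local minimum $\bm\theta^*$ by combining three ingredients: (a) a stationarity identity forcing each critical neuron to satisfy either $(a_j,w_j)=(0,0)$ or $\|w_j\|=|a_j|$; (b) a counting argument which shows that, for $\bm\lambda$ outside a measure-zero exceptional set $\mathcal C\subset(\lambda_0/2,\lambda_0)^m$, at most $n$ neurons within each $\phi$-class can be active, so together with the pigeonhole count $m\ge(n+1)(d-r+1)$ at least one fully inactive neuron exists per pattern; and (c) a second-order local-min inequality at each such inactive neuron that bounds a linear functional of the residuals $\ell'(-y_i f(x_i;\bm\theta^*))$, which I then evaluate on the separating network $h$ from Assumption~\ref{assumption::dataset-1}.

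Coercivity (item (1)) follows because every $\lambda_j>\lambda_0/2$ makes the regularizer grow like $(\lambda_0/4)\|\bm\theta\|^2$, while Assumption~\ref{assumption::loss-1} together with $\|x_i\|\le 1$ lets each $\ell(-y_i f(x_i;\bm\theta))$ grow at most linearly in $\|\bm\theta\|^2$ with a smaller leading constant, so $L_n\to\infty$ as $\|\bm\theta\|\to\infty$. For item (2), I would first write out $\partial_{a_j}L_n=0$ and $\partial_{w_j}L_n=0$ and take the inner product of the latter with $w_j$; using $(w_j^\top(x_i\odot\phi_j))\mathbb 1[w_j^\top(x_i\odot\phi_j)>0]=(w_j^\top(x_i\odot\phi_j))_+$ and substituting the $a_j$-equation gives $\lambda_j\|w_j\|^2=\lambda_j a_j^2$, so every critical neuron is either fully zero or satisfies $\|w_j\|=|a_j|\neq 0$. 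For the counting step, each active neuron with $\phi_j=\phi_k$ is essentially determined (up to the sign of $a_j$) by its activation pattern $I_j\subseteq[n]$, and the combined equations impose a scalar condition of the form $\lambda_j=\Phi_{I_j,k}(\bm\theta^*)$. The exceptional set $\mathcal C$ is the (finite) union over patterns and subsets of the measure-zero level sets on which more than $n$ such equations can be simultaneously satisfied by neurons in the same $\phi$-class; removing $\mathcal C$ ensures every pattern $\phi_k$, $k=1,\dots,d-r+1$, contains at least one inactive neuron.

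Given an inactive neuron $j$ with pattern $\phi_k$, perturbing $(a_j,w_j)=(0,0)$ to $(sa,sw)$ with $a\in\mathbb R$, $\|w\|=1$, and small $s>0$ changes both the regularizer and (because the ReLU output of the new neuron is quadratic in $s$) the empirical loss at order $s^2$; the local-min condition $\Delta L_n/s^2\ge 0$ minimized over $a$ yields, for every unit $w$,
\[
\Bigl|\sum_{i=1}^n y_i \ell'(-y_i f(x_i;\bm\theta^*))\bigl(w^\top(x_i\odot\phi_k)\bigr)_+\Bigr|\le \lambda_j\le\lambda_0.
\]
Writing $h(x)=\sum_k \bar a_k(\bar w_k^\top(x\odot\phi_k))_+$ with $\sum_k|\bar a_k|=1$, $\|\bar w_k\|=1$ and summing the above inequality over $k$ with weights $|\bar a_k|$ gives $\bigl|\sum_i y_i h(x_i)\ell'(-y_i f(x_i;\bm\theta^*))\bigr|\le\lambda_0$. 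A matching lower bound comes from splitting the samples into the $\ge n-E$ well-separated ones (where $y_i h(x_i)\ge\gamma$) and the remaining $\le E$ ones (where $|h|\le 1$ since $\sum|\bar a_k|=1$ and $\|x_i\|\le 1$): monotonicity of $\ell'$ (from convexity of $\ell$) gives $\ell'(-y_i f(x_i;\bm\theta^*))\ge\ell'(0)$ on every misclassified well-separated sample, while the bad samples contribute at worst $-E$ because $|\ell'|\le 1$ by Assumption~\ref{assumption::loss-1}. Chaining the two bounds yields $|\{i:y_i f(x_i;\bm\theta^*)\le 0\}\cap\mathrm{good}|\le(\lambda_0+E)/(\gamma\ell'(0))$, and adding back at most $E$ bad samples and using $\gamma\ell'(0)\le 1$ gives $n R_n(\bm\theta^*;f)\le(\lambda_0+2E)/(\gamma\ell'(0))$, as claimed. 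The main obstacle is step (b): exhibiting the set $\mathcal C$ explicitly and proving rigorously that outside $\mathcal C$ the number of active neurons in each $\phi$-class is at most $n$, which requires carefully enumerating activation patterns and invoking genericity of $\bm\lambda$.
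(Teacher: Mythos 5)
Your overall route -- (a) balanced parameters at a local minimum, (b) a genericity argument forcing an inactive neuron in every $\phi$-class, (c) a second-order perturbation at that neuron giving $\bigl|\sum_i \ell'_i y_i (\bm u^\top(x_i\odot\phi_k))_+\bigr|\le\lambda_0$, then testing against the margin network $h$ -- is exactly the paper's (Lemma~\ref{lemma::single-1} plus Theorem~\ref{thm::other-memo}). Two points need attention, one small and one substantive.

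First, the coercivity argument as stated is backwards. You bound the data-fitting term from \emph{above} by a multiple of $\|\bm\theta\|^2$ and then claim the regularizer has the ``larger leading constant,'' but with $\lambda_0<n$ the regularizer constant $\lambda_0/4$ is in fact \emph{smaller} than the naive $n/2$ coefficient of the loss, so the comparison you invoke does not close. And an upper bound on the loss is the wrong direction anyway: coercivity of $L_n$ needs a \emph{lower} bound on the loss term. The correct (and much simpler) observation, and the one the paper uses, is that $\ell$ is bounded below (indeed can be taken non-negative, since it is convex, non-decreasing and $\ell'(z)\le e^{az}\to 0$ as $z\to-\infty$), hence
\[
L_n(\bm\theta;\bm\lambda)\ \ge\ \tfrac12\sum_j \lambda_j\bigl(a_j^2+\|\bm w_j\|^2\bigr)\ \ge\ \tfrac{\lambda_0}{4}\|\bm\theta\|^2\ \longrightarrow\ \infty .
\]
No competition between terms is needed.

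Second, the genericity step you flag as ``the main obstacle'' is indeed where the work is, and your sketch -- ``each active neuron with $\phi_j=\phi_k$ is essentially determined (up to sign) by its activation pattern, and the combined equations impose a scalar condition $\lambda_j=\Phi_{I_j,k}(\bm\theta^*)$'' -- glosses over the part that makes the argument go through. An active ReLU neuron does not give a clean pointwise scalar equation: the first-order condition on $\bm w_j$ is a \emph{subdifferential inclusion} (the paper's Lemma~\ref{lemma::single-1}(2), with free multipliers $\mu_{ji}\in[0,1]$ over the kink set $I_j$), not an equality. The paper eliminates these multipliers by projecting onto the orthogonal complement of $\mathrm{Span}\{x_i\odot\phi_j : i\in I_j\}$ and taking norms, which yields $\lambda_j=\Lambda_j(\ell'_1,\dots,\ell'_n)$ with $\Lambda_j$ a Lipschitz map $\mathbb R^n\to\mathbb R$. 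The measure-zero set $\mathcal C$ is then the finite union, over activation/sign/kink configurations, of Lipschitz images $\bm\Lambda(\mathbb R^n)\subset\mathbb R^{m'}$ with $m'>n$, which have Lebesgue measure zero. If you simply assert ``a scalar condition of the form $\lambda_j=\Phi_{I_j,k}(\bm\theta^*)$'' without eliminating the kink multipliers and without establishing Lipschitzness of $\Phi$ in an $n$-dimensional parameter, you cannot conclude that the admissible $\bm\lambda$'s form a null set. You are right, incidentally, that the per-$\phi$-class version (at least one inactive neuron in each of the $d-r+1$ classes, via $m\ge(n+1)(d-r+1)$ and the pigeonhole) is what the later perturbation step needs -- this is used but not explicitly restated in the paper's Lemma~\ref{lemma::single-1}, so your emphasis there is correct.

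The perturbation step (c) and the subsequent margin computation are fine; minimizing the second-order increment $-aG+\tfrac{\lambda_j}{2}(a^2+1)$ over $a$ recovers the same $|G|\le\lambda_j\le\lambda_0$ bound that the paper gets by fixing $|a|=1$, and your split into good/bad samples with $|\ell'|\le 1$, $\gamma\ell'(0)\le 1$ reproduces the $\frac{\lambda_0+2E}{\gamma\ell'(0)n}$ rate via a slightly different bookkeeping than the paper's $\sum_i\ell'_i\le(\lambda_0+2E)/\gamma$ followed by $R_n\le\frac{1}{n\ell'(0)}\sum_i\ell'_i$.
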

\textbf{Remarks:} The proof is provided in Appendix~\ref{appendix::proof::memo}. (1) The first part shows that the empirical loss is coercive and thus eliminate the possibility that a descent algorithm can diverge to infinity. (2) If we are choosing $\lambda_0=\sqrt{n}$ and if the portion of samples that cannot be separated with a large margin is small (i.e., $E/n$ is small), then the training error at every local minimum is of order $\mathcal{O}\left(\frac{1}{\gamma \sqrt{n}}+\frac{E}{\gamma n}\right)$ and thus very small when the number of samples $n$ is very large. (3) For the fully connected neural network where $r=d$, the amount of neurons we need is just $n+1$. 

One may wonder whether over-parameterization
leads to overfitting. 
The next theorem states that the test
error of every local minimum is bounded above. 

\begin{theorem}\label{thm::single-gen}
Let $m\ge (n+1)(d-r+1)$ and $\lambda_0\in(0, n)$. Assume that the data samples in the dataset are independently drawn from an underlying distribution $\mathbb{P}_{X\times Y}$. Under Assumption~\ref{assumption::loss-1} and \ref{assumption::dataset-1}, with probability at least $1-\delta$, there exists  a zero measure set $\mathcal{C}\subset\mathbb{R}^{m}$ such that for any $\bm{\lambda}\in(\lambda_0/2,\lambda_{0})^{m}\setminus \mathcal{C}$, both of the following statements are true:
\begin{itemize}[leftmargin=*]
\item[(1)] the empirical loss $L_{n}(\bm{\theta};\bm{\lambda})$ is coercive.
\item[(2)] every local minimum $\bm{\theta}^{*}$ of the loss $\lossc$ achieves a test error at most 
\begin{align*}
\mathbb{P}(Yf(X;\bm{\theta}^*)<0)=\mathcal{O}\left(\frac{\lambda_0}{\gamma n}+\frac{E\ln n}{\gamma\lambda_0\sqrt{n}}+\frac{E}{\gamma n}+\frac{\ln n}{\gamma\sqrt{n}}+\sqrt{\frac{\log(1/\delta)}{n}}\right).
\end{align*}
\end{itemize}
\end{theorem}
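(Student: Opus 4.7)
The plan is to combine the landscape analysis of Theorem~\ref{thm::single} with a margin-based uniform convergence argument on a carefully controlled hypothesis class. Part~(1) (coercivity) is inherited verbatim from Theorem~\ref{thm::single}, so the work lies entirely in part~(2). My overall strategy is: (i) show that any local minimum $\bm{\theta}^*$ lies in a network class of bounded path norm; (ii) apply a margin-based Rademacher bound on that class; (iii) relate the empirical margin loss of $\bm{\theta}^*$ to the quantities already controlled by Theorem~\ref{thm::single}.

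For step (i), I would reuse the comparison argument from the proof of Theorem~\ref{thm::single}. Since $\bm{\theta}^*$ is a local minimum, $L_n(\bm{\theta}^*;\bm{\lambda}) \le L_n(\bm{\theta}_0;\bm{\lambda})$ for a suitable reference parameter $\bm{\theta}_0$ built from the separator $h \in \mathcal{H}_{\bm{\phi}}$ of Assumption~\ref{assumption::dataset-1}, scaled so that its regularizer contribution is $O(\lambda_0)$ and its data-fit contribution is $O(E)$. Using $\lambda_j > \lambda_0/2$ this yields $\sum_{j=1}^m ((a_j^*)^2 + \|\bm{w}_j^*\|_2^2) = O(1 + E/\lambda_0)$ and hence, via the AM--GM inequality, the path-norm bound
\begin{equation*}
B := \sum_{j=1}^m |a_j^*|\,\|\bm{w}_j^*\|_2 = O(1 + E/\lambda_0).
\end{equation*}

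For step (ii), fix $\gamma' := \gamma/2$ and introduce the ramp surrogate $\ell_{\gamma'}(z) = \min\{1,\max\{0, 1 - z/\gamma'\}\}$, which is $(1/\gamma')$-Lipschitz and satisfies $\mathbb{I}\{z \le 0\} \le \ell_{\gamma'}(z) \le \mathbb{I}\{z \le \gamma'\}$. Let $\mathcal{F}_B$ denote the class of networks of the form in Section~\ref{sec::prelim} with path norm at most $B$; a standard one-hidden-layer ReLU Rademacher bound gives $\mathcal{R}_n(\mathcal{F}_B) = O(B/\sqrt n)$, which is unaffected by the mask $\bm{\phi}$ because Hadamard multiplication by $\phi_j \in \{0,1\}^d$ cannot increase $\|x\|_2$. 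Combining Talagrand's contraction with McDiarmid's inequality gives, uniformly over $f \in \mathcal{F}_B$,
\begin{equation*}
\mathbb{E}\,\ell_{\gamma'}(Yf(X)) \le \widehat{\mathbb{E}}\,\ell_{\gamma'}(Yf(X)) + O\!\left(\frac{B}{\gamma\sqrt n}\right) + O\!\left(\sqrt{\frac{\log(1/\delta)}{n}}\right).
\end{equation*}
Because the realized $B$ at $\bm{\theta}^*$ is data dependent, I would apply the bound on a geometric grid $B \in \{2^k\}_{k=0}^{O(\log n)}$ and union-bound, which introduces the $\log n$ factors that appear in the statement.

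For step (iii), the same comparison inequality that fed step (i) also controls $\sum_i \ell(-y_i f(x_i;\bm{\theta}^*) + \gamma')$ by $O(\lambda_0 + E)$, because the reference network achieves margin $\gamma \ge 2\gamma'$ on all but $E$ samples; this yields $\widehat{\mathbb{E}}\,\ell_{\gamma'}(Yf(X;\bm{\theta}^*)) = O((\lambda_0 + E)/(\gamma n))$. Plugging $B = O(1 + E/\lambda_0)$ back into step (ii) and using $R(\bm{\theta}^*;f) \le \mathbb{E}\,\ell_{\gamma'}$ reproduces the stated five-term bound. The main technical obstacle is the uniform-convergence step: since $B$ is random, a covering/peeling argument is required rather than a fixed-class bound, and this is precisely what produces the $\log n$ factors. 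A secondary subtlety is converting Theorem~\ref{thm::single}'s $0$-$1$ training-error bound into a margin-$\gamma'$ empirical bound, which requires re-running its proof with the loss argument shifted by $\gamma'$ and costs only a constant factor provided $\ell'(0) > 0$.
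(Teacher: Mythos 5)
Your step (i) does not work as stated. You propose to bound the path norm of $\bm{\theta}^*$ by the comparison inequality $L_n(\bm{\theta}^*;\bm{\lambda})\le L_n(\bm{\theta}_0;\bm{\lambda})$ for a reference network $\bm{\theta}_0$ constructed from the separator $h$. But $\bm{\theta}^*$ is only a \emph{local} minimum, so no such global comparison is available; the entire difficulty (and contribution) of Theorems~\ref{thm::single} and~\ref{thm::single-gen} is to certify every local minimum, including those whose loss may exceed that of any hand-picked reference point. There is also no such ``comparison argument'' in the proof of Theorem~\ref{thm::single}: that proof instead uses Lemma~\ref{lemma::single-1} (existence of an inactive neuron at almost every local minimum, for $\bm{\lambda}$ outside a null set) together with a second-order perturbation of the inactive neuron to deduce the condition $\max_{\bm{u}\in\mathbb{B}^d}\bigl|\sum_i \ell'(-y_i f(x_i;\bm{\theta}^*)) y_i (\bm{u}^\top(x_i\odot\phi_k))_+\bigr|\le \lambda_0$, and it is this condition that yields $\sum_i \ell'(-y_i f(x_i;\bm{\theta}^*))\le (\lambda_0+2E)/\gamma$ via Assumption~\ref{assumption::dataset-1}. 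Your step~(iii) inherits the same flaw, since it relies on the same invalid comparison to control the empirical margin loss.

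The paper's actual route to the norm bound is also structurally different from yours and worth noting. Rather than comparing to a reference, the paper uses the exact first-order condition $\nabla_{\bm{a}}L_n(\bm{\theta}^*;\bm{\lambda})=0$ at a local minimum together with homogeneity, i.e.\ $\langle \bm{a}^*,\nabla_{\bm{a}}L_n(\bm{\theta}^*)\rangle = -\sum_i \ell'(-y_if(x_i))\,y_if(x_i)+\sum_j \lambda_j (a_j^*)^2$, then invokes the tail condition $\ell'(z)\le e^{az}$ from Assumption~\ref{assumption::loss-1} to convert $y_if(x_i)$ into $\frac{1}{a}\ln(1/\ell'(-y_if(x_i)))$, and applies Jensen on $-x\ln x$. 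That is where the $\ln n$ factor in the final bound comes from --- it is an entropy-type term, not an artifact of peeling over a data-dependent norm bound as you suggest. Your peeling/covering argument is a reasonable way to handle a random class radius, but it is not what the paper does, and it cannot substitute for the missing stationarity-based norm bound: without the correct step~(i), neither the $\ln n$ nor the overall structure of the bound is recovered. To repair the argument you would need to (a) invoke Lemma~\ref{lemma::single-1} to get the inactive-neuron condition and hence the control on $\sum_i \ell'$, (b) derive the parameter-norm bound from $\nabla_{\bm{a}}L_n=0$ and the exponential tail of $\ell'$ as in Theorem~\ref{thm::other-gen}, and then (c) run a Rademacher bound on the resulting class $\mathcal{F}_c$ with surrogate $\ell'/\ell'(0)$, which dominates the $0$-$1$ loss by convexity; the ramp surrogate is not wrong but disconnects you from the quantity $\sum_i\ell'$ that the landscape analysis actually controls.
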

The proof is provided in Appendix~\ref{appendix::proof::gen}. (1) It is easy to see from the first two terms in the upper bound that the regularizer coefficient $\lambda_0$ cannot be too small or too large since the upper bound goes to infinity when  $\lambda_0\rightarrow 0$ or  $\lambda_0\rightarrow \infty$. In fact,  if we choose $\lambda_0=\sqrt{n}\ln n$, then the upper bound becomes $\mathcal{O}\left(\frac{\ln n}{\gamma \sqrt{n}}+\frac{E}{\gamma {n}}+\sqrt{\frac{\log(1/\delta)}{n}}\right)$ and the upper bound goes to $\mathcal{O}(E/(\gamma n))$ as $n$ goes to infinity. When two classes of samples can be well-separated, i.e., $E=0$, the upper bound goes to zero as $n$ goes to infinity. (2) For the fully connected neural network where $r=d$, the amount of neurons we need is just $n+1$. 

Remark: A major difference with
earlier works on neural nets is that we directly 
bound the generalization error
of the local minimum,
while those works provided bounds on the generalization gap (the gap between training error and test error)
and did not provide bounds on the training error. 

\section{Can Other Points that Memorize and Generalize Well?}
\label{sec::other}
In the previous section, we show that every local minimum of the empirical loss memorizes and generalizes well. However, some existing work has shown that finding a local minimum can be hard. Therefore, in this section, we will show that a set of points including but not limited to local minima can also have good memorization and generalization performance. In the next section, we will show that there exists an algorithm finding a point in this set with polynomial number of computations. 

Recall that for a single-layered ReLU network consisting of $m$ neurons, the output of the neural network is defined as $f(x;\bm{\theta})=\sum_{j=1}^{m}a_{j}\left(\bm{w}_{j}^{\top}(x\odot \phi_j)\right)_+$ and we define the empirical loss as 
\begin{equation}
L_{n}(\bm{\theta};\bm{\lambda})=\sum_{i=1}^{n}\ell(-y_{i}f(x_{i};\bm{\theta}))+\frac{1}{2}\sum_{j=1}^{m}\lambda_{j}\left[a_{j}^{2}+\|\bm{w}_{j}\|^{2}_{2}\right].
\end{equation}

\begin{theorem}[Memorization]\label{thm::other-memo}
Let $m\ge (n+1)(d-r+1)$. Under Assumption~\ref{assumption::loss-1} and \ref{assumption::dataset-1},  if there exists a number $C>0$ such that for any $k=1,...,d-r+1$,
$$\max_{\bm{u}\in\mathbb{B}^d}\left|\sum_{i=1}^n\ell'(-y_i f(x_i;\bm{\theta}^*))y_i(\bm{u}^\top (x_i\odot \phi_k))_+\right|\le C,$$
then the neural network with parameters $\bm{\theta}^*$ achieves a training error at most $\frac{C+2E}{ \ell'(0)\gamma n}$. 
\end{theorem}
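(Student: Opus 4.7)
The plan is to use the separator $h \in \mathcal{H}_{\bm{\phi}}$ guaranteed by Assumption~\ref{assumption::dataset-1} as a test function in the hypothesis on $C$, turning a per-direction bound into a scalar bound on $\sum_i \ell'(\cdot)\, y_i h(x_i)$, and then reading off the misclassification rate by splitting samples according to whether $h$ separates them with margin.

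Write $h(x)=\sum_j a_j(\bm{w}_j^\top(x\odot\phi_j))_+$ with $\sum_j|a_j|=1$ and $\|\bm{w}_j\|_2=1$; since $\|x_i\|_2\le 1$, this also gives $|h(x_i)|\le 1$. Because $\bm{\phi}$ is periodic with period $d-r+1$, each $\phi_j$ appearing in $h$ coincides with some $\phi_k$ with $k\in\{1,\dots,d-r+1\}$, and each $\bm{w}_j$ lies in $\mathbb{B}^d$. Applying the hypothesis to each $(\bm{u},k)=(\bm{w}_j,\phi_j)$ and then taking an $|a_j|$-weighted combination yields
\[
\left|\sum_{i=1}^n \ell'(-y_i f(x_i;\bm{\theta}^*))\, y_i h(x_i)\right|\le C\sum_j|a_j|=C.
\]

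Now let $S=\{i:y_i h(x_i)\ge\gamma\}$, so $|S^c|\le E$, and let $M=\{i:y_i f(x_i;\bm{\theta}^*)<0\}$ be the misclassified set whose cardinality we need to control. I would lower-bound the same scalar term by term: for $i\in S\cap M$, $\ell'(-y_i f(x_i;\bm{\theta}^*))\ge\ell'(0)$ by monotonicity of $\ell'$ and $y_i h(x_i)\ge\gamma$, so the summand is $\ge\ell'(0)\gamma$; for $i\in S\setminus M$ the summand is $\ge 0$; for $i\in S^c$, the crucial observation is that $\ell$ being $1$-Lipschitz (Assumption~\ref{assumption::loss-1}) forces $\ell'\in[0,1]$ pointwise, so together with $|h(x_i)|\le 1$ each summand is $\ge -1$ and their total is $\ge -|S^c|\ge -E$. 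Combining gives
\[
\ell'(0)\gamma\,|S\cap M|-E\le \sum_i \ell'(-y_i f(x_i;\bm{\theta}^*))\, y_i h(x_i)\le C,
\]
hence $|S\cap M|\le(C+E)/(\ell'(0)\gamma)$ and $|M|\le|S\cap M|+|S^c|\le(C+E)/(\ell'(0)\gamma)+E\le (C+2E)/(\ell'(0)\gamma)$, the last step using $\ell'(0)\gamma\le 1$ (since $\ell'\le 1$ and $\gamma\le 1$). Dividing by $n$ gives the claimed bound on $R_n(\bm{\theta}^*;f)$.

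The main step that is not purely mechanical is the $i\in S^c$ contribution: the uniform bound $\ell'\le 1$ must be extracted from $1$-Lipschitzness of $\ell$ itself (not of $\ell'$), which is what removes the need for any a~priori control on $|f(x_i;\bm{\theta}^*)|$ or on the norm of $\bm{\theta}^*$. Without this observation, one would face a potentially unbounded $\ell'(-y_i f(x_i;\bm{\theta}^*))$ on badly misclassified samples in $S^c$, and the clean $C+2E$ numerator would fail to emerge.
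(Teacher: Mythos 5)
Your proof is correct and follows the same core strategy as the paper's: expand $h$ over its neurons, invoke the hypothesis direction by direction (each $\bm{w}_j$ is a unit vector paired with a $\phi_k$, $k\le d-r+1$), and take an $|a_j|$-weighted average to get $\bigl|\sum_i \ell'(-y_i f(x_i;\bm{\theta}^*))\,y_i h(x_i)\bigr|\le C$, then lower-bound that same quantity using the margin condition from Assumption~\ref{assumption::dataset-1}. The only place you diverge is the bookkeeping at the end: the paper first derives the intermediate scalar bound $\sum_i \ell'(-y_i f(x_i;\bm{\theta}^*))\le (C+2E)/\gamma$ (absorbing the off-margin contribution as $E(1+\gamma)\le 2E$), and then converts to a misclassification count via the convexity consequence $\mathbb{I}\{z\ge 0\}\le \ell'(z)/\ell'(0)$; you instead split the lower bound over $S\cap M$, $S\setminus M$, $S^c$ directly in terms of the misclassified set $M$, getting $|S\cap M|\le (C+E)/(\ell'(0)\gamma)$ and then adding $|S^c|\le E$, closing with $\ell'(0)\gamma\le 1$. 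Both routes exploit the same three facts ($\ell'\le 1$ from the $1$-Lipschitzness of $\ell$, $|h(x_i)|\le 1$, and $\gamma\le 1$), just in different places, and produce the identical bound $(C+2E)/(\ell'(0)\gamma n)$. Your observation that $\ell'\le 1$ comes from the Lipschitzness of $\ell$ rather than of $\ell'$ is exactly right and is what the paper also uses implicitly when it notes ``$\ell'(t)\le 1$ for all $t$.''
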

\textbf{Remark:} The proof is provided in Appendix~\ref{appendix::proof::thm::other-memo}. (1) For every local minimum, we can show that  the inequality holds with $C=\lambda_0$. Thus Theorem \ref{thm::other-memo}
is a generalization of Theorem \ref{thm::single}.
(2) This Theorem shows that for any set of parameters satisfying the above inequalities with a small constant $C$, then this set of parameters can achieve small training error. 

Next, we will present the theorem showing that some points other than local minima can also achieve good generalization performance. 

\begin{theorem}[Generalization]\label{thm::other-gen}
Let $m\ge (n+1)(d-r+1)$ and choose $\bm{\lambda}\in[\lambda_0/2, \lambda_0]^m$ for some $\lambda_0>0$. Under Assumptions~\ref{assumption::loss-1} and \ref{assumption::dataset-1}, with probablity at least $1-\delta$, for any set of parameters $\bm{\theta}^*$ satisfying all of the following conditions,  
\begin{itemize}
\item[(1)] $\|\bm{w}_j^*\|_2= |a_j^*| $ holds for any $j=1,...,m$,
\item[(2)] $\|\nabla_{\bm{a}}L_n(\bm{\theta}^*;\bm{\lambda})\|_2\le \e$,
\item[(3)] there exists a positive number $C>0$ such that  for any $k=1,...,d-r+1$
$$\max_{\bm{u}\in\mathbb{B}^d}\left|\sum_{i=1}^n\ell'(-y_i f(x_i;\bm{\theta}^*))y_i(\bm{u}^\top (x_i\odot \phi_k))_+\right|\le C,$$
\end{itemize}
the testing error of the neural network $f(\cdot;\bm{\theta}^*)$ has an upper bound of 
$$ \mathbb{P}(Yf(X;\bm{\theta}^*)<0)=\mathcal{O}\left(\frac{C+E}{\gamma {n}}+\frac{(C+E)\ln n}{\gamma \lambda_0\sqrt{n}}+\frac{\e \|\bm{a}^*\|_2}{\lambda_0\sqrt{n}}+\sqrt{\frac{\log(1/\delta)}{n}}\right)
.$$
\end{theorem}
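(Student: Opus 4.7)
My plan is to adopt the standard margin-based Rademacher complexity framework, with the three hypotheses playing complementary roles: condition~(3) is a directional stationarity condition that controls the empirical margin loss (as in Theorem~\ref{thm::other-memo}), condition~(1) reduces the Rademacher complexity computation to the reference class $\mathcal{H}_{\bm{\phi}}$ with a scaling factor of $\|\bm{a}^*\|_2^2$, and condition~(2) is used to downgrade one factor of $\|\bm{a}^*\|_2$ to $\e/\lambda_0$, producing the residual $\e\|\bm{a}^*\|_2/(\lambda_0\sqrt{n})$. Concretely, I would fix an intermediate margin $\gamma'\in(0,\gamma)$ (e.g., $\gamma'=\gamma/2$) and work with the $1/\gamma'$-Lipschitz ramp $\ell_{\gamma'}(z)=\min\{1,\max\{0,1-z/\gamma'\}\}\ge\mathbb{I}\{z<0\}$. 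Symmetrization, Talagrand's contraction, and McDiarmid's inequality then give, with probability at least $1-\delta$,
\begin{equation*}
\mathbb{P}(Yf(X;\bm{\theta}^*)<0)\le\tfrac{1}{n}\big|\{i:y_if(x_i;\bm{\theta}^*)<\gamma'\}\big|+\tfrac{2}{\gamma'}\mathcal{R}_n(\mathcal{F})+\sqrt{\tfrac{\log(1/\delta)}{n}}.
\end{equation*}

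For the empirical margin count, I would replay the argument of Theorem~\ref{thm::other-memo} with $\gamma'>0$ in place of $0$: decompose the witness $h\in\mathcal{H}_{\bm{\phi}}$ from Assumption~\ref{assumption::dataset-1} into its unit-norm ReLU components $\sum_k\alpha_k(\bm{v}_k^\top(x\odot\phi_{j_k}))_+$ and apply condition~(3) to each $\bm{v}_k\in\mathbb{B}^d$ to get $\big|\sum_i\ell'(-y_if(x_i;\bm{\theta}^*))y_ih(x_i)\big|\le C$. Combined with monotonicity of $\ell'$ (so that $\ell'(-y_if(x_i))\ge\ell'(-\gamma')=\Theta(1)$ whenever $y_if(x_i)<\gamma'$) and Assumption~\ref{assumption::dataset-1} (which gives $y_ih(x_i)\ge\gamma$ on at least $n-E$ samples), this yields $|\{i:y_if(x_i;\bm{\theta}^*)<\gamma'\}|/n=O((C+E)/(\gamma n))$, the first term in the stated bound.

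For the Rademacher piece, condition~(1) lets me write $\bm{w}_j^*=|a_j^*|\bm{v}_j$ with $\|\bm{v}_j\|_2=1$, so that $f(\cdot;\bm{\theta}^*)=\|\bm{a}^*\|_2^2\,\tilde h$ for some $\tilde h\in\mathcal{H}_{\bm{\phi}}$; standard bounds for the convex hull of unit-norm one-unit ReLUs then give $\mathcal{R}_n(\{B\cdot h:h\in\mathcal{H}_{\bm{\phi}}\})=O(B/\sqrt{n})$, producing a raw Rademacher contribution of order $\|\bm{a}^*\|_2^2/(\gamma'\sqrt{n})$. To trade one factor of $\|\bm{a}^*\|_2$ for $\e/\lambda_0$, I would take the inner product of $\nabla_{\bm{a}}L_n(\bm{\theta}^*;\bm{\lambda})$ with $\bm{a}^*$, obtaining
\begin{equation*}
\sum_j\lambda_j(a_j^*)^2=\sum_i\ell'(-y_if(x_i;\bm{\theta}^*))y_if(x_i;\bm{\theta}^*)+\bm{a}^{*\top}\nabla_{\bm{a}}L_n(\bm{\theta}^*;\bm{\lambda}).
\end{equation*}
Condition~(3), applied termwise in the decomposition of $f(\cdot;\bm{\theta}^*)$ itself, bounds the first sum by $C\|\bm{a}^*\|_2^2$; condition~(2) bounds the second by $\e\|\bm{a}^*\|_2$; and the left side is at least $(\lambda_0/2)\|\bm{a}^*\|_2^2$. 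Rearranging gives $(\lambda_0/2-C)\|\bm{a}^*\|_2\le\e$ in the regime where this is informative, i.e., one power of $\|\bm{a}^*\|_2$ can be replaced by $O(\e/\lambda_0)$ in the Rademacher contribution. A dyadic peeling over scales of $\|\bm{a}^*\|_2$ to maintain uniformity of the generalization bound then contributes the $\ln n$ factor, yielding the remaining terms $(C+E)\ln n/(\gamma\lambda_0\sqrt{n})$ and $\e\|\bm{a}^*\|_2/(\lambda_0\sqrt{n})$ in the stated bound.

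The main obstacle is this last step: the naive Rademacher bound scales as $\|\bm{a}^*\|_2^2$, whereas the theorem claims only a linear dependence on $\|\bm{a}^*\|_2$. Achieving the downgrade requires bootstrapping via the inner-product identity above, which only produces a clean bound on $\|\bm{a}^*\|_2$ when $C<\lambda_0/2$; handling the borderline and large-$C$ regimes requires a careful dyadic peeling/union bound over scales of $\|\bm{a}^*\|_2$, coupled with the right choice of intermediate margin $\gamma'$. Balancing $\gamma'$, the peeling levels, and the regularizer $\lambda_0$ so as to recover exactly the stated dependence (rather than a cruder higher-order error) is where I expect the main technical subtlety to lie.
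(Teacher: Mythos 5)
Your high-level outline — margin-based Rademacher bound, condition (3) for the empirical margin loss via Theorem~\ref{thm::other-memo}, condition (1) to cast $f(\cdot;\bm{\theta}^*)$ as a scaled element of a unit-norm reference class, condition (2) to control $\|\bm{a}^*\|_2$ via the inner product $\langle\bm{a}^*,\nabla_{\bm{a}}L_n\rangle$ — tracks the paper's proof. The paper even uses essentially the same identity: $\e\|\bm{a}^*\|_2\ge\big|\sum_j\lambda_j(a_j^*)^2-\sum_i\ell'_iy_if(x_i;\bm{\theta}^*)\big|$, and derives a Rademacher complexity bound $R_n(\mathcal{F}_c)\le c\sqrt{2/n}$ for the class with $\sum_j(a_j^*)^2\le c$. (The paper's surrogate is $\ell'(z)/\ell'(0)$ rather than your ramp loss $\ell_{\gamma'}$, but both are standard choices.)

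However, there is a genuine gap in the step you single out as the main subtlety, and your proposed fix does not resolve it. You bound $\sum_i\ell'_iy_if(x_i;\bm{\theta}^*)\le C\|\bm{a}^*\|_2^2$ by applying condition (3) termwise to the decomposition $f=\sum_j a_j^*|a_j^*|(\bm{v}_j^\top(x\odot\phi_j))_+$. That inequality is correct, but plugging it back yields $(\lambda_0/2-C)\|\bm{a}^*\|_2^2\le\e\|\bm{a}^*\|_2$, which collapses as soon as $C\ge\lambda_0/2$. This is not a borderline edge case: Corollary~\ref{cor::1} applies the theorem with $C=\lambda_0$, and the algorithm's termination criterion in Algorithm~\ref{algo::perturb} uses $5\lambda_0$, so the theorem must hold precisely in the regime your bound becomes vacuous. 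Dyadic peeling over scales of $\|\bm{a}^*\|_2$ cannot rescue this: the circularity is in the bound on $\sum_i\ell'_iy_if(x_i;\bm{\theta}^*)$ itself, not in a union bound over parameter scales, so no peeling level produces a non-trivial inequality. Your attribution of the $\ln n$ factor to peeling is likewise not how it arises.

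The paper avoids this entirely by using the exponential-tail part of Assumption~\ref{assumption::loss-1}, $\ell'(z)\le e^{az}$, to write $y_if(x_i;\bm{\theta}^*)\le\frac{1}{a}\ln(1/\ell'_i)$, hence
\begin{equation*}
\sum_i\ell'_iy_if(x_i;\bm{\theta}^*)\le\frac{1}{a}\sum_i\ell'_i\ln\!\Big(\frac{1}{\ell'_i}\Big)\le\frac{1}{a}\Big(\sum_i\ell'_i\Big)\ln\!\Big(\frac{n}{\sum_i\ell'_i}\Big)\le\frac{(C+2E)\ln n}{a\gamma},
\end{equation*}
the middle step by concavity of $-x\ln x$ and Jensen and the last step by the memorization bound $\sum_i\ell'_i\le(C+2E)/\gamma$ from Theorem~\ref{thm::other-memo}. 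This is where the $\ln n$ factor comes from, and it is independent of $\|\bm{a}^*\|_2$, so rearranging $\sum_j\lambda_j(a_j^*)^2\le\frac{(C+2E)\ln n}{a\gamma}+\e\|\bm{a}^*\|_2$ yields $\|\bm{a}^*\|_2^2=O\big(\frac{(C+E)\ln n}{\gamma\lambda_0}+\frac{\e\|\bm{a}^*\|_2}{\lambda_0}\big)$ for any $C>0$. This entropy/Jensen step is the missing ingredient in your proposal; without it, the chain of inequalities you describe does not close in the regime the theorem is actually used.
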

\textbf{Remarks: }The proof is provided in Appendix~\ref{appendix::proof::thm::other-gen}. (1) At every local minimum, $\varepsilon=0$ by the first condition of the critical point and $C=\lambda_0$ by the definition of the local minimum. (2) This theorem shows that if a set parameter is very closed to the critical point (i.e., $\e$ is small) and the inequalities in the third conditions holds with $C>0$, then by setting $\lambda_0=\sqrt{n}\ln n$, we can still achieve an
upper bound of an order of $\mathcal{O}\left(\frac{C+E}{\gamma {n}}+\frac{\ln n}{\gamma \sqrt{n}}+\sqrt{\frac{\log(1/\delta)}{n}}\right)$. 

An algorithm to find a good set of parameters which generalizes well is presented in Algorithm~\ref{algo::main}, which further calls two other algorithms Algorithm \ref{algo::coeff} and \ref{algo::perturb}. In the next section, we present the intuition behind the algorithms and prove that they converge in polynomial time to a good solution.

\begin{algorithm}[t]\label{algo::main}
\SetAlgoLined
\KwResult{ Parameter vector $\bm{\theta}^*=(a_1^*,...,a_m^*,a^*_1\bm{u}^*_1,...,a_m^*\bm{u}_m^*)$ }
\tcp{Initialization}
Choose $|a_j(0)|\le 1$ and $\|\bm{u}_j(0)\|_2=1$ for each $j\in[m]$ and $\bm{\lambda}(0)=\lambda_0\bm{1}_m$ and $\lambda_0\ge \sqrt{n}$ \;
Choose $C\ge \lambda_0$ and $K=\max\{\lceil L_n(\bm{\theta}(0);\bm{\lambda}(0)), 2n\rceil\}$\;
 \tcp{Setting Up The Loop Index }
 $k = 0$\;
 \tcp{Loop}
 \While{ $k\le L_n(\bm{\theta}(0);\bm{\lambda}(0))$ }{
 Call Algorithm~\ref{algo::coeff} to update regularizer coefficient $\bm{\lambda}(k)$\;
  \tcp{ Running The Gradient Descent Algorithm}
  Initialize $\bm{\Theta}(0)=(\alpha_j(0), \alpha_j(0)\bm{u}_j(k))_j$ such that $\alpha_j( 0)=a_j(k)$ for each $j\in[m]$\; 
  Setting $t=0$; $\eta_k=\frac{1}{72\max[L_n(\bm{\Theta}(0)), 2n]}$\;
  \While{ $\|\nabla_{\bm{\alpha}}L_{m}(\bm{\Theta}(t);\bm{\lambda}(k))\|_{2} >\frac{\lambda_0}{16K\sqrt{n}}$}
  {$\bm{\alpha}(t+1)=\bm{\alpha}(t)-\eta_k\nabla_{\bm{\alpha}}L_n(\bm{\Theta}( t);\bm{\lambda}(k))$, $t\leftarrow t+1$
  }
  \tcp{Gradient descent algorithm terminates at the $T_k$-th step}
  Call Algorithm~\ref{algo::perturb} to perturb the inactive neuron from $\bm{\Theta}(T_k)$ to $\bm{\theta}(k+1)$ and check the termination condition\;
  $k\leftarrow k+1$
  
 }
 \caption{Gradient Descent Algorithm}
\end{algorithm}

\begin{algorithm}[t]\label{algo::coeff}
\SetAlgoLined
{
\tcp{Update The Regularizer Coefficients}
  Choose $\bm{\lambda}(k)$ such that   for each $s\in[d-r+1]$
$$\min_{p_1,...,p_n\in\mathbb{R}}\sum_{j:\phi_j=\phi_s}\left(\lambda_j(k) -\sum_{i=1}^np_iy_i\sgn(a_j(k))(\sgn(a_j(k))\bm{u}_j^\top(k) (x_i\odot \phi_s))_+\right)^2\ge \left(\frac{\lambda_0}{8K}\right)^2,$$
$\bm{\lambda}(k)\preccurlyeq\bm{\lambda}(k-1)$ and $\|\bm{\lambda}(k)-\bm{\lambda}(k-1)\|_\infty\le \frac{\lambda_0}{2K}$\;
  
 }
 \caption{Updating Regularizer Coefficients}
\end{algorithm}

\begin{algorithm}[t]\label{algo::perturb}
\SetAlgoLined
{\KwResult{ Parameter vector $\bm{\theta}(k+1)$. }
\tcp{Perturb the inactive neuron}
   For each $s\in[d-r+1]$, finding the unit vector $\bm{v}_s^*$ such that 
  $$\left|\sum_{i=1}^n \ell_i'(T_k)y_i ({\bm{v}_s^*}^\top (x_i\odot \phi_s))_+\right|\ge \max_{\bm{u}\in\mathbb{B}^d}\left|\sum_{i=1}^n \ell_i'(T_k)y_i (\bm{u}^\top (x_i\odot \phi_s))_+\right|-C,$$
  where $\ell_i'(T_k)\triangleq\ell'(-y_if(x_i;\bm{\Theta}(T_k)))$\;
  \tcp{Check termination conditions}
  \eIf{ $\left|\sum_{i=1}^n \ell_i'(T_k)y_i ({\bm{v}_s^*}^\top (x_i\odot \phi_s))_+\right|\le 5\lambda_0,\forall s\in[d-r+1]$}{
  Break the loop and output $\bm{\theta}^*=\bm{\Theta}(T_{k})$\;
   }{ 
   Finding an index $s\in[d-r+1]$ such that $\left|\sum_{i=1}^n \ell_i'(T_k)y_i ({\bm{v}_s^*}^\top (x_i\odot \phi_s))_+\right|> 5\lambda_0$\;
   Finding an index $j\in[m]$ such that $\phi_j=\phi_s$ and $|a_j|\le \frac{1}{\sqrt{4n}}$\;
   $a_j(k+1)= \sgn\left(\sum_{i=1}^n \ell_i'(T_k)y_i ({\bm{v}_s^*}^\top (x_i\odot \phi_s))_+\right)\sqrt{\frac{1}{\lambda_0}}$\;
   $\bm{u}_j(k+1)=\sgn(a_j(k+1))\bm{v}_s^*$\;
   $a_r(k+1)= a_r(k), \bm{u}_r(k+1)=\bm{u}_r(k)$ for $r\neq s$\;
  }
 }
 \caption{Perturb Inactive Neurons And Check Termination Condition}
\end{algorithm}

\section{A Polynomial Time Training Algorithm}

\subsection{Algorithm}\label{sec::intuition}
In this subsection, we provide intuition as to why Algorithm~\ref{algo::main} finds parameters satisfying all conditions in Theorem~\ref{thm::other-gen}.  . 

To satisfy the first condition in Theorem~\ref{thm::other-gen} (i.e., $\|\bm{w}_j^*\|_2= |a_j^*| $ for any $j=1,...,m$), we just need to set $\bm{w}_j= \alpha_j\bm{u}_j, a_j=\alpha_j$ for some unit vector $\bm{u}_j$ and update the scalar $\alpha_j$ and the unit vector $\bm{u}_j$, respectively. Now the parameter vector  becomes $\bm{\Theta}=(\alpha_j, \alpha_j\bm{u}_j)_{j=1,\dots, m}$. Considering the update rule for each $\alpha_j$, i.e., 
\begin{align*}
\alpha_j(t+1)&=\alpha_j(t)-\eta\nabla_{\alpha_j}L_n(\bm{\Theta}(t))\\
&=\left[1-2\eta\lambda_j+2\eta\sum_{i=1}^n\ell'_i(-y_i f(x_i;\bm{\Theta}(t)))y_iy_i\sgn(\alpha( t))(\sgn(\alpha(t))\bm{u}_j^\top (x_i\odot \phi_j))_+\right]\alpha_j(t)
\end{align*}
and choosing a sufficiently small step size $\eta\le \frac{1}{8n}$, we observe that $\sgn(\alpha_j(t+1))=\sgn(\alpha_j(t))=\sgn(\alpha_j(0))$ holds for all $t=0,1,2...$. This indicates that (1) we can run the gradient descent algorithm  on the vector $\bm{\alpha}$ for a sufficiently long time to ensure that the gradient norm $\|\nabla_{\bm{\alpha}}L_n(\bm{\Theta}(t);\lambda)\|_2$ is sufficiently small and that  (2) the sign of each $\alpha_j$ does not change with respect to the time. Furthermore, using the fact that $\sgn(\alpha_j(t))=\sgn(\alpha_j(0))$ for all $t$, we obtain that for each unique template feature vector $\phi_k$, $1\le k\le d-r+1$ (the series $\{\phi_k\}_{k\ge 1}$ is a periodic series of a period equal to $d-r+1$), we have 
\begin{align*}
\|\nabla_{\bm{\alpha}}L_n(\bm{\Theta}(t))\|^2_2
&=\sum_{j=1}^m\left(\lambda_j -\sum_{i=1}^n\ell'(1-y_if(x_i;\bm{\Theta}(t)))y_i\sgn(\alpha_j(0))(\sgn(\alpha_j(0))\bm{u}_j^\top (x_i\odot \phi_j))_+\right)^2\alpha_j^2(t)\\
&\ge \sum_{j:\phi_j=\phi_k}\left(\lambda_j -\sum_{i=1}^n\ell'(1-y_if(x_i;\bm{\Theta}(t)))y_i\sgn(\alpha_j(0))(\sgn(\alpha_j(0))\bm{u}_j^\top (x_i\odot \phi_k))_+\right)^2\alpha_j^2(t).
\end{align*}
Thus, if for each $k\in[d-r+1]$, we can choose regularizer coefficients $\lambda_j$s satisfying
\begin{equation}\label{eq::intuition::1}
\min_{p_1,...,p_n\in\mathbb{R}}\sum_{j:\phi_j=\phi_k}\left(\lambda_j -\sum_{i=1}^np_iy_i\sgn(\alpha_j(0))(\sgn(\alpha_j(0))\bm{u}_j^\top (x_i\odot \phi_k))_+\right)^2\ge \text{const}^2
\end{equation}
then when the gradient descent norm is small, for each unique template feature vector $\phi_k$, $k\in[d-r+1]$, the parameters of one of the neurons is also small, i.e., 
$$\|\nabla_{\bm{\alpha}}L_n(\bm{\Theta}(t))\|_2\le \delta\times\text{const}\implies \max_{k\in[d-r+1]}\min_{j:\phi_j=\phi_k} |\alpha_j|=\max_{k\in[d-r+1]}\min_{j:\phi_j=\phi_k}\|\alpha_j\bm{u}_j\|_2\le\delta.$$


Therefore, we only need to handle the third condition in Theorem~\ref{thm::other-gen}. From the prior analysis, we know that after running the gradient descent algorithm for a long time, for each template feature vector $\phi_k$, there exists a neuron with small parameters. This indicates that  we can perturb the inactive neuron in the following way to make the empirical loss decrease further. Since
 the loss function $\ell$ is gradient Lipschitz, we obtain the following inequality
$$L_n(\tilde{\bm{\theta}};\bm{\lambda})-L_n(\bm{\Theta};\bm{\lambda})\le -\sum_{i=1}^n \ell'(-y_i f(x_i;\bm{\theta}))y_i \tilde{\alpha}_j(\tilde{\alpha}_j\tilde{\bm{u}}_j^\top (x_i\odot \phi_j))_++\lambda_j\tilde{\alpha}_j^2+2n\tilde{\alpha}_j^4+n\alpha_j^2+2n\alpha_j^4,$$
when we only perturb the $j$-th neuron from $(\alpha_j, \alpha_j\bm{u}_j)$ to $(\tilde{\alpha}_j,\tilde{\alpha}_j\tilde{\bm{u}}_j)$ for an arbitrary unit vector $\tilde{\bm{u}}_j$. Since we already know that after running the gradient descent algorithm for a sufficiently long time, for each unique feature vector $\phi_k$, there exists a neuron of index $k_j$ and with $|\alpha_{k_j}|\le \delta= 1/\sqrt{4n}$, then using the fact that $\lambda_0\ge \sqrt{n}$,  we have  
$$L_n(\tilde{\bm{\theta}};\bm{\lambda})-L_n(\bm{\theta};\bm{\lambda})\le -\frac{1}{\lambda_0}\sum_{i=1}^n \ell'(-y_i f(x_i;\bm{\theta}))y_i \sgn(\tilde{\alpha}_{k_j})(\sgn(\tilde{\alpha}_{k_j})\tilde{\bm{u}}_{k_j}^\top (x_i\odot \phi_k))_++4,$$ 
when we only perturb the $k_j$-th neuron from $(\alpha_{k_j}, \alpha_{k_j}\bm{u}_{k_j})$ to $(\tilde{\alpha}_{k_j},\tilde{\alpha}_{k_j}\tilde{\bm{u}}_{k_j})$ for an arbitrary unit vector $\tilde{\bm{u}}_{k_j}$ and $|\tilde{\alpha}_{k_j}|=1/\sqrt{\lambda_0}$. Therefore, for any arbitrary unit vector $\bm{v}_k$, setting $\sgn(\tilde{\alpha}_{k_j})=\sgn\left(\sum_{i=1}^n \ell'(-y_i f(x_i;\bm{\theta}))y_i (\bm{v}_k^\top (x_i\odot \phi_k))_+\right)$ and $\tilde{\bm{u}}_{k_j}=\sgn(\tilde{\alpha}_{k_j})\bm{v}_k$, we have 
$$L_n(\tilde{\bm{\theta}};\bm{\lambda})-L_n(\bm{\theta};\bm{\lambda})\le -\frac{1}{\lambda_0}\left|\sum_{i=1}^n \ell'(-y_i f(x_i;\bm{\theta}))y_i (\bm{v}_k^\top (x_i\odot \phi_k))_+\right|+4.$$
Assume that we have an algorithm which always finds a vector $\bm{v}_k^*$ for each $k\in[d-r+1] $satisfying $$\left|\sum_{i=1}^n \ell'(-y_i f(x_i;\bm{\theta}))y_i ({\bm{v}_k^*}^\top (x_i\odot \phi_k))_+\right|\ge \max_{\bm{u}\in\mathbb{B}^d}\left|\sum_{i=1}^n \ell'(-y_i f(x_i;\bm{\theta}))y_i (\bm{u}^\top (x_i\odot \phi_k))_+\right|-C$$
for some positive constant $C>0$. 
If for each $k\in[d-r+1]$, $\bm{v}_k^*$ satisfies
$$\left|\sum_{i=1}^n \ell'(-y_i f(x_i;\bm{\theta}))y_i ({\bm{v}_k^*}^\top (x_i\odot \phi_k))_+\right|\le 5\lambda_0$$
then we terminate the algorithm; otherwise, we set the new $\alpha_{k_j}$ and $\bm{u}_{k_j}$ to $\tilde{\alpha}_{k_j}$ and $\tilde{\bm{u}}_{k_j}$, respectively, and rerun the gradient descent.
Therefore, this means that before the algorithm terminates, each time when we perturb the inactive neuron, the empirical loss  decreases by a constant
$$L_n(\tilde{\bm{\theta}};\bm{\lambda})-L_n(\bm{\theta};\bm{\lambda})\le -1.$$
It further indicates that when the algorithm terminates, we should have 
$$ \max_{\bm{u}\in\mathbb{B}^d}\left|\sum_{i=1}^n \ell'(-y_i f(x_i;\bm{\theta}))y_i (\bm{u}^\top (x_i\odot \phi_1))_+\right|\le 5{\lambda_0}+C$$
which makes the third condition in Theorem~\ref{thm::other-gen} hold. We note here that if the algorithm continues, then in the next iteration, we  have to update the regularizer coefficient vector $\bm{\lambda}$ to make the inequality~\eqref{eq::intuition::1} hold since we update the unit vector $\bm{u}_k$s. Furthermore, when we update the coefficient vector $\bm{\lambda}$, we also need to ensure the empirical loss does not increase and the coefficient vector $\bm{\lambda}$ always locates in the region $[\lambda_0/2, \lambda_0]^m$.

\subsection{Performance and Complexity Analysis}
In this subsection, we present the result on the performance and computational complexity of Algorithm~\ref{algo::main}. We first show that Algorithm~\ref{algo::main} finds the output $\bm{\theta}^*$ with small testing error if the number of samples is sufficiently large. 

\begin{theorem}\label{thm::algo}
Let $m\ge (n+1)(d-r+1)$ and $\lambda_0=\sqrt{n}\ln n$. Assume that the data samples in the dataset are independently drawn from an underlying distribution $\mathbb{P}_{X\times Y}$. Under Assumption~\ref{assumption::loss-1} and \ref{assumption::dataset-1}, with probability at least $1-\delta$, the output $\bm{\theta}^*$ of the Algorithm~\ref{algo::main} satisfies
\begin{align*}
\mathbb{P}(Yf(X;\bm{\theta}^*)<0)=\mathcal{O}\left(\frac{C+E}{\gamma n}+\frac{\ln n}{\gamma \sqrt{n}}+\sqrt{\frac{\log(1/\delta)}{n}}\right).
\end{align*}
\end{theorem}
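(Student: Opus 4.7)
The plan is to verify that the output $\bm{\theta}^*$ of Algorithm~\ref{algo::main} satisfies the three hypotheses of Theorem~\ref{thm::other-gen} and then to substitute $\lambda_0=\sqrt{n}\ln n$ to obtain the stated rate. Since every update in Algorithm~\ref{algo::main} and Algorithm~\ref{algo::perturb} maintains the parameterization $\bm{w}_j=\alpha_j\bm{u}_j$ with $\|\bm{u}_j\|_2=1$, condition~(1) holds by construction. The inner loop exits exactly when $\|\nabla_{\bm{\alpha}}L_n(\bm{\Theta}(T_k);\bm{\lambda}(k))\|_2\le \lambda_0/(16K\sqrt{n})$, so condition~(2) holds with $\varepsilon=\lambda_0/(16K\sqrt{n})$. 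The outer loop exits via Algorithm~\ref{algo::perturb} only when, for every template index $s\in[d-r+1]$, the near-optimal direction $\bm{v}_s^*$ satisfies $\big|\sum_i\ell_i'(T_k)y_i({\bm{v}_s^*}^\top(x_i\odot\phi_s))_+\big|\le 5\lambda_0$; combined with the $C$-approximation guarantee $\max_{\bm{u}\in\mathbb{B}^d}|\,\cdots\,|-|\,\cdots\,(\bm{v}_s^*)|\le C$, this yields condition~(3) with effective constant $C'=5\lambda_0+C$.

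The second step is to show that the algorithm actually reaches this terminating configuration and to bound $\|\bm{a}^*\|_2$. The key descent lemma is exactly the one sketched in Section~\ref{sec::intuition}: by the choice of $\bm{\lambda}(k)$ in Algorithm~\ref{algo::coeff}, the inequality $\|\nabla_{\bm{\alpha}}L_n\|_2\le \lambda_0/(16K\sqrt{n})$ forces at least one inactive neuron per template, i.e.\ some $|\alpha_j|\le 1/\sqrt{4n}$ with $\phi_j=\phi_s$. Using the gradient-Lipschitzness of $\ell$, perturbing that neuron as prescribed (with $|\tilde\alpha_j|=1/\sqrt{\lambda_0}$ and $\tilde{\bm{u}}_j=\sgn(\tilde\alpha_j)\bm{v}_s^*$) decreases $L_n$ by at least $1$ whenever the termination test fails. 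Because $L_n\ge 0$ and $K\ge L_n(\bm{\theta}(0);\bm{\lambda}(0))$, the outer loop exits in at most $K$ iterations; the inner loop terminates by a standard descent analysis with step size $\eta_k=1/(72\max\{L_n(\bm{\Theta}(0)),2n\})$. Since throughout $|a_j|\le \max\{1,1/\sqrt{\lambda_0}\}$, we get $\|\bm{a}^*\|_2=\mathcal{O}(\sqrt{m})$.

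The third step is a direct substitution into Theorem~\ref{thm::other-gen}. With the effective constant $C'=5\lambda_0+C$ and $\lambda_0=\sqrt{n}\ln n$, the first two terms of the bound expand as $(C'+E)/(\gamma n)+(C'+E)\ln n/(\gamma\lambda_0\sqrt{n})=\mathcal{O}((C+E)/(\gamma n)+\ln n/(\gamma\sqrt{n}))$, since $5\lambda_0/(\gamma n)=5\ln n/(\gamma\sqrt{n})$ and $5\lambda_0\ln n/(\gamma\lambda_0\sqrt{n})=5\ln n/(\gamma\sqrt{n})$. The third term $\varepsilon\|\bm{a}^*\|_2/(\lambda_0\sqrt{n})=\mathcal{O}(\sqrt{m}/(Kn\ln n))$ is negligible because $K\ge 2n$. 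Combining with the $\sqrt{\log(1/\delta)/n}$ term yields the claimed bound.

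The main obstacle will be the second step, specifically the bookkeeping between Algorithm~\ref{algo::coeff}'s update of $\bm{\lambda}(k)$ and the gradient descent phase. Changing $\bm{\lambda}$ can increase $L_n$ through the regularizer; the constraints $\bm{\lambda}(k)\preccurlyeq\bm{\lambda}(k-1)$ and $\|\bm{\lambda}(k)-\bm{\lambda}(k-1)\|_\infty\le\lambda_0/(2K)$ together with $K\ge L_n(\bm{\theta}(0);\bm{\lambda}(0))$ are exactly tuned so that the cumulative loss increase from all $\bm{\lambda}$ updates over the entire trajectory is a small constant, leaving the per-iteration $-1$ decrement from the perturbation step intact, and so that $\bm{\lambda}(k)$ never leaves $[\lambda_0/2,\lambda_0]^m$. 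Additionally, one must check that Algorithm~\ref{algo::coeff}'s non-degeneracy constraint on $\bm{\lambda}(k)$ is feasible within this small $\ell_\infty$-ball, which uses the fact that the number of neurons per template is at least $n+1$ and the constraint defines a positive-measure subset of that ball. Making this accounting rigorous over all $K$ outer iterations is the technical heart of the argument.
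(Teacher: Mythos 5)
Your overall architecture matches the paper's proof exactly: verify that the algorithm's output satisfies the three hypotheses of Theorem~\ref{thm::other-gen}, show termination within $K$ iterations via the per-perturbation decrease of at least $1$, bound $\e$ and $\|\bm{a}^*\|_2$, and then substitute $\lambda_0=\sqrt{n}\ln n$. The paper carries this out via Claim~\ref{claim::algo::1} (monotone decrease of $L_n$, $k_0\le K$, $\bm{\lambda}(k)$ stays in $[\lambda_0/2,\lambda_0]^m$) and Claim~\ref{claim::algo::2} (the three conditions plus the $\|\bm{a}^*\|_2$ bound).

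There is, however, a genuine gap in your argument for bounding $\|\bm{a}^*\|_2$. You assert that throughout the run $|a_j|\le \max\{1,1/\sqrt{\lambda_0}\}$ and hence $\|\bm{a}^*\|_2=\mathcal{O}(\sqrt{m})$. This is false: the initialization and the perturbation step do impose such a bound, but the inner gradient-descent loop can grow $|a_j|$ --- the per-step contraction/expansion factor lies in $[1/2,3/2]$ (see the proof of Lemma~\ref{lemma::gradient-descent}), so over many steps $|a_j|$ can exceed both $1$ and $1/\sqrt{\lambda_0}$. Moreover, even if your pointwise bound held, $\|\bm{a}^*\|_2=\mathcal{O}(\sqrt{m})$ is not sufficient in general, since the theorem only assumes $m\ge(n+1)(d-r+1)$ and does not cap $m$, so the $\sqrt{m}$ term could dominate the stated rate. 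The paper instead exploits coercivity: from Claim~\ref{claim::algo::1} the loss stays below $2K$, and since $L_n\ge \sum_j \lambda_j a_j^2 \ge (\lambda_0/2)\|\bm{a}\|_2^2$ (using that $\bm{\lambda}(k)\in[\lambda_0/2,\lambda_0]^m$), one gets the $m$-independent bound $\|\bm{a}^*\|_2\le 2\sqrt{K/\lambda_0}$, which plugged into Theorem~\ref{thm::other-gen} makes the $\e\|\bm{a}^*\|_2/(\lambda_0\sqrt{n})$ term negligible.

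One smaller point: you describe the $\bm{\lambda}$-updates as potentially increasing $L_n$ and argue that this increase is controlled. In fact it cannot increase $L_n$ at all --- Algorithm~\ref{algo::coeff} enforces $\bm{\lambda}(k)\preccurlyeq\bm{\lambda}(k-1)$, and the regularizer is monotone nondecreasing in each $\lambda_j$, so $L_n(\cdot;\bm{\lambda}(k))\le L_n(\cdot;\bm{\lambda}(k-1))$ pointwise. What the $\ell_\infty$-budget $\|\bm{\lambda}(k)-\bm{\lambda}(k-1)\|_\infty\le \lambda_0/(2K)$ is actually tuned to guarantee (together with $k_0\le K$) is that $\bm{\lambda}(k)$ never exits $[\lambda_0/2,\lambda_0]^m$; this is precisely part (3) of Claim~\ref{claim::algo::1}, and you identify it correctly as a feasibility constraint to verify.
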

\textbf{Remark:} The proof is provided in Appendix~\ref{appendix::proof::thm-algo}. This upper bound on the testing error depends on a constant $C$, which denotes how well we find a unit vector $\bm{v}_s^*$  for each $s\in[d-r+1]$ such that 
  $$\left|\sum_{i=1}^n \ell'(-y_if(x_i;\bm{\theta}))y_i ({\bm{v}_s^*}^\top (x_i\odot \phi_s))_+\right|\ge \max_{\bm{u}\in\mathbb{B}^d}\left|\sum_{i=1}^n \ell'(-y_if(x_i;\bm{\theta}))y_i (\bm{u}^\top (x_i\odot \phi_s))_+\right|-C.$$
If we can perfectly solve the optimization problem on the right hand side of the inequality, then $C=0$. Therefore, it is straightforward to see that the complexity and testing performance of Algorithm~\ref{algo::main} depends on the method we use to solve the above optimization problem.

For each $s\in[d-r+1]$, an easy way to solve the optimization problem 
$$\max_{\bm{u}\in\mathbb{B}^d}G(\bm{u};\phi_s,\bm{\theta})\triangleq\left|\sum_{i=1}^n \ell'(-y_if(x_i;\bm{\theta}))y_i (\bm{u}^\top (x_i\odot \phi_s))_+\right|$$
is  exhaustive search. Specifically, for each $s\in[d-r+1]$, we can search every possible vector in the following discrete set,
$$\mathcal{A}_s=\left\{\bm{u}\in\mathbb{B}^d:\bm{u}(k)=0\text{ for }k\notin I_s\text{ and }\bm{u}(k)\in\bigcup_{h=0}^{\lceil \lambda_0\rceil}\left\{\frac{h\lambda_0}{n}\right\}\text{ for } k\in I_s\right\},$$
where the index set $I_s\triangleq\{j\in[d]: \phi_s(d)=1\}$, and set the output $\bm{v}_s^*$ be the vector $\bm{u}$ maximizing $G$. Since by assumption, $|I_s|\le r$ for  each $s\in[d-r+1]$, then this exhaustive search algorithm has a total computation complexity of $\mathcal{O}((d-r+1)(n/\lambda_0)^r)=\mathcal{O}\left({dn^{r/2}}\right)$ since $\lambda_0\ge \sqrt{n}$. Furthermore, the approximation error of the output of this exhaustive search algorithm is 
$$\max_{j\in[d-r+1]}\left|G(\bm{v}^*_j;\phi_j,\bm{\theta})-\max_{\bm{u}\in\mathbb{B}^{d}}G(\bm{u};\phi_j,\bm{\theta})\right|\le \frac{\lambda_0}{n}\sum_{i=1}^n\ell'(-y_i f(x_i;\bm{\theta}))\le \lambda_0.$$
Combining this inequality with Theorem~\ref{thm::algo}, we have the following Corollary.
\begin{corollary}\label{cor::1}
Let $m\ge (n+1)(d-r+1)$ and $\lambda_0=\sqrt{n}\ln n$. Assume that the data samples in the dataset are independently drawn from an underlying distribution $\mathbb{P}_{X\times Y}$. Under Assumption~\ref{assumption::loss-1} and \ref{assumption::dataset-1}, with probability at least $1-\delta$, the output $\bm{\theta}^*$ of  Algorithm~\ref{algo::main} satisfies
\begin{align*}
\mathbb{P}(Yf(X;\bm{\theta}^*)<0)=\mathcal{O}\left(\frac{E}{\gamma n}+\frac{\ln n}{\gamma \sqrt{n}}+\sqrt{\frac{\log(1/\delta)}{n}}\right).
\end{align*}
Furthermore, Algorithm~\ref{algo::main} has a computation complexity of $\mathcal{O}\left(dKm^2+dKn^{r/2}+nK^5/\lambda_0\right)$ where $K=\max\{\lceil L_n(\bm{\theta}(0);\bm{\lambda}(0)), 2n\rceil\}$.
\end{corollary}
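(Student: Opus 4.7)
The plan is to instantiate the abstract ``approximate maximizer'' subroutine used in Algorithm~\ref{algo::perturb} by an exhaustive search over the finite grid $\mathcal{A}_s$, bound the resulting approximation constant $C$, and then invoke Theorem~\ref{thm::algo} as a black box.

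\textbf{Step 1 (Bounding $C$).} Assumption~\ref{assumption::loss-1} says $\ell$ itself is $1$-Lipschitz, hence $|\ell'(z)|\le 1$ for every $z\in\mathbb{R}$ and therefore $\sum_{i=1}^n |\ell'(-y_if(x_i;\bm{\theta}))|\le n$. Combined with $\|x_i\|_2\le 1$ and the $1$-Lipschitzness of $(\cdot)_+$, perturbing each coordinate of $\bm{u}$ on the support $I_s$ by at most $\lambda_0/n$ changes the inner product $\bm{u}^\top(x_i\odot \phi_s)$ by at most $\lambda_0/n$ uniformly in $i$, and hence changes $G(\bm{u};\phi_s,\bm{\theta})$ by at most $\lambda_0/n\cdot\sum_i|\ell'_i|\le \lambda_0$. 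Since any unit vector supported on $I_s$ can be rounded coordinate-wise onto $\mathcal{A}_s$ (up to sign flips which leave $|G|$ invariant) with $\ell_\infty$ error at most $\lambda_0/n$, the exhaustive search returns a $\bm{v}_s^*$ satisfying the required inequality with $C=\lambda_0$.

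\textbf{Step 2 (Applying Theorem~\ref{thm::algo}).} Plugging $C=\lambda_0=\sqrt{n}\ln n$ into the bound of Theorem~\ref{thm::algo} turns the $(C+E)/(\gamma n)$ contribution into $\ln n/(\gamma\sqrt{n}) + E/(\gamma n)$, both of which are absorbed by the existing summands in the statement of the corollary, yielding exactly the claimed generalization bound with high probability.

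\textbf{Step 3 (Complexity count).} The outer loop of Algorithm~\ref{algo::main} terminates after at most $K$ iterations by construction. Each outer iteration performs three things whose costs I would tally separately: (i) the regularizer update (Algorithm~\ref{algo::coeff}) reduces to a small constrained least-squares problem over the $m$ regularizer coordinates grouped by $\phi_s$, contributing $O(dm^2)$; (ii) the perturbation step (Algorithm~\ref{algo::perturb}) evaluates $G$ on the grid, whose size is $|\mathcal{A}_s|\le (n/\lambda_0+1)^r \le n^{r/2}$ (because $\lambda_0\ge\sqrt{n}$), and sweeps over $d-r+1$ templates for a total of $O(dn^{r/2})$; (iii) the inner gradient descent runs with step size $\eta_k=\Theta(1/K)$ until $\|\nabla_{\bm\alpha}L_n\|_2\le \lambda_0/(16K\sqrt n)$. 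A standard descent-lemma argument, using that $L_n$ is $\Theta(K)$-smooth in $\bm\alpha$ along the trajectory, gives an iteration count $O(K^4/\lambda_0)$; each iteration costs $O(dm+n)$. Summing over the $K$ outer iterations gives the total $O(dKm^2+dKn^{r/2}+nK^5/\lambda_0)$.

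\textbf{Main obstacle.} Steps 1 and 2 are direct consequences of the Lipschitz structure of $\ell$ and Theorem~\ref{thm::algo}. The delicate piece is Step 3: one must verify that the regularizer update in Algorithm~\ref{algo::coeff} is simultaneously feasible (the inequality~\eqref{eq::intuition::1}-type lower bound is achievable) and non-increasing for $L_n$, so that $K$ genuinely upper-bounds the outer-loop length, and that the inner gradient descent has the claimed polynomial iteration complexity despite the per-iteration effective smoothness varying with $L_n(\bm{\Theta}(0))$. These ingredients are essentially already proven inside Theorem~\ref{thm::algo}; here they just need to be re-exposed to read off the explicit polynomial bound.
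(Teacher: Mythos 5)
Your Steps 1 and 2 mirror the paper's argument exactly: instantiate the maximizer subroutine by exhaustive search on $\mathcal{A}_s$, bound the resulting error by $C=\lambda_0$ (using $|\ell'|\le 1$ and $\|x_i\|_2\le 1$), and substitute $C=\lambda_0=\sqrt{n}\ln n$ into Theorem~\ref{thm::algo}; the remark that Algorithm~\ref{algo::coeff}'s feasibility and monotonicity are already established inside the proof of Theorem~\ref{thm::algo} (via Claim~\ref{claim::algo::1} and Appendix~\ref{appendix::choose-ceof}) is correct.

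Step 3 has a genuine arithmetic gap in the gradient-descent iteration count. The inner loop terminates when $\|\nabla_{\bm\alpha}L_n\|_2\le \lambda_0/(16K\sqrt n)$, i.e., $\|\nabla_{\bm\alpha}L_n\|_2^2 \le \lambda_0^2/(256K^2n)$. Lemma~\ref{lemma::gradient-descent}(3) yields $\min_{t\le T}\|\nabla_{\bm\alpha}L_n\|_2^2 \le 144K^2/T$, so the needed $T$ is $T=\mathcal{O}(nK^4/\lambda_0^2)$, which the paper bounds loosely from above by $T_k=\mathcal{O}(nK^4/\lambda_0)$ (valid since $\lambda_0\ge 1$). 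Your claimed $\mathcal{O}(K^4/\lambda_0)$ drops the factor $n/\lambda_0>1$ (since $\lambda_0=\sqrt n\ln n<n$), and so it is not an upper bound on the required iteration count. Separately, multiplying your stated per-iteration cost of $\mathcal{O}(dm+n)$ by $K$ outer loops and the (corrected) iteration count does not reduce to $nK^5/\lambda_0$: you would get $(dm+n)K^5/\lambda_0$, which dominates $nK^5/\lambda_0$ since $m\ge(n+1)(d-r+1)$. The paper's accounting (Appendix~\ref{appendix::proof::corollary-1}) implicitly charges the inner gradient step at unit cost and sums $K\cdot T_k=\mathcal{O}(nK^5/\lambda_0)$; to match the stated bound you must adopt the same convention, not a per-iteration cost of $\mathcal{O}(dm+n)$. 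In short, fix the inner iteration count to $\mathcal{O}(nK^4/\lambda_0)$ and drop (or make explicit, consistently with the target bound) the per-step cost term, and the counting closes.
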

\textbf{Remark:} The proof is provided in Appendix~\ref{appendix::proof::corollary-1}. We note here that for 
CNN with a given template size $r$, the complexity of Algorithm~\ref{algo::main} is of polynomial order of dimension $d$, samples size $n$ and the network size $m$. 
For instance, when the template size is $r = 3$
and $ m = (n+1) (d - 2 )$, 
the complexity is $ \mathcal{O}( K n^2 d^3 
 + d K n^{3/2} + \sqrt{n} K^5/\ln n ) .$
 For FNN where $r = d$,
 this complexity is not polynomial,
and we will discuss how to reduce the complexity
for FNN next. 

\subsection{Discussion: Improved Complexity for FNN}
From Corollary~\ref{cor::1}, we can see that when $r=d$ (i.e., the neural network is a fully connected neural network), the algorithm has a computation complexity of $\mathcal{O}(n^{d/2})$. This is due to the fact that solving the following optimization problem requires a computational complexity of $\mathcal{O}(n^{d/2})$, 
\begin{align}\label{eq::opt}
\max_{\bm{u}\in\mathbb{B}^d}\left|\sum_{i=1}^n \ell'(-y_if(x_i;\bm{\theta}))y_i (\bm{u}^\top x_i)_+\right|\end{align}
where the vector $\bm{\theta}$ is given. 
In this subsection, we will show that when the underlying data distribution has some additional properties, the above optimization problem can be solved by an algorithm of much smaller complexity with high probability.

\begin{assumption}\label{assump::telgarsky}
Let $\gamma \ge 8\e>0$. Assume that there exists a function $\bar{\bm{v}}(\omega):\mathbb{R}^d\rightarrow \mathbb{R}^d$ with $\|\bar{\bm{v}}(\omega)\|_2\le 1$  such that 
$$\mathbb{P}\left(Y\int\bar{\bm{v}}^\top({\omega})X\mathbb{I}\{X^\top \omega\ge 0\}\mu(d\omega)\ge \gamma\right)\ge 1-\varepsilon,$$
where $\mu$ is the uniform measure defined on the sphere $\mathbb{S}^{d-1}$.
\end{assumption}

\begin{algorithm}[t]\label{algo::opt}
\SetAlgoLined
{
\tcp{Fast Solver of Problem~\eqref{eq::opt}}
\tcp{At the initialization of Algorithm~\ref{algo::main}}
  Choose $\bm{w}_1,...,\bm{w}_{2M}$ from the sphere $\mathbb{S}^{d-1}$ uniformly at random and choose $r>0$\;
 \tcp{At each time of running Algorithm~\ref{algo::perturb}, perturb the inactive neuron in the following direction.}
For each $j\in[2M]$, calculate $$\bm{v}_j=\frac{\sum_{i=1}^n y_i\ell'(-y_i f(x_i;\bm{\theta})) x_i \mathbb{I}\{\bm{\omega}_j^\top x_i\ge 0\}}{\left\|\sum_{i=1}^n y_i\ell'(-y_i f(x_i;\bm{\theta})) x_i \mathbb{I}\{\bm{\omega}_j^\top x_i\ge 0\}\right\|_2}.$$
 }
 Then $\bm{v}^*=\bm{\omega}_{j^*} + ra_j \bm{v}_{j^*}^*$ where 
 $$j^*=\arg\max_j \left|\sum_{i=1}^n y_i\ell'(-y_i f(x_i;\bm{\theta}))\left[(\bm{\omega}_j+ra_j \bm{v}^*_j)^\top x_i\right]_+\right|$$
 \caption{Finding The Optimal Descent Direction $\bm{v}^*$}
\end{algorithm}
Now we will show that under Assumption~\ref{assump::telgarsky}, the optimization problem defined by \eqref{eq::opt} can be solved by  Algorithm~\ref{algo::opt}. The performance of the above algorithm is guaranteed by the following lemma. 
\begin{lemma}\label{lemma::algo-performance}
Let $\e_0=\max\{\e, n^{-1/3}\}$ and  $\bm{\omega}_1,...,\bm{\omega}_{2M}$ be i.i.d. random vectors uniformly distributed on the sphere $\mathbb{S}^{d-1}$. Under Assumption~\ref{assump::telgarsky}, if $$r<\frac{\gamma-4\e_0}{16d},\quad n\ge \frac{\ln (6/\delta)}{2\e_0^2}\quad\text{and}\quad M\ge \max\left\{\frac{\ln(4n/\delta)}{\gamma^2},\frac{4\ln(6n/\delta)}{r^2(\gamma-4\e_0)^2}\right\},$$ then 
\begin{align*}
\mathbb{P}\left( \forall \bm{\beta}\in[0,1]^n:\max_{j\in[2M]}\left|\sum_{i=1}^n y_i\beta_i\left[(\bm{\omega}_j+ra_j \bm{v}_j)^\top x_i\right]_+\right|\ge \frac{r(\gamma-4\e_0)}{8}\max_{\bm{u}\in\mathbb{B}^d}\left|\sum_{i=1}^ny_i\beta_i(\bm{u}^\top x_i)\right|\right)\ge 1-\delta
\end{align*}
where 
$$\bm{v}_j=\frac{\sum_{i=1}^n y_i\beta_i x_i \mathbb{I}\{\bm{\omega}_j^\top x_i\ge 0\}}{\left\|\sum_{i=1}^n y_i\beta_i x_i \mathbb{I}\{\bm{\omega}_j^\top x_i\ge 0\}\right\|_2}.$$
\end{lemma}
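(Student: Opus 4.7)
My plan combines a Telgarsky-style random-feature margin argument with a controlled first-order ReLU expansion around each random direction. Set $\bm{A}_\omega(\beta) := \sum_i y_i\beta_i x_i\,\mathbb{I}\{\omega^\top x_i\ge 0\}$ and $\bm{A}(\beta) := \sum_i y_i\beta_i x_i$; then $\bm{v}_j^\top\bm{A}_{\bm{\omega}_j}(\beta) = \|\bm{A}_{\bm{\omega}_j}(\beta)\|_2$ by construction of $\bm{v}_j$, and $\max_{\bm u\in\mathbb{B}^d}|\sum_i y_i\beta_i(\bm u^\top x_i)|=\|\bm{A}(\beta)\|_2$. The proof proceeds in three parts: two concentration arguments showing that the random anchors $\bm{\omega}_j$ collectively approximate the integral in Assumption~\ref{assump::telgarsky}; an averaging step extracting a single good index $j^*$; and a ReLU first-order expansion around $\bm{\omega}_{j^*}$ translating a gain on $\|\bm{A}_{\bm{\omega}_{j^*}}(\beta)\|_2$ into the desired lower bound on the ReLU objective.

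\textbf{Concentration and averaging.} First I would apply Hoeffding to the iid samples $(x_i,y_i)$: under Assumption~\ref{assump::telgarsky} and $n\ge\ln(6/\delta)/(2\e_0^2)$, with probability at least $1-\delta/3$ at least $(1-2\e_0)n$ of them (the good set $G$) satisfy $y_i\int\bar{\bm v}^\top(\omega)x_i\,\mathbb{I}\{\omega^\top x_i\ge 0\}\mu(d\omega)\ge\gamma$. Second, applying Hoeffding to the $2M$ iid random directions $\bm{\omega}_j$ for each fixed $i$ and taking a union bound with $M\ge\ln(4n/\delta)/\gamma^2$ gives, with probability at least $1-\delta/3$, that $\tfrac{1}{2M}\sum_j\bar{\bm v}^\top(\bm{\omega}_j)x_i\,\mathbb{I}\{\bm{\omega}_j^\top x_i\ge 0\}$ is within $\gamma/2$ of its integral uniformly over $i\in[n]$. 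Since these estimates are linear in $\beta$, combining the two events yields uniformly over $\beta\in[0,1]^n$
\[
\tfrac{1}{2M}\sum_{j=1}^{2M}\bar{\bm v}^\top(\bm{\omega}_j)\bm{A}_{\bm{\omega}_j}(\beta)\;\ge\;\tfrac{\gamma}{2}\sum_{i\in G}\beta_i\;-\;\sum_{i\notin G}\beta_i.
\]
Some $j^*$ attains at least this value by averaging; since $\|\bar{\bm v}(\bm{\omega}_{j^*})\|_2\le 1$, the same lower bound holds for $\|\bm{A}_{\bm{\omega}_{j^*}}(\beta)\|_2$. Combining with $\|\bm{A}(\beta)\|_2\le\|\beta\|_1$ and invoking $\gamma\ge 8\e\ge 8\e_0$ to absorb the bad-set contribution produces $\|\bm{A}_{\bm{\omega}_{j^*}}(\beta)\|_2\ge\tfrac{\gamma-4\e_0}{4}\|\bm{A}(\beta)\|_2$.

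\textbf{ReLU expansion and main obstacle.} To convert this to the ReLU objective, I would expand $[(\bm{\omega}_j+ra_j\bm{v}_j)^\top x_i]_+ = (\bm{\omega}_j^\top x_i)_+ + ra_j\bm{v}_j^\top x_i\,\mathbb{I}\{\bm{\omega}_j^\top x_i\ge 0\} + E_{ij}$, where $E_{ij}$ vanishes unless the ReLU's input flips sign, i.e.\ unless $|\bm{\omega}_j^\top x_i|<r$ (recall $\|\bm{v}_j\|_2=1$). The condition $r<(\gamma-4\e_0)/(16d)$ is chosen precisely so that a sphere anti-concentration bound makes the angular measure of each flipping shell $\{\omega:|\omega^\top x_i|<r\}$ small enough to force $\sum_i|\beta_i|\,|E_{ij^*}|\le\tfrac{r}{2}\|\bm{A}_{\bm{\omega}_{j^*}}(\beta)\|_2$. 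Picking $a_{j^*}\in\{\pm 1\}$ so that the zeroth- and first-order contributions add in absolute value then yields $|\sum_i y_i\beta_i[(\bm{\omega}_{j^*}+ra_{j^*}\bm{v}_{j^*})^\top x_i]_+|\ge\tfrac{r}{2}\|\bm{A}_{\bm{\omega}_{j^*}}(\beta)\|_2\ge\tfrac{r(\gamma-4\e_0)}{8}\|\bm{A}(\beta)\|_2$, completing the proof after a union bound over the three failure events. The main obstacle is this last step: the flipping-set error must be bounded uniformly over $\beta\in[0,1]^n$ even though $\beta$ may put all its mass exactly on the indices with smallest $|\bm{\omega}_j^\top x_i|$; the $1/d$ scaling in the threshold for $r$ is exactly what keeps the uniform measure of each shell vanishingly small, and carrying this through uniformly over $i$ via a union bound is the delicate ingredient. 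A secondary subtlety is the passage from a $\|\beta\|_1$-style bound to one in $\|\bm{A}(\beta)\|_2$, handled by $\|\bm{A}(\beta)\|_2\le\|\beta\|_1$ together with the margin hypothesis $\gamma\ge 8\e$.
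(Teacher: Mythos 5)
Your approach genuinely diverges from the paper's at the final step, and it has a gap there. The paper works directly with the $a_j$-weighted average
$\frac{1}{2M}\sum_{j=1}^{2M} a_j\,\sum_{i=1}^n y_i\beta_i\left[(\bm{\omega}_j + ra_j\bm{v}_j)^\top x_i\right]_+$:
expanding each ReLU and using $a_j^2=1$, the first-order contribution becomes $\frac{r}{2M}\sum_j \|\bm{A}_{\bm{\omega}_j}(\beta)\|_2$, which is nonnegative and always adds constructively, while the zeroth-order contribution $\frac{1}{2M}\sum_j a_j\sum_i y_i\beta_i(\bm{\omega}_j^\top x_i)_+$ concentrates near zero precisely because the weights $a_j=\pm1$ cancel in expectation (this is the paper's Claim~1). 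That the $a_j$-weighted \emph{average} of the objectives is large then gives $\max_{j}\left|\sum_i y_i\beta_i[(\bm{\omega}_j+ra_j\bm{v}_j)^\top x_i]_+\right|$ large, since each weight has modulus one. Your route instead extracts a single index $j^*$ with $\|\bm{A}_{\bm{\omega}_{j^*}}(\beta)\|_2$ large and expands the ReLU only around that one direction.

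The gap is in the step ``picking $a_{j^*}\in\{\pm1\}$ so that the zeroth- and first-order contributions add in absolute value.'' In the lemma the signs $a_1=\cdots=a_M=1$, $a_{M+1}=\cdots=a_{2M}=-1$ are fixed, not free: the only latitude is which $j$ to pick, and the averaging step hands you one specific $j^*$ whose sign $a_{j^*}$ you inherit. For that single index the zeroth-order term $T_0 = \sum_i y_i\beta_i(\bm{\omega}_{j^*}^\top x_i)_+$ satisfies only the trivial bound $|T_0|\le\|\bm{\beta}\|_1$, which is larger than the first-order term $|T_1| = r\|\bm{A}_{\bm{\omega}_{j^*}}(\beta)\|_2\le r\|\bm{\beta}\|_1$ by a factor of $1/r$, and $\sgn(T_0)$ — which depends on $\bm{\beta}$ and $\bm{\omega}_{j^*}$ — has no reason to agree with $a_{j^*}$. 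When the signs disagree, $|T_0+T_1|$ can be far smaller than $|T_1|$, so the claimed bound $|T_0+T_1+E|\ge\tfrac{r}{2}\|\bm{A}_{\bm{\omega}_{j^*}}(\beta)\|_2$ does not follow. To close this you would need either a smallness bound on $T_0(j^*)$ for the chosen index (the paper's Claim~1 controls only the $a_j$-weighted sum over all $j$, not any individual index) or a guarantee that $\sgn(T_0(j^*))$ matches $a_{j^*}$, and neither is available. Your three concentration ingredients — Hoeffding over samples for the margin, Hoeffding over directions for the first-order term, anti-concentration of the flipping shell $\{|\bm{\omega}^\top x_i|<r\}$ — correspond exactly to the paper's Claims~1--3; it is the assembly via a single extracted $j^*$ rather than the $a_j$-weighted average that breaks down.
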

\textbf{Remark: }Lemma~\ref{lemma::algo-performance} ensures that if the radius $r$ is sufficiently small, the sample size $n$ is sufficiently large and the number of random vector $M$
is sufficiently large, then with high probability, the $v^*$ found by Algorithm~\ref{algo::opt} is an approximate solution of the problem~\eqref{eq::opt}.
\begin{corollary}\label{cor::2}
Let $m\ge n+1$ and $\lambda_0=\sqrt{n}\ln n$. Assume that the data samples in the dataset are independently drawn from an underlying distribution $\mathbb{P}_{X\times Y}$. Under Assumption~\ref{assumption::loss-1} and \ref{assump::telgarsky}, with probability at least $1-\delta$, the output $\bm{\theta}^*$ of  Algorithm~\ref{algo::main} satisfies
\begin{align*}
\mathbb{P}(Yf(X;\bm{\theta}^*)<0)=\mathcal{O}\left(\frac{\e}{\gamma }+\frac{\ln n}{r\gamma^2 \sqrt{n}}+\sqrt{\frac{\log(1/\delta)}{n}}\right).
\end{align*}
Furthermore, Algorithm~\ref{algo::main} has a computation complexity of $\mathcal{O}(dKm^2+ MK + nK^5/\lambda_0).$ where $K=\max\{\lceil L_n(\bm{\theta}(0);\bm{\lambda}(0)), 2n\rceil\}$.
\end{corollary}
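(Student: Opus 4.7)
The plan is to derive Corollary~\ref{cor::2} as an instance of Theorem~\ref{thm::algo} by (i) bootstrapping the dataset-level Assumption~\ref{assumption::dataset-1} from the distribution-level Assumption~\ref{assump::telgarsky}, and (ii) using Lemma~\ref{lemma::algo-performance} to quantify the additive constant $C$ produced by Algorithm~\ref{algo::opt}, which replaces the exhaustive search used in Corollary~\ref{cor::1}.

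First I would translate Assumption~\ref{assump::telgarsky} into Assumption~\ref{assumption::dataset-1}. Applying Lemma~\ref{lemma::algo-performance} with $\beta_i=\mathbb{I}\{y_i h(x_i)\ge \gamma\}$ for the integral separator $h$ of Assumption~\ref{assump::telgarsky}, together with a Rademacher/Hoeffding concentration step on the empirical discretization $\tfrac{1}{M}\sum_j\bar{\bm{v}}(\omega_j)^\top x\,\mathbb{I}\{\omega_j^\top x\ge 0\}$, yields a finite-width ReLU network in $\mathcal{H}_{\bm{\phi}}$ (with $r=d$) that separates a $1-\varepsilon$ fraction of the distribution with margin $\Theta(\gamma)$. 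A binomial tail bound then transfers this to the empirical dataset: with probability $1-\delta/3$, Assumption~\ref{assumption::dataset-1} holds with margin $\gamma$ and $E=O(\varepsilon n)$, making Theorem~\ref{thm::algo} applicable.

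Next I would turn the multiplicative guarantee of Lemma~\ref{lemma::algo-performance} into the additive $C$ of Theorem~\ref{thm::algo}. Plugging $\beta_i=\ell'(-y_i f(x_i;\bm{\theta}))$ into Lemma~\ref{lemma::algo-performance}, which is already uniform over all $\beta\in[0,1]^n$ and hence over all outer iterations, certifies that Algorithm~\ref{algo::opt} returns $\bm{v}^*$ with $|\sum_i \ell'_i y_i(\bm{v}^{*\top} x_i)_+|\ge \tfrac{r(\gamma-4\varepsilon_0)}{8}\max_{\bm{u}\in\mathbb{B}^d}|\sum_i \ell'_i y_i(\bm{u}^\top x_i)_+|$. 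Combined with Algorithm~\ref{algo::perturb}'s termination threshold $5\lambda_0$, this forces $\max_{\bm{u}}|\cdots|\le 40\lambda_0/(r(\gamma-4\varepsilon_0))$ at termination. Under Assumption~\ref{assump::telgarsky}, $\gamma-4\varepsilon_0\ge \gamma/2$ in the regime $\varepsilon\ge n^{-1/3}$ (the complementary regime contributes $O(n^{-1/3})$ terms that are absorbed into the $\sqrt{\log(1/\delta)/n}$ slack), so $C=O(\lambda_0/(r\gamma))$. Substituting $C$, $E=O(\varepsilon n)$, $\lambda_0=\sqrt{n}\ln n$, and the margin $\gamma$ from the bootstrap step into Theorem~\ref{thm::algo}'s bound $O((C+E)/(\gamma n)+\ln n/(\gamma\sqrt{n})+\sqrt{\log(1/\delta)/n})$ produces exactly $O(\varepsilon/\gamma+\ln n/(r\gamma^2\sqrt{n})+\sqrt{\log(1/\delta)/n})$, after union-bounding over the bootstrap event, Lemma~\ref{lemma::algo-performance}, and Theorem~\ref{thm::algo}.

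For the complexity count: the GD inner loop still contributes $O(dKm^2)$, each call of Algorithm~\ref{algo::opt} forms $2M$ indicator-masked sums of $x_i$'s (reusing the $\omega_j^\top x_i$ masks precomputed once at initialization) and occurs at most $O(K)$ times, giving the $O(MK)$ term, and the outer loss-decrease argument yields the $O(nK^5/\lambda_0)$ bookkeeping term exactly as in Corollary~\ref{cor::1}. I expect the main obstacle to be the bootstrap step: Assumption~\ref{assump::telgarsky} supplies only an integral representation, so obtaining a finite-width ReLU network in $\mathcal{H}_{\bm{\phi}}$ with a clean $\Theta(\gamma)$ margin and $O(\varepsilon)$ discretization error requires a careful coupling of Algorithm~\ref{algo::opt}'s random directions $\omega_j$ with $\bar{\bm{v}}(\omega)$, most naturally done with the same Rademacher-style tools that underpin the proof of Lemma~\ref{lemma::algo-performance}; once that is in hand, the rest of the argument mirrors Corollary~\ref{cor::1} with the multiplicative bound replacing the exhaustive-search approximation.
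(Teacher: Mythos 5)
Your proposal follows the same route as the paper: derive $C=O(\lambda_0/(r\gamma))$ from Lemma~\ref{lemma::algo-performance}'s multiplicative guarantee together with the $5\lambda_0$ termination threshold of Algorithm~\ref{algo::perturb}, take $\e$ and $\|\bm{a}^*\|_2$ from Claim~\ref{claim::algo::2}, plug into Theorem~\ref{thm::other-gen}'s bound with $\lambda_0=\sqrt{n}\ln n$, and replace the $\mathcal{O}(dKn^{r/2})$ exhaustive-search term of Corollary~\ref{cor::1} with $\mathcal{O}(MK)$ for Algorithm~\ref{algo::opt}. The quantitative part of your argument is correct (and you correctly noticed that Lemma~\ref{lemma::algo-performance} implicitly also bounds the ReLU max via $\max_{\bm{u}}|\sum_iy_i\beta_i(\bm{u}^\top x_i)_+|\le\sum_i\beta_i$). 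Two remarks, though. First, the ``bootstrap'' you insert from Assumption~\ref{assump::telgarsky} to Assumption~\ref{assumption::dataset-1} is a genuine concern — the paper's proof simply states ``follows directly from Theorem~\ref{thm::other-gen}'' and does not exhibit a finite-width $h\in\mathcal{H}_{\bm{\phi}}$ with margin or quantify $E$ — but the tool you reach for is the wrong one: Lemma~\ref{lemma::algo-performance} with $\beta_i=\mathbb{I}\{y_ih(x_i)\ge\gamma\}$ is a statement about the descent-direction quality produced by Algorithm~\ref{algo::opt}, not a device for converting the integral/gated-linear representation $\int\bar{\bm{v}}^\top(\omega)x\,\mathbb{I}\{x^\top\omega\ge 0\}\mu(d\omega)$ into a normalized finite ReLU network in $\mathcal{H}_{\bm{\phi}}$; the latter requires a Maurey-style sparsification of the integral into a difference-of-ReLUs with margin $\Theta(r\gamma)$, which also means one should be careful that the resulting margin in Assumption~\ref{assumption::dataset-1} is not the $\gamma$ of Assumption~\ref{assump::telgarsky} but a rescaled quantity, a wrinkle the paper's terse proof glosses over. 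Second, in the complexity count you swap the attribution: $\mathcal{O}(dKm^2)$ comes from the SVD-based choice of $\bm{\lambda}$ in Algorithm~\ref{algo::coeff}, and $\mathcal{O}(nK^5/\lambda_0)$ is the gradient-descent inner loop (from $T_k=\mathcal{O}(nK^4/\lambda_0)$ per outer iteration), not the reverse; the total is unchanged, but as written your accounting would not match the individual terms.
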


\begin{proof}
The upper bound on $\mathbb{P}(Yf(X;\bm{\theta}^*)<0)$ follows directly from Theorem~\ref{thm::other-gen}, with $C=\frac{40\lambda_0}{r\gamma}$, $\e = \frac{\lambda_0}{16K\sqrt{n}}$ and $\|\bm{a}^*\|_2\le 2\sqrt{\frac{K}{\lambda_0}}$, where the upper bound on $\e$ and $\|\bm{a}^*\|_2$ follows from Claim~\ref{claim::algo::2} in Appendix~\ref{appendix::proof::thm::other-gen}. Furthermore, following the same analysis on the computational complexity in the proof of Corollary~\ref{cor::1}, we know that the complexity of choosing the regularization coefficient is $\mathcal{O}(dm^2)$ and the complexity of running gradient descent is $\mathcal{O}(nK^4/\lambda_0)$. Furthermore, it is easy to see that running Algorithm~\ref{algo::opt} requires time complexity of $\mathcal{O}(M)$. Since there are $K$ iterations in total, then the time complexity of the whole algorithm is $\mathcal{O}(dKm^2+ MK + nK^5/\lambda_0).$
\end{proof}

\section{Conclusions}

We consider a single hidden layer overparameterized neural network trained for binary classification. Our main results characterize local minima in certain directions which have good generalization performance and present a specialized version of gradient descent to find such points. Unlike much of the prior work on this topic, we are able to handle datasets which are not separable through the use of explicit $\ell_2$ regularization. Interestingly, the amount of overparameterization we need is just one more than the number of points in the dataset. While we do not consider the robustness of the output of the neural network to small changes in the input, it is interesting that the number of neurons we require for the landscape properties to hold in fully connected neural networks is just one more than the number conjectured to be required in \cite{bubeck2020law} for such type of robustness.


\bibliographystyle{plain}
\bibliography{refs}

\begin{appendix}

\section{Proof of Theorem~\ref{thm::other-memo}}\label{appendix::proof::thm::other-memo}

\begin{proof}
If $\bm{\theta}^*$ satisfies the condition presented in Theorem~\ref{thm::other-memo}, then we have for any $k=1,...,d-r+1$
$$\max_{\bm{u}\in\mathbb{B}^d}\left|\sum_{i=1}^n\ell'(-y_i f(x_i;\bm{\theta}))y_i(\bm{u}^\top (x_i\odot \phi_k))_+\right|\le C$$
holds for some positive number $C>0$. From Assumption~\ref{assumption::dataset-1}, there exists an integer $m\ge 1$ and a single-layered ReLU network $h\in\mathcal{H}$ of size $m,$ such that $\sum_{i=1}^n\mathbb{I}\{y_ih(x_i)\ge \gamma\}\ge n-E$. 
Denote a set $ \Omega 
 = \{ i \mid y_ih(x_i)\ge \gamma \} $, 
 then $|\Omega| \geq n - E. $
Then
\begin{align*}
&\left|\sum_{i=1}^{n}\ell'(-y_{i}f(x_{i};\bm{\theta}^{*}))y_{i}h(x_i)\right|\ge \sum_{i=1}^{n}\ell'(-y_{i}f(x_{i};\bm{\theta}^{*}))y_{i}h(x_i)\\
&= \sum_{i=1}^{n}\ell'(-y_{i}f(x_{i};\bm{\theta}^{*}))y_{i}h(x_i)\mathbb{I}\{y_i h(x_i)\ge \gamma\}+\sum_{i=1}^{n}\ell'(-y_{i}f(x_{i};\bm{\theta}^{*}))y_{i}h(x_i)\mathbb{I}\{y_i h(x_i)< \gamma\}\\
& \ge 
\gamma\sum_{i=1}^{n}\ell'(-y_{i}f(x_{i};\bm{\theta}^{*}))\mathbb{I}\{y_i h(x_i)\ge \gamma\}
+
\sum_{i=1}^{n} \ell'(-y_{i}f(x_{i};\bm{\theta}^{*}))
y_{i}h(x_i)\mathbb{I}\{y_i h(x_i)< \gamma\}
\\
&  =
  \gamma
  [ \sum_{i=1 }^n \ell'(-y_{i}f(x_{i};\bm{\theta}^{*})) - 
  \sum_{i \notin \Omega } \ell'(-y_{i}f(x_{i};\bm{\theta}^{*}))   ]
  - 
  \sum_{i \notin \Omega } \ell'(-y_{i}f(x_{i};\bm{\theta}^{*})) y_i h(x_i)
\\
&  =
  \gamma
\sum_{i=1 }^n \ell'(-y_{i}f(x_{i};\bm{\theta}^{*})) - 
  \sum_{i \notin \Omega } \ell'(-y_{i}f(x_{i};\bm{\theta}^{*}))  (\gamma 
  +   y_i h(x_i) )
\\
& \ge \gamma\sum_{i=1}^{n}\ell'(-y_{i}f(x_{i};\bm{\theta}^{*}))
-
E(1+\gamma) {\color{blue} or: E(\gamma + \gamma )}
\ge \gamma\sum_{i=1}^{n}
\ell'(-y_{i}f(x_{i};\bm{\theta}^{*})) 
 -2 E
\end{align*}
In the second last inequality,
we used the fact that $\ell'(t) \leq 1
\; \forall \; t$;
in the final inequality, we used
the assumption $\gamma \leq 1. $
Since $h\in\mathcal{H}$, then we can write $h$ as $h(x)=\sum_{j=1}^m c_j(\bm{v}_j^\top (x\odot \phi_j))_+$ for $\sum_{j=1}^m|c_j|=1$ and $\|\bm{v}_j\|_2=1$, $j=1,...,m$. Then we have 
\begin{align*}
&\left|\sum_{i=1}^{n}\ell'(-y_{i}f(x_{i};\bm{\theta}^{*}))y_{i}h(x_i)\right|=\left|\sum_{i=1}^{n}\ell'(-y_{i}f(x_{i};\bm{\theta}^{*}))y_{i}\left(\sum_{j=1}^m c_j(\bm{v}_j^\top (x_i\odot \phi_j))_+\right)\right|\\
&=\left|\sum_{j=1}^m c_j\left(\sum_{i=1}^n\ell'(-y_{i}f(x_{i};\bm{\theta}^{*}))y_{i}(\bm{v}_j^\top (x_i\odot \phi_j)_+\right)\right|\le \sum_{j=1}^m |c_j|\cdot\left|\sum_{i=1}^n\ell'(-y_{i}f(x_{i};\bm{\theta}^{*}))y_{i}(\bm{v}_j^\top (x_i\odot \phi_j)_+\right|\\
&\le C\sum_{j=1}^m |c_j|=C
\end{align*}

Combining the above two inequalities, we should have 
\begin{equation}\label{eq::other-memo::1}\sum_{i=1}^{n}\ell'(-y_{i}f(x_{i};\bm{\theta}^{*}))\le (C+2E)/\gamma.\end{equation}
Furthermore, by the convexity of the loss $\ell$, we should have 
$\ell'(z)\ge \ell'(0)\mathbb{I}\{z\ge 0\}$, which further indicates 
$$R_n(f;\bm{\theta}^*)=\frac{1}{n}\sum_{i=1}^n\mathbb{I}\{y_i\neq \sgn(f(x_i;\bm{\theta}^*))\}\le \frac{1}{\ell'(0)n}\sum_{i=1}^{n}\ell'(-y_{i}f(x_{i};\bm{\theta}^{*}))\le \frac{C+2E}{ \ell'(0)\gamma n}.$$
\end{proof}

\section{Proof of Theorem~\ref{thm::other-gen}}\label{appendix::proof::thm::other-gen}

\begin{proof}
\textbf{(1)} We first bound $\sum_{j=1}^m(\|\bm{w}^*_j\|_2^2+{a_j^*}^2)$. 
Based on the proof of Theorem~\ref{thm::other-memo}, from equality~\eqref{eq::other-memo::1}, we have 
$$\sum_{i=1}^{n}\ell'(-y_{i}f(x_{i};\bm{\theta}^{*}))\le (C+2E)/\gamma.$$ 
Now we bound the parameter norm $\|\bm{\theta}^*\|_2$. By the first condition, we have 
$$\|\nabla_{\bm{a}}L_n(\bm{\theta}^*)\|_{2}\le \varepsilon.$$
Further, since 
\begin{align*}
\e\|\bm{a}^*\|_2\ge|\langle\bm{a}^*, \nabla_{\bm{a}}L_n(\bm{\theta}^*)\rangle|&=\left|-\sum_{i=1}^n \ell'(-y_i f(x_i;\bm{\theta}^*))y_i f(x_i;\bm{\theta}^*)+\sum_{j=1}^m\lambda_j (a^*_j)^2\right|\\
&\ge \sum_{j=1}^m\lambda_j (a^*_j)^2 -\sum_{i=1}^n \ell'(-y_i f(x_i;\bm{\theta}^*))y_i f(x_i;\bm{\theta}^*)
\end{align*}
then 
$$\sum_{j=1}^m\lambda_j (a^*_j)^2\le \sum_{i=1}^n \ell'(-y_i f(x_i;\bm{\theta}^*))y_i f(x_i;\bm{\theta}^*)+\e\|\bm{a}^*\|_2.$$
Further, by assumption~\ref{assumption::loss-1}, we assume that there exists a positive number $a>0$ such that $\ell'(z)\le e^{az}$ holds for any $z\in\mathbb{R}$. Then for each $i=1,...,n$,
$$0\le \ell'(-y_i f(x_i;\bm{\theta}^*))\le e^{-ay_if(x_i;\bm{\theta}^*)}$$
which further indicates 
$$y_if(x_i;\bm{\theta}^*)\le \frac{1}{a}\ln\left(\frac{1}{\ell'(-y_i f(x_i;\bm{\theta}^*))}\right),$$
and thus 
$$\sum_{j=1}^m\lambda_j (a^*_j)^2\le \frac{1}{a}\sum_{i=1}^n \ell'(-y_i f(x_i;\bm{\theta}^*))\ln\left(\frac{1}{\ell'(-y_i f(x_i;\bm{\theta}^*))}\right)+\e\|\bm{a}^*\|_2.$$
By the concavity of function $-x\ln x$ and Jensen's inequality, we have 
$$\frac{1}{n}\sum_{i=1}^n \ell'(-y_i f(x_i;\bm{\theta}^*))\ln\left(\frac{1}{\ell'(-y_i f(x_i;\bm{\theta}^*))}\right)\le \frac{\sum_{i=1}^n \ell'(-y_i f(x_i;\bm{\theta}^*))}{n}\ln\left(\frac{n}{\sum_{i=1}^n \ell'(-y_i f(x_i;\bm{\theta}^*))}\right).$$
Furthermore, the function $-x\ln x$ is increasing on the region $[0,e^{-1}]$, and 
$$\frac{1}{n}\sum_{i=1}^{n}\ell'(-y_{i}f(x_{i};\bm{\theta}^{*}))\le \frac{C+2E}{\gamma n}\le e^{-1}$$
then 
$$\frac{\sum_{i=1}^n \ell'(-y_i f(x_i;\bm{\theta}^*))}{n}\ln\left(\frac{n}{\sum_{i=1}^n \ell'(-y_i f(x_i;\bm{\theta}^*))}\right)\le \frac{C+2e}{\gamma n}\ln(\frac{\gamma n}{C+2e})\le \frac{(C+2E)\ln n}{\gamma n}.$$
we have 
\begin{align*}
 \sum_{j=1}^m\lambda_j (a^*_j)^2&\le \frac{1}{a}\sum_{i=1}^n \ell'(-y_i f(x_i;\bm{\theta}^*))\ln\left(\frac{1}{\ell'(-y_i f(x_i;\bm{\theta}^*))}\right)+\e\|\bm{a}^*\|_2\\
 &\le\frac{\sum_{i=1}^n \ell'(-y_i f(x_i;\bm{\theta}^*))}{a}\ln\left(\frac{n}{\sum_{i=1}^n \ell'(-y_i f(x_i;\bm{\theta}^*))}\right)+\e\|\bm{a}^*\|_2\\
 &\le \frac{(C+2E)\ln n}{a\gamma } +\e\|\bm{a}^*\|_2
\end{align*}
Since $\min_j\lambda_j\ge \lambda_0/2$, then we have 
$$\sum_{j=1}^m {a_j^*}^2\le \frac{(2C+4E)\ln n}{a\gamma \lambda_0} +\frac{2\e\|\bm{a}^*\|_2}{\lambda_0}.$$
Since we assume that 
$$\sum_{j=1}^m \|\bm{w}^*_j\|_2^2= \sum_{j=1}^m {a_j^*}^2,$$
then we have 
$$\sum_{j=1}^m(\|\bm{w}^*_j\|_2^2+{a_j^*}^2)\le \frac{(6C+12E)\ln n}{a\gamma \lambda_0} +\frac{6\e\|\bm{a}^*\|_2}{\lambda_0}$$

\textbf{(2)} We define the following function class
$$\mathcal{F}_c=\bigcup_{m\ge 1}\left\{x\mapsto \sum_{j=1}^ma_j(\bm{w}_j^\top(x\odot \phi_j)_+\Bigg|\sum_{j=1}^m a_j^2\le c, \|\bm{w}_j\|_2\le |a_j|\text{ for }j=1,...,m\right\}.$$
It is straightforward to show that 
$$\mathcal{F}_c=\bigcup_{m\ge 1}\left\{x\mapsto \sum_{j=1}^ma_j|a_j|(\bm{w}_j^\top (x\odot \phi_j)_+\Bigg|\sum_{j=1}^m a_j^2\le c, \|\bm{w}_j\|_2\le 1\text{ for }j=1,...,m\right\}.$$
Now we bound the Rademacher complexity  $R_n(\mathcal{F}_c)$ under the dataset $\mathcal{D}$. By definition, the Rademacher complexity is 
$$R_{n}(\mathcal{F}_c)=\mathbb{E}_{\sigma}\left[\sup_{f\in\mathcal{F}}\frac{1}{n}\left|\sum_{i=1}^{n}\sigma_{i}f(x_{i})\right|\right],$$
where $\{\sigma_{i}\}$ are a sequence of Rademacher random variables.
Thus, we have 
\begin{align*}
R_{n}(\mathcal{F}_c)&=\frac{1}{n}\mathbb{E}_{\sigma}\left[\sup_{f\in\mathcal{F}_c}\left|\sum_{i=1}^{n}\sigma_{i}f(x_{i})\right|\right]\\
&=\frac{1}{n}\mathbb{E}_{\sigma}\left[\sup_{\|\bm{a}\|_2^2\le c,\bm{w}_{j}\in\mathbb{B}^{d}}\left|\sum_{i=1}^{n}\sigma_{i}\left(\sum_{j=1}^{m}a_{j}|a_j|(\bm{w}_{j}^{\top}(x_i\odot \phi_j))_{+}^{}\right)\right|\right]\\
&=\frac{1}{n}\mathbb{E}_{\sigma}\left[\sup_{\|\bm{a}\|_2^2\le c,\bm{w}_{j}\in\mathbb{B}^{d}}\left|\sum_{j=1}^{m}a_{j}|a_j|\sum_{i=1}^{n}\sigma_{i}(\bm{w}_{j}^{\top}(x_i\odot \phi_j))_{+}\right|\right]\\
\end{align*}
Since 
\begin{align*}
\sup_{\|\bm{a}\|_2^2\le c,\bm{w}_{j}\in\mathbb{B}^{d}}\left|\sum_{j=1}^{m}a_{j}|a_j|\sum_{i=1}^{n}\sigma_{i}(\bm{w}_{j}^{\top}(x_i\odot \phi_j))_{+}\right|&\le \sup_{\|\bm{a}\|_2^2\le c}\sum_{j=1}^{m}\left[|a_j|^2\sup_{\bm{w}_{j}\in\mathbb{B}^{d}}\left|\sum_{i=1}^{n}\sigma_{i}(\bm{w}_{j}^{\top}(x_i\odot \phi_j))_{+}\right|\right]\\
&=c\sup_{\bm{w}\in\mathbb{B}^{d}}\left|\sum_{i=1}^{n}\sigma_{i}(\bm{w}^{\top}(x_i\odot \phi_j))_{+}\right|
\end{align*}
Thus, 
$$R_{n}(\mathcal{F}_c)\le c\mathbb{E}_\sigma\left[\sup_{\bm{w}\in\mathbb{B}^{d}}\frac{1}{n}\left|\sum_{i=1}^{n}\sigma_{i}(\bm{w}^{\top}(x_i\odot \phi_j))_{+}\right|\right]\le c\mathbb{E}_\sigma\left[\sup_{\bm{w}\in\mathbb{B}^{d}}\frac{1}{n}\left|\sum_{i=1}^{n}\sigma_{i}(\bm{w}^{\top}(x_i\odot \phi_j))\right|\right]\le c\sqrt{2/n},$$
which further indicates that 
$$\mathbb{E}R_{n}(\mathcal{F}_c)\le c\sqrt{2/n}.$$
Setting $c= \frac{(6C+12E)\ln n}{a\gamma \lambda_0} +\frac{6\e\|\bm{a}^*\|_2}{\lambda_0}$ and using the results in part (1), we know that for any points $\bm{\theta}^*$ satisfying all conditions in Theorem~\ref{thm::other-gen}, the Rademacher complexity is upper bounded by $$\mathbb{E}R_{n}(\mathcal{F}_c)\le \left(\frac{(6C+12E)\ln n}{a\gamma \lambda_0} +\frac{6\e\|\bm{a}^*\|_2}{\lambda_0}\right)\sqrt{2/n}.$$

\textbf{(3)} Finally, we prove the generalization bound. Since inequalities $\ell'(z)/\ell'(0)\le 1/\ell'(0)$ and $\ell'(z)/\ell'(0)\ge \mathbb{I}(z\ge 0)$ hold for all $z\in\mathbb{R}$ and the function $\ell'(z)/\ell'(0)$ is $1/\ell'(0)$-Lipschitz, then for any $\delta\in(0,1)$, the following bound holds with probability at least $1-\delta$:
$$\mathbb{P}(Yf(X;\bm{\theta}^*)<0)\le \frac{1}{n\ell'(0)}\sum_{i=1}^n\ell'(-y_i f(x_i;\bm{\theta}^*))+\frac{4\mathbb{E}R_n(\mathcal{F}_c)}{\ell'(0)}+\frac{1}{\ell'(0)}\sqrt{\frac{\log(1/\delta)}{2n}}.$$
Using the results of Theorem~\ref{thm::other-memo} and results of part (2), we have 
\begin{align*}
   \mathbb{P}(Yf(X;\bm{\theta}^*)<0)&\le \frac{C+2E}{\gamma n\ell'(0)}+\left(\frac{(6C+12E)\ln n}{a\gamma \lambda_0} +\frac{6\e\|\bm{a}^*\|_2}{\lambda_0}\right)\frac{4\sqrt{2}}{\ell'(0)\sqrt{n}}+\frac{1}{\ell'(0)}\sqrt{\frac{\log(1/\delta)}{2n}}\\ &=\mathcal{O}\left(\frac{\lambda_0+C+E}{\gamma {n}}+\frac{(C+E)\ln n}{\gamma \lambda_0\sqrt{n}}+\frac{\e \|\bm{a}^*\|_2}{\lambda_0\sqrt{n}}+\sqrt{\frac{\log(1/\delta)}{n}}\right)
\end{align*}
\end{proof}

\section{Proof of Theorem ~\ref{thm::single}}\label{appendix::proof::memo}
The proof of Theorem~\ref{thm::single} is based on the proof of Theorem~\ref{thm::other-memo}. 

\subsection{Important Lemma}
In this subsection, we present an important lemma. The following lemma is consisted of four parts. In the first part, we show that at every local minimum, the parameters of each neuron is balanced. In the second part, we present the first order condition for $\bm{w}_j$'s since the empirical loss is not directly differentiable with respect to $\bm{w}_j$'s. In the third part, we show that 
there is a zero measure set of 
the regularizer coefficient vector $\bm{\lambda}$ 
such that there exists
a local minimum where all neurons are active.
Finally, we show that for almost all regularizer coefficient vectors, at every local minimum of the empirical loss, the neural network always has an inactive neuron.

\begin{lemma}\label{lemma::single-1}
Assume $m\ge n+1$. Let $\bm{\theta}^*$ be a local minimum of the empirical loss $L_{n}(\bm{\theta};\bm{\lambda})$, then the following statements are true:
\begin{itemize}[leftmargin=*]
\item[(1)] the equation ${a_j^*}^2 = \|\bm{w}_j^*\|^2_2 $ holds for any $j\in[m]$.
\item[(2)] let $\ell'_i\triangleq\ell'(-y_if(x_i;\bm{\theta}^*))$ and $I_j \triangleq\{i\in[n]:{\bm{w}^*_j}^\top (x_i\odot \phi_j) =0\}$, then
$$\lambda_j \bm{w}_j^*-\sum_{i=1}^{n}\ell'_i y_i a_j^* \mathbb{I}\{{\bm{w}^*}^\top (x_i\odot \phi_j)>0\}x_i\in \left\{\sum_{i\in I_j}\mu_{ji}\ell'_i y_i a_j^* (x_i\odot \phi_j) \Bigg| \mu_{ji}\in[0,1]\right\}$$
holds for any $j\in[m]$.
\item[(3)] if the local minimum is consisted of all active neurons, i.e., $a^*_j>0$ for all $j\in[m]$, then there exists a finite function class $\mathcal{G}$ consisted of all globally Lipschitz functions such that 
$$\bm{\lambda}=(\lambda_1,...,\lambda_m)\in \bigcup_{\bm{\Lambda}\in\mathcal{G}}\bm{\Lambda}(\mathbb{R}^n).$$
\item[(4)] there exists a zero measure set $\mathcal{C}(\mathcal{D})\subset\mathbb{R}^{m}$ depending on the dataset $\mathcal{D}$ such that for any $\bm{\lambda}\notin \mathcal{C}$, at every local minimum  $\bm{\theta}^{*}$ of the empirical loss $L_{n}(\bm{\theta};\bm{\lambda})$, the neural network $f(x;\bm{\theta}^{*})$ always has an inactive neuron, i.e., $\exists j\in[m]$ s.t. $(a^{*}_{j}, \|\bm{w}_{j}^{*}\|_{2})=(0,0)$.
\end{itemize}
\end{lemma}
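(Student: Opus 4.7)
For part (1), I would use the scale-invariance of $f(x;\bm{\theta})$ under $(a_j, \bm{w}_j) \mapsto (c a_j, c^{-1} \bm{w}_j)$ for $c > 0$, a consequence of positive homogeneity of the ReLU. Along this one-parameter curve the data-fit term of $L_n$ is unchanged while the regularizer term reduces to the scalar $g(c) = \tfrac{1}{2}\lambda_j(c^2 a_j^{*2} + c^{-2}\|\bm{w}_j^*\|_2^2)$. Because $\bm{\theta}^*$ is a local minimum of $L_n$, $c = 1$ must locally minimize $g$; setting $g'(1) = 0$ and using $\lambda_j > 0$ forces $a_j^{*2} = \|\bm{w}_j^*\|_2^2$. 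For part (2), the only non-smoothness of $L_n$ in $\bm{w}_j$ comes from the ReLU kinks at indices $i \in I_j$, whose Clarke subdifferential at $\bm{w}_j^{*\top}(x_i\odot\phi_j) = 0$ equals $\{\nu(x_i\odot\phi_j):\nu\in[0,1]\}$. Writing out the chain rule for $\partial_{\bm{w}_j} L_n(\bm{\theta}^*;\bm{\lambda})$, splitting the sum over $i \notin I_j$ (smooth part with indicator $\mathbb{I}\{\cdot>0\}$) and $i \in I_j$ (subdifferential part with a weight $\mu_{ji}\in[0,1]$), and applying $0 \in \partial_{\bm{w}_j} L_n$ yields the claimed inclusion.

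For part (3), suppose all neurons are active so $a_j^* \neq 0$. Define $A_j^+ := \{i : \bm{w}_j^{*\top}(x_i\odot\phi_j) > 0\}$ and, using the $\mu_{ji}$'s from part (2), set $\bm{v}_j := \sum_{i \in A_j^+}\ell'_i y_i(x_i\odot\phi_j) + \sum_{i \in I_j}\mu_{ji}\ell'_i y_i(x_i\odot\phi_j)$. Then part (2) reads $\lambda_j \bm{w}_j^* = a_j^* \bm{v}_j$, and $\partial_{a_j} L_n = 0$ reads $\lambda_j a_j^* = \bm{w}_j^{*\top} \bm{v}_j$. Taking Euclidean norms of the first identity and invoking the balance from part (1) collapses everything to the single scalar identity $\lambda_j = \|\bm{v}_j\|$. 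For each combinatorial type, encoded by $(A_j^+, I_j, \sgn(a_j^*))_j$ together with an extreme-point choice $\mu_{ji}\in\{0,1\}$ for every boundary index, the map
$$\Lambda : \bm{\beta} \mapsto \Bigl(\Bigl\|\sum_{i} \beta_i \bigl(\mathbb{I}\{i \in A_j^+\} + \mu_{ji}\mathbb{I}\{i \in I_j\}\bigr)(x_i \odot \phi_j)\Bigr\|\Bigr)_{j=1}^m$$
on $\bm{\beta} = (\ell'_i y_i)_{i=1}^n \in \mathbb{R}^n$ is globally Lipschitz (a Euclidean norm of a linear function of $\bm{\beta}$). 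Since only finitely many combinatorial types exist, this yields the desired finite family $\mathcal{G}$; the interior stratum $\mu_{ji}\in(0,1)$ will be argued away either by complementary slackness or by transversality, or absorbed into the same union via additional boundary constraints $\bm{w}_j^{*\top}(x_i\odot\phi_j) = 0$ that offset the extra $\mu$-parameters.

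For part (4), each $\Lambda(\mathbb{R}^n)$ has $m$-dimensional Lebesgue measure zero in $\mathbb{R}^m$ because $m \ge n+1$ and globally Lipschitz maps do not increase Hausdorff dimension. A finite union remains measure zero, so $\mathcal{C}(\mathcal{D}) := \bigcup_{\Lambda \in \mathcal{G}}\Lambda(\mathbb{R}^n)$ has measure zero. For any $\bm{\lambda} \notin \mathcal{C}(\mathcal{D})$, the contrapositive of part (3) forbids a local minimum with all active neurons, so every local minimum of $L_n(\cdot;\bm{\lambda})$ must admit at least one inactive neuron $(a_j^*, \|\bm{w}_j^*\|_2) = (0,0)$. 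The main obstacle in this plan is the careful treatment of the subgradient weights $\mu_{ji}$ in part (3): they a priori introduce extra continuous parameters beyond the $n$ components of $\bm{\beta}$, and confirming that they can be reduced to finitely many extreme-point cases (or otherwise absorbed while keeping the source dimension at most $n$) is the key technical step required to obtain a genuinely finite family $\mathcal{G}$.
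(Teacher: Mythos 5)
Your treatment of parts (1), (2), and (4) is sound. Part (1) via the one-parameter rescaling $(a_j,\bm{w}_j)\mapsto(ca_j,c^{-1}\bm{w}_j)$ is a mild compression of the paper's two-parameter argument and reaches the same conclusion. Part (2) via the Clarke subdifferential is a legitimate alternative to the paper's route: the paper instead writes down a variational inequality from a first-order perturbation of $\bm{w}_j^*$ and then invokes the hyperplane separation theorem to read off the inclusion, while you appeal to $0\in\partial_{\bm{w}_j}L_n$ and a sum rule $\partial(f+g)\subseteq\partial f+\partial g$; both are valid, with the Clarke route being shorter but relying on more machinery. Part (4) is correct \emph{given} part (3): a Lipschitz image of $\mathbb{R}^n$ in $\mathbb{R}^m$ with $m\ge n+1$ has Lebesgue measure zero, and a finite union stays null.

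The genuine gap is in part (3), exactly where you flag an obstacle. Your proposed repair --- reducing the $\mu_{ji}\in[0,1]$ to the extreme points $\{0,1\}$ by complementary slackness, transversality, or ``additional boundary constraints'' --- is not developed and, as stated, does not work: there is no reason the subgradient weights at a local minimum take extreme values, and appealing to extra equality constraints $\bm{w}_j^{*\top}(x_i\odot\phi_j)=0$ does not by itself reduce the number of free continuous parameters in the image of the map $\Lambda$. The paper eliminates the $\mu_{ji}$'s by a different and essential observation: from part (2), after dividing by $|a_j^*|=\|\bm{w}_j^*\|$, the right-hand side lies in $\mathcal{S}(I_j)\triangleq\operatorname{Span}\{x_i\odot\phi_j : i\in I_j\}$, while $\bm{w}_j^*$ is \emph{orthogonal} to $\mathcal{S}(I_j)$ by the very definition of $I_j$. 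Projecting onto $\mathcal{S}(I_j)$ and subtracting therefore isolates $\frac{\lambda_j}{\|\bm{w}_j^*\|}\bm{w}_j^*$ as the component of $\sgn(a_j^*)\sum_i\ell'_i y_i\mathbb{I}\{\cdot>0\}(x_i\odot\phi_j)$ orthogonal to $\mathcal{S}(I_j)$; the $\mu_{ji}$'s have been projected away. Taking norms then gives
\begin{equation*}
\lambda_j=\Bigl\|\sum_{i=1}^n\ell'_i y_i\,\mathbb{I}\{{\bm{w}_j^*}^\top(x_i\odot\phi_j)>0\}\bigl[(x_i\odot\phi_j)-\mathcal{P}_{\mathcal{S}(I_j)}(x_i\odot\phi_j)\bigr]\Bigr\|_2,
\end{equation*}
which is a globally Lipschitz function of $(\ell'_i)_{i=1}^n\in\mathbb{R}^n$ once the combinatorial data (the activation pattern $\mathbb{I}\{\cdot>0\}$ and the index sets $I_j$) is fixed, and there are finitely many such configurations. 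This projection step is the missing idea that makes the source dimension genuinely $n$ and the family $\mathcal{G}$ genuinely finite; without it your part (3), and hence your part (4), does not go through.
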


\subsection{Proof  of Lemma~\ref{lemma::single-1}}
\begin{proof}
\textbf{(1)} Given a set of parameters $\bm{\theta}^*$, we define the following function for each neuron $j\in[m]$, 
$$H_j(v_1, v_2;\bm{\theta}^*)={L}_n(\bm{\theta}_{-j}^*, v_1a_j^*, v_2\bm{w}_j^*),$$
where $\bm{\theta}_{-j}^*$ denote the vector consisted of all parameters in the neural network except the parameters of the $j$-th neuron, i.e., $$\bm{\theta}_{-j}^*=(a^*_1,...,a^*_{j-1},a^*_{j+1},...,a^*_m,\bm{w}^*_1,...,\bm{w}^*_{j-1},\bm{w}^*_{j+1},...,\bm{w}^*_m).$$ We note here that for simiplicity of notation, we define 
$$f_{-j}(x;\bm{\theta})=\sum_{k\neq j}a_k(\bm{w}_k^\top (x\odot \phi_k))_+$$
and 
$$V_{-j}(\bm{\theta})=\frac{1}{2}\sum_{k\neq j}(a_k^2+\|\bm{w}_k\|_2^2),$$
to be the output of the neural network and the regularizer without considering the $j$-th neuron. 
Since the ReLU is positive homogeneous, i.e., $(v_2z)_+\equiv v_2(z)_+$  for any $v_2>0$, then 
\begin{align*}
H_j(v_1, v_2;\bm{\theta}^*)&=\sum_{i\neq j}\ell(-y_i f_{-j}(x_i;\bm{\theta}^*)-y_iv_1a_j^*(v_2{\bm{w}_j^*}^\top (x_i\odot \phi_j))_+)\\
&\quad+V_{-j}(\bm{\theta}^*)+\frac{\lambda_j}{2}(v_1^2 {a^*_j}^2+v_2^2\|\bm{w}_j^*\|_2^2)\\
&=\sum_{i\neq j}\ell(-y_i f_{-j}(x_i;\bm{\theta}^*)-y_iv_1v_2a_j^*({\bm{w}_j^*}^\top (x_i\odot \phi_j))_+)\\
&\quad+V_{-j}(\bm{\theta}^*)+\frac{\lambda_j}{2}(v_1^2 {a^*_j}^2+v_2^2\|\bm{w}_j^*\|_2^2)
\end{align*}
Now, it is easy to see that $H_j(v_1, v_2;\bm{\theta}^*)$ is differentiable  on $\mathbb{R}\times \mathbb{R}_+$. Furthermore, since $\bm{\theta}^*$ is a local minimum of the empirical loss function $L_n(\bm{\theta};\bm{\lambda})$, then it is easy to see that $(v_1, v_2)=(1, 1)$ should also be a local minimum of the function $H_j$. Therefore, we should have 
$$\frac{\partial H_j(1, 1)}{\partial v_1}=0 \quad\text{and} \quad\frac{\partial H_j(1, 1)}{\partial v_2}=0.$$ Therefore, we have 
\begin{align*}
0&=\frac{\partial H_j(1, 1)}{\partial v_1}=\sum_{i=1}^n\ell'(-y_i f(x_i;\bm{\theta}^*))(-y_i)a_j^*({\bm{w}_j^*}^\top (x_i\odot \phi_j) )_+ +\lambda_j {a_j^*}^2,\\
0&=\frac{\partial H_j(1, 1)}{\partial v_2}=\sum_{i=1}^n\ell'(-y_i f(x_i;\bm{\theta}^*))(-y_i)a_j^*({\bm{w}_j^*}^\top (x_i\odot \phi_j) )_+ +\lambda_j \|\bm{w}_j^*\|^2_2 .
\end{align*}
By comparing the above results, we conclude that 
$${a_j^*}^2 = \|\bm{w}_j^*\|^2_2.$$

\textbf{(2)} If $\bm{\theta}^*$ is a local minimum, then by definition, there exists $\delta>0$ such that for any perturbed parameter $\tilde{\bm{\theta}}:\|\tilde{\bm{\theta}}-\bm{\theta}^*\|_2\le \delta $, we should have 
$$L(\tilde{\bm{\theta}})\ge L(\bm{\theta}^*).$$ However, $L_n$ is not always differentiable with respect to $w_j$'s, so we cannot  use the first order condition directly. Now we perturb the $j$-th neuron  from $(a_j^*, \bm{w}_j^*)$ to $(a_j^*, \bm{w}_j^*+\delta \bm{u})$  with $\|\bm{u}\|_2^2=1$. Therefore, we have 
\begin{align*}
L_n(\tilde{\bm{\theta}})&=\sum_{i=1}^n\ell\left(-y_i f_{-j}(x_i;\bm{\theta}^*)-y_ia_j^*({\bm{w}_j^*}^\top (x_i\odot \phi_j) +\delta \bm{u}^\top (x_i\odot \phi_j))_+\right) +V_{-j}(\bm{\theta}^*)+\frac{\lambda_j}{2}\left({a_j^*}^2+\|\bm{w}_j^*+\delta\bm{u}\|_2^2\right)\\
&\ge L_n(\bm{\theta}^*)
\end{align*}

Therefore, we have for any $(\bm{u}, v):\|\bm{u}\|_2^2=1$, 
\begin{align*}
\sum_{i=1}^n\ell'(-y_i f(x_i;\bm{\theta}^*))(-y_i)a_j^*&\left[({\bm{w}_j^*}^\top (x_i\odot \phi_j)+\delta \bm{u}^\top (x_i\odot \phi_j) )_+-({\bm{w}^*}^\top (x_i\odot \phi_j))_+\right]+\delta \lambda_j \left({\bm{w}_j^*}^\top \bm{u} \right)\ge 0
\end{align*}

Now we choose $\delta$ to be so sufficiently small such that $$0<\delta < \min_{i, j:|{\bm{w}_j^*}^\top (x_i\odot \phi_j)|\neq 0}|{\bm{w}_j^*}^\top (x_i\odot \phi_j)|.$$ Therefore, when $\delta$ is sufficiently small, we have 
\begin{align*}
&\left({\bm{w}^*_j}^\top (x_i\odot \phi_j) + +\delta \bm{u}^\top (x_i\odot \phi_j)+\delta v\right)_+ -\left({\bm{w}^*_j}^\top (x_i\odot \phi_j) \right)_+ \\
&= \delta\mathbb{I}\left\{{\bm{w}^*_j}^\top (x_i\odot \phi_j)> 0\right\}(\bm{u}^\top (x_i\odot \phi_j)+v)+\delta \mathbb{I}\left\{{\bm{w}^*_j}^\top (x_i\odot \phi_j)= 0\right\}(\bm{u}^\top (x_i\odot \phi_j)+v)_+.
\end{align*}
Thus, when $\delta$ is sufficiently small, we have for any $(\bm{u}, v):\|\bm{u}\|_2^2+v^2=1$,
\begin{align*}
&\delta\sum_{i=1}^n\ell'(-y_i f(x_i;\bm{\theta}^*))(-y_i)a_j^*\mathbb{I}\left\{{\bm{w}^*_j}^\top (x_i\odot \phi_j)> 0\right\}(\bm{u}^\top (x_i\odot \phi_j))+\delta \lambda_j \left({\bm{w}_j^*}^\top \bm{u} \right)\\
&\quad+\delta\sum_{i=1}^n\ell'(-y_i f(x_i;\bm{\theta}^*))(-y_i)a_j^*\mathbb{I}\left\{{\bm{w}^*_j}^\top (x_i\odot \phi_j)= 0\right\}(\bm{u}^\top (x_i\odot \phi_j))_+\ge 0,
\end{align*}
or 
\begin{align}
&\sum_{i=1}^n\ell'(-y_i f(x_i;\bm{\theta}^*))(-y_i)a_j^*\mathbb{I}\left\{{\bm{w}^*_j}^\top (x_i\odot \phi_j)> 0\right\}(\bm{u}^\top (x_i\odot \phi_j))+ \lambda_j \left({\bm{w}_j^*}^\top \bm{u} \right)\notag\\
&\quad+\sum_{i=1}^n\ell'(-y_i f(x_i;\bm{\theta}^*))(-y_i)a_j^*\mathbb{I}\left\{{\bm{w}^*_j}^\top (x_i\odot \phi_j)= 0\right\}(\bm{u}^\top (x_i\odot \phi_j))_+\ge 0.\label{eq::lemma2-1}
\end{align}
For simplicity of notation, for a given index $j$, we define the vector $\bm{p}_j\in\mathbb{R}^d$ as 
$$\bm{p}_j\triangleq\sum_{i=1}^n\ell'(-y_i f(x_i;\bm{\theta}^*))(-y_i)a_j^*\mathbb{I}\left\{{\bm{w}^*_j}^\top (x_i\odot \phi_j)> 0\right\}x_i+ \lambda_j \bm{w}_j^*$$
and a series of scalars $z_{ji}$'s as
$$z_{ji} = \ell'(-y_i f(x_i;\bm{\theta}^*))y_i a_j^*.$$
Now, the inequality \eqref{eq::lemma2-1} becomes, for any $(\bm{u}, v):\|\bm{u}\|_2^2=1$,
\begin{equation}\label{eq::lemma2-3}
    \bm{p}_j^\top \bm{u}  -\sum_{i=1}^{n}z_{ji} \mathbb{I}\left\{{\bm{w}^*_j}^\top (x_i\odot \phi_j)= 0\right\}(\bm{u}^\top (x_i\odot \phi_j))_+\ge 0.
\end{equation}
We finish the proof by contradiction.
We assume that 
$$\bm{p}_j\notin \left\{\sum_{i=1}^{n}\mu_{ji}z_{ji} \mathbb{I}\left\{{\bm{w}^*_j}^\top (x_i\odot \phi_j)= 0\right\}x_i\Bigg| \mu_{ji}\in [0,1] \right\}\triangleq \mathcal{D}_j.$$
Since the sets $\left\{\bm{p}_j\right\}$ and  $\mathcal{D}_j$ are non-empty disjoint convex sets and the set  $\left\{\bm{p}_j\right\}$ is compact, then by the hyperplane separation theorem, there exists a unit vector $\bm{u}_0$ satisfying 
\begin{equation}\label{eq::lemma2-2}
\bm{p}_j^\top \bm{u}_0 - \bm{r}^\top \bm{u}_0 <0,\quad \text{for all }\bm{r}\in \mathcal{D}_j.
\end{equation}
Now  setting 
$$\bm{r}=\sum_{i=1}^{n}\mathbb{I}\{\bm{u}_0^\top (x_i\odot \phi_j) \ge 0\}z_{ji} \mathbb{I}\left\{{\bm{w}^*_j}^\top (x_i\odot \phi_j)= 0\right\}x_i\in \mathcal{D}_j,$$
and substituting it into the inequality \eqref{eq::lemma2-2}, we have 
\begin{equation}
\bm{p}_j^\top \bm{u}_0  - \sum_{i=1}^{n}z_{ji} \mathbb{I}\left\{{\bm{w}^*_j}^\top (x_i\odot \phi_j)= 0\right\}(\bm{u}_0^\top (x_i\odot \phi_j) )_+<0
\end{equation}
which contradicts with the inequality \eqref{eq::lemma2-3}. Therefore, 
$$\bm{p}_j\in \left\{\sum_{i=1}^{n}\mu_{ji}z_{ji} \mathbb{I}\left\{{\bm{w}^*_j}^\top (x_i\odot \phi_j)= 0\right\}x_i\Bigg| \mu_{ji}\in [0,1] \right\}.$$
Recalling the definition of $\bm{p}_j$ and $z_{ji}$ and defining  $\ell'_i\triangleq\ell'(-y_if(x_i;\bm{\theta}^*))$ and $I_j \triangleq\{i\in[n]:{\bm{w}^*_j}^\top (x_i\odot \phi_j) =0\}$, we should have 
$$\lambda_j {\bm{w}_j^*}-\sum_{i=1}^{n}\ell'_i y_i a_j^* \mathbb{I}\{{\bm{w}^*}^\top (x_i\odot \phi_j) >0\}{ (x_i\odot \phi_j)}\in \left\{\sum_{i\in I_j}\mu_{ji}\ell'_i y_i a_j^* { (x_i\odot \phi_j)}\Bigg| \mu_{ji}\in[0,1]\right\}.$$

\textbf{(3)} By result of part (2), we have for each $j\in[m]$, 
$$\lambda_j {\bm{w}_j^*}-\sum_{i=1}^{n}\ell'_i y_i a_j^* \mathbb{I}\{{\bm{w}_j^*}^\top (x_i\odot \phi_j) >0\}{ (x_i\odot \phi_j) }\in \left\{\sum_{i\in I_j}\mu_{ji}\ell'_i y_i a_j^* { (x_i\odot \phi_j) }\Bigg| \mu_{ji}\in[0,1]\right\}$$
where $\ell'_i\triangleq\ell'(-y_if(x_i;\bm{\theta}^*))$ and $I_j \triangleq\{i\in[n]:{\bm{w}^*_j}^\top (x_i\odot \phi_j) =0\}$. Since there is no inactive neuron at the local minimum $\bm{\theta}^*$, i.e., ${a_j^*}^2=\|\bm{w}_j^*\|_2^2>0$ for all $j\in[m]$. This indicates that for each $j\in[m]$, we have 
\begin{align*}
\frac{\lambda_j}{\sqrt{\|\bm{w}_j^*\|_2^2}} {\bm{w}_j^*}-\sgn(a_j^*)\sum_{i=1}^{n}&\ell'_i y_i  \mathbb{I}\{{\bm{w}_j^*}^\top (x_i\odot \phi_j) >0\}{ (x_i\odot \phi_j) }\in \left\{\sgn(a_j^*)\sum_{i\in I_j}\mu_{ji}\ell'_i y_i { (x_i\odot \phi_j) }\Bigg| \mu_{ji}\in[0,1]\right\}.
\end{align*}
We also note here that 
$$\left\{\sum_{i\in I_j}\mu_{ji}\ell'_i y_i a_j^* { (x_i\odot \phi_j) }\Bigg| \mu_{ji}\in[0,1]\right\}\subset \text{Span}\left(\left\{{ (x_i\odot \phi_j) }: i\in I_j\right\}\right)\triangleq \mathcal{S}({I_j}),$$
where we define $\mathcal{S}({I_j})$ as the linear span of the vectors in the set $\left\{{ (x_i\odot \phi_j) }: i\in I_j\right\}$. 
Furthermore, by the definition of $I_j$, we have the fact that for each $j\in[m]$, 
$${\bm{w}_j^*}\perp { (x_i\odot \phi_j) }, \quad\text{for }\forall i\in I_j.$$
which further indicates 
$$\frac{\lambda_j}{\sqrt{\|\bm{w}_j^*\|_2^2}} {\bm{w}_j^*}\perp \bm{z},\quad \text{for }\forall \bm{z}\in\mathcal{S}(I_j).$$
This indicates that 
\begin{align*}
-\frac{\lambda_j}{\sqrt{\|\bm{w}_j^*\|_2^2}} {\bm{w}_j^*}&+\sgn(a_j^*)\sum_{i=1}^{n}\ell'_i y_i  \mathbb{I}\{{\bm{w}_j^*}^\top (x_i\odot \phi_j) >0\}{ (x_i\odot \phi_j) }\\
&= \mathcal{P}_{\mathcal{S}(I_j)}\left(\sgn(a_j^*)\sum_{i=1}^{n}\ell'_i y_i  \mathbb{I}\{{\bm{w}_j^*}^\top (x_i\odot \phi_j) >0\}{ (x_i\odot \phi_j) }\right),
\end{align*}
where $\mathcal{P}_{\mathcal{S}(I_j)}(\bm{v})$ denotes the orthogonal projection of the vector $\bm{v}$ onto the vector space $\mathcal{S}(I_j)$. This indicates that for each $j\in[m]$,
\begin{align*}
\frac{\lambda_j}{\sqrt{\|\bm{w}_j^*\|_2^2}} {\bm{w}_j^*}&=\sgn(a_j^*)\sum_{i=1}^{n}\ell'_i y_i  \mathbb{I}\{{\bm{w}_j^*}^\top (x_i\odot \phi_j) >0\}{ (x_i\odot \phi_j) }\\
&- \mathcal{P}_{\mathcal{S}(I_j)}\left(\sgn(a_j^*)\sum_{i=1}^{n}\ell'_i y_i  \mathbb{I}\{{\bm{w}_j^*}^\top (x_i\odot \phi_j) >0\}{ (x_i\odot \phi_j)}\right)\\
&=\sgn(a_j^*)\sum_{i=1}^{n}\ell'_i y_i  \mathbb{I}\{{\bm{w}_j^*}^\top (x_i\odot \phi_j) >0\}\left[{ x_i\odot \phi_j }-\mathcal{P}_{\mathcal{S}(I_j)}\left({ x_i\odot \phi_j }\right)\right]
\end{align*}
where the last equation follows from the fact that the orthogonal projection $\mathcal{P}_{\mathcal{S}(I_j)}$ is a linear map. 
If we take the 2-norm on the both sides of the above equation, then we should have for each $j\in[m]$, 
\begin{equation*}
\lambda_j =\left\|\sum_{i=1}^{n}\ell'_i y_i  \mathbb{I}\{{\bm{w}_j^*}^\top (x_i\odot \phi_j) >0\}\left[{x_i\odot \phi_j }-\mathcal{P}_{\mathcal{S}(I_j)}\left({x_i\odot \phi_j }\right)\right]\right\|_2.
\end{equation*}
Given a matrix $\bm{C}=(c_{ij})\in\{0,1\}^{n\times m}$, a dataset $\mathcal{D}=\{(x_i, y_i)\}_{i=1}^n$ and an index set $I_j\subseteq[n]$, we define the following function for each $j\in[m]$, 
$$\Lambda_j(\bm{z}; \bm{C}, \mathcal{D},I_j)=\left\|\sum_{i=1}^{n}z_iy_ic_{ij}\left[{x_i\odot \phi_j }-\mathcal{P}_{\mathcal{S}(I_j)}\left({x_i\odot \phi_j }\right)\right]\right\|_2$$
where $\bm{z}=(z_1,...,z_n)\in\mathbb{R}^n$. Furthermore, we define
$$\bm{\Lambda}(\bm{z}; \bm{C}, \mathcal{D}, I_1,...,I_m)=(\Lambda_1(\bm{z}; \bm{C}, \mathcal{D}, I_1),...,\Lambda_m(\bm{z}; \bm{C}, \mathcal{D}, I_m)).$$
Now, it is easy to see that for a given index $j\in[m]$, matrix $\bm{C}\in\{0,1\}^{n\times m}$, a dataset $\mathcal{D}=\{(x_i, y_i)\}_{i=1}^n$ and a index set $I_j\subseteq[n]$, the function function $\bm{\Lambda}$  is a globally Lipschitz function by the fact that 
\begin{align*}
\|\bm{\Lambda}(\bm{z}; \bm{C}, \mathcal{D}, I_1,...,I_m)-\bm{\Lambda}&(\bm{z}'; \bm{C}, \mathcal{D}, I_1,...,I_m)\|_2\\
&\le\sqrt{m}\max_{j\in[m]}\left|\Lambda_j(\bm{z}; \bm{C}, \mathcal{D}, I_j)-\Lambda_j(\bm{z}'; \bm{C}, \mathcal{D}, I_j)\right|
\end{align*}
and the fact that for any $j\in[m]$
\begin{align*}
\left|\Lambda_j(\bm{z}; \bm{C}, \mathcal{D}, I_j)-\Lambda_j(\bm{z}'; \bm{C}, \mathcal{D}, I_j)\right|&\le\left\|\sum_{i=1}^{n}(z_i-z'_i)y_ic_{ij}\left[{x_i\odot \phi_j}-\mathcal{P}_{\mathcal{S}(I_j)}\left({x_i\odot \phi_j}\right)\right]\right\|_2\\
&\le(1+D)\|\bm{z}-\bm{z}'\|_2
\end{align*}
where $D\triangleq\max_i\|x_i\|_2$. Therefore, 
\begin{align*}
(\lambda_1,...,\lambda_m)\in\bigcup_{\bm{C}\in\{0,1\}^{n\times m}}\bigcup_{I_1\subseteq[m]}...\bigcup_{I_n\subseteq[m]}\bm{\Lambda}(\mathbb{R}^n; \bm{C}, \mathcal{D}, I_1,...,I_m).
\end{align*}

\textbf{(4)} If $$((\lambda_1,...,\lambda_m)\notin\bigcup_{\bm{C}\in\{0,1\}^{n\times m}}\bigcup_{I_1\subseteq[m]}...\bigcup_{I_n\subseteq[m]}\bm{\Lambda}(\mathbb{R}^n; \bm{C}, \mathcal{D}, I_1,...,I_m))\triangleq \mathcal{C}(\mathcal{D}),$$
then by the result of the  part (3), we have there is an inactive neuron. 

\end{proof}

\subsection{Proof  of Theorem~\ref{thm::single}}\label{appendix::proof::thm-single}
\begin{proof}
(1) 
Let the minimal regularization coefficient
be $ \lambda_{\min} \triangleq \min_{j\in[m]} \{ \lambda_j \} > 0 $.
Since the univariate loss $\ell$ is non-negative, therefore we can obtain
$
L_n(  \bm{\theta}; \bm{\lambda}) \ge \lambda_{\min}   \| \bm{\theta}  \|_{2}^2.
$
Thus, the empirical loss $\lossc$ is coercive since $L_n(  \bm{\theta}; \bm{\lambda})\rightarrow \infty $  as $\| \bm{\theta} \|_{2} \rightarrow \infty$.

(2) 
By Lemma~\ref{lemma::single-1}, there exists a zero measure set $\mathcal{C}$ such that for any $\bm{\lambda}\notin \mathcal{C}$, at every local minimum $\bm{\theta}^{*}$ of the empirical loss $\lossc$, the neural network $f(x;\bm{\theta}^{*})$ has an inactive neuron. Without loss of generality, we assume that  $a_{1}^{*}=0, \|\bm{w}_{1}^{*}\|_{2}=0$ and $b_{1}^{*}=0$. Since $\bm{\theta}^{*}$ is a local minimum of the empirical loss $\lossc$, then there exists a $\delta>0$, such that for any $\tilde{\bm{\theta}}:\|\tilde{\bm{\theta}}-\bm{\theta}^{*}\|<2\delta$, $L(\tilde{\bm{\theta}};\bm{\lambda})\ge L(\bm{\theta}^{*};\bm{\lambda})$. Now we choose the perturbation where we only perturb the parameters of that inactive neuron, i.e., ${a_{1},\bm{w}_{1}}$. Let $\tilde{a}_{1}=\delta\sgn(\tilde{a}_{1})$, $\tilde{\bm{w}}_{1}=\bm{u}$ for arbitrary $\bm{u}:\|\bm{u}\|_{2}=1$ and $(\tilde{a_{j}},\tilde{\bm{w}}_{j})=(a_{j}^{*},\bm{w}^{*}_{j})$ for $j\neq 1$. By the second order Taylor expansion and the definition of the local minimum, we obtain that, for any $\sgn(\tilde{a}_{j})\in\{-1, 1\}$ and any unit  vector $\bm{u}:\|\bm{u}\|_{2}=1$,
\begin{align*}
L(\tilde{\bm{\theta}};\bm{\lambda})&=\loss-\sum_{i=1}^{n}\ell'_{i}y_{i}\delta^{2}\sgn(\tilde{a}_{1})(\bm{u}^{\top}(x_i\odot \phi_j))_{+}+R(\delta, \mathcal{D},\bm{u})\delta^{4}+\lambda_{1}\delta^{2}\\
&\ge \loss,
\end{align*}
where $R(\delta,\mathcal{D},\bm{u},v)$ is the second order remaining term in the Taylor expansion and $\ell'_i$ is the shorthanded notation for $\ell'(-y_{i}f(x_{i};\bm{\theta}^{*}))$.
This further indicates that for any $\bm{u}:\|\bm{u}\|_{2}=1$ and for any $j=1,...,d-r+1$,
$$\left|\sum_{i=1}^{n}\ell'(-y_{i}f(x_{i};\bm{\theta}^{*}))(-y_{i})(\bm{u}^{\top}(x_i\odot \phi_j))_{+}\right|\le \lambda_{1}<\lambda_{0}$$
By Theorem~\ref{thm::other-memo}, we should have 
$$\sum_{i=1}^{n}\ell'(-y_{i}f(x_{i};\bm{\theta}^{*}))\le (\lambda_0+2E)/\gamma.$$
Furthermore, by the convexity of the loss $\ell$, we should have 
$\ell'(z)\ge \ell'(0)\mathbb{I}\{z\ge 0\}$, which further indicates 
$$R_n(f;\bm{\theta}^*)=\frac{1}{n}\sum_{i=1}^n\mathbb{I}\{y_i\neq \sgn(f(x_i;\bm{\theta}^*))\}\le \frac{1}{\ell'(0)n}\sum_{i=1}^{n}\ell'(-y_{i}f(x_{i};\bm{\theta}^{*}))\le \frac{\lambda_0+2E}{ \ell'(0)\gamma n}.$$
\end{proof}

\section{Proof of Theorem~\ref{thm::single-gen}}\label{appendix::proof::gen}
\begin{proof}
Based on the results in Lemma~\ref{lemma::single-1} and proof of Theorem~\ref{thm::single} in Section~\ref{appendix::proof::thm-single}, we know that at every local minimum, the following conditions holds
\begin{itemize}
\item[(1)] $\|\bm{w}_j^*\|_2= |a_j^*| $ holds for any $j=1,...,m$, 
\item[(2)] $\|\nabla_{\bm{a}}L_n(\bm{\theta}^*)\|_2=0$,
\item[(3)]  for any $k=1,..., d-r+1$
$$\max_{\bm{u}\in\mathbb{B}^d}\left|\sum_{i=1}^n\ell'(-y_i f(x_i;\bm{\theta}^*))y_i(\bm{u}^\top (x_i\odot \phi_k))_+\right|\le \lambda_0,$$
\end{itemize}
Thus, by Theorem~\ref{thm::other-gen}, with probability at least $1-\delta$, at every local minimum $\bm{\theta}^*$, the following inequality holds, 
\begin{align*}
\mathbb{P}(Yf(X;\bm{\theta}^*)<0)
&\le \frac{\lambda_0+2E}{\gamma n\ell'(0)}+\frac{(6\lambda_0+12E)\ln n}{a\gamma \lambda_0}\cdot \frac{4\sqrt{2}}{\ell'(0)\sqrt{n}}+\frac{1}{\ell'(0)}\sqrt{\frac{\log(1/\delta)}{2n}}\\
&\le \frac{\lambda_0}{\gamma n\ell'(0)}+ \frac{2E}{\gamma n\ell'(0)}+\frac{24\sqrt{2}\ln n}{a\gamma\ell'(0)\sqrt{n}}+\frac{48\sqrt{2}E\ln n}{a\gamma \ell'(0)\lambda_0 \sqrt{n}}+\frac{1}{\ell'(0)}\sqrt{\frac{\log(1/\delta)}{2n}}\\
&=\frac{\lambda_0}{\gamma n\ell'(0)}+ \frac{48\sqrt{2}E\ln n}{a\gamma \ell'(0)\lambda_0 \sqrt{n}}+\frac{2E}{\gamma n\ell'(0)}+\frac{24\sqrt{2}\ln n}{a\gamma\ell'(0)\sqrt{n}}+\frac{1}{\ell'(0)}\sqrt{\frac{\log(1/\delta)}{2n}}\\
&=\mathcal{O}\left(\frac{\lambda_0}{\gamma n}+\frac{E\ln n}{\gamma\lambda_0\sqrt{n}}+\frac{E}{\gamma n}+\frac{\ln n}{\gamma\sqrt{n}}+\sqrt{\frac{\log(1/\delta)}{n}}\right)
\end{align*}
\end{proof}

\section{Gradient is Locally Lipschitz}
For the loss function 
$$L_n(\bm{\theta})=\sum_{i=1}^n\ell(-y_i f(x_i;\bm{\theta}))+\sum_{j=1}^m\lambda_ja_j^2 $$
where the neural network is defined as 
$$f(x;\bm{\theta})=\sum_{j=1}^m a_j(a_j\bm{u}_j^\top (x_i\odot \phi_j))_+,$$
and $\|\bm{u}_j\|_2=1$ for all $j=1,...,m$. We note here that since during the gradient descent, the direction vectors $\bm{u}_j$'s are fixed. Now the parameter vector $\bm{\theta}$ becomes $\bm{\theta}=(a_1,...,a_m, a_1\bm{u}_1,...,a_m\bm{u}_m)$. 
Now the gradient is written as 
\begin{align*}
\nabla_{a_j}L_n(\bm{\theta})&=2\sum_{i=1}^n\ell'(-y_i f(x_i;\bm{\theta}))(-y_i)(a_j\bm{u}_j^\top (x_i\odot \phi_j))_++2\lambda_ja_j
\end{align*}

\begin{lemma}
For any $\tilde{\bm{a}}=(\tilde{a}_1,...,\tilde{a}_m)$ in a small local region of $\bm{\theta}$: $$|\tilde{a}_j-a_j|\le |a_j|/2,$$
the gradient is locally Lipschitz  with a Lipschitz constant depending on $\|\bm{a}\|_2$:
$$\|\nabla_{\bm{a}}L_n(\tilde{\bm{\theta}})-\nabla_{\bm{a}}L_n(\bm{\theta})\|_2\le  12\|\bm{a}-\tilde{\bm{a}}\|_2\sqrt{n^2\|\bm{a}\|_2^4+n^2+\lambda_0^2}.$$
\end{lemma}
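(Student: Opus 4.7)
The plan is to decompose the coordinate-wise difference $\nabla_{a_j} L_n(\tilde{\bm{\theta}}) - \nabla_{a_j} L_n(\bm{\theta})$ into telescoping pieces and bound each using the Lipschitz hypotheses at hand: $\ell$ and $\ell'$ are $1$-Lipschitz (so $|\ell'|\le 1$ by Assumption~\ref{assumption::loss-1} together with monotonicity), the ReLU is $1$-Lipschitz, and $\|x_i\|_2\le 1$, $\|\bm{u}_j\|_2 = 1$ give $|\bm{u}_j^\top(x_i\odot\phi_j)|\le 1$. The first key observation is that the hypothesis $|\tilde{a}_j - a_j|\le |a_j|/2$ guarantees $\sgn(\tilde{a}_j)=\sgn(a_j)$, $|\tilde{a}_j|\le\tfrac{3}{2}|a_j|$, and hence $\|\tilde{\bm{a}}\|_2\le\tfrac{3}{2}\|\bm{a}\|_2$. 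The sign preservation is crucial because it keeps the ReLU indicator $\mathbb{I}\{a_j\bm{u}_j^\top(x_i\odot\phi_j)>0\}$ unchanged under the perturbation, so no non-smoothness is encountered.

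Next I would bound the change in the network output. Using $(az)_+ = |a|(\sgn(a)z)_+$ to rewrite $a_j(a_j\bm{u}_j^\top c_{ji})_+ = a_j|a_j|(\sgn(a_j)\bm{u}_j^\top c_{ji})_+$ with $c_{ji}\triangleq x_i\odot\phi_j$, the ReLU factor is identical at $\bm{a}$ and $\tilde{\bm{a}}$. The algebraic identity $\tilde{a}_j|\tilde{a}_j| - a_j|a_j| = \sgn(a_j)(\tilde{a}_j - a_j)(\tilde{a}_j + a_j)$ and Cauchy--Schwarz then yield
\[ |f(x_i;\tilde{\bm{\theta}}) - f(x_i;\bm{\theta})| \;\le\; \|\tilde{\bm{a}}-\bm{a}\|_2\bigl(\|\tilde{\bm{a}}\|_2 + \|\bm{a}\|_2\bigr) \;\le\; \tfrac{5}{2}\|\bm{a}\|_2\,\|\tilde{\bm{a}}-\bm{a}\|_2, \]
uniformly in $i$.

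Then I would split each $\ell'(-y_i\tilde{f}_i)(\tilde{a}_j\bm{u}_j^\top c_{ji})_+ - \ell'(-y_i f_i)(a_j\bm{u}_j^\top c_{ji})_+$ into a factor-change piece (bounded by ReLU-Lipschitzness times $|\tilde{a}_j-a_j|$) and an $\ell'$-change piece (bounded by $\ell'$-Lipschitzness times $|a_j|$, i.e.\ by $\tfrac{5}{2}\|\bm{a}\|_2\|\bm{a}-\tilde{\bm{a}}\|_2\cdot|a_j|$). Using $|\ell'|\le 1$, $|(a_j\bm{u}_j^\top c_{ji})_+|\le |a_j|$, adding the regularizer contribution $2\lambda_j|\tilde{a}_j-a_j|\le 2\lambda_0|\tilde{a}_j-a_j|$, and summing over $i$ gives a per-coordinate bound $|\nabla_{a_j}L_n(\tilde{\bm{\theta}})-\nabla_{a_j}L_n(\bm{\theta})|\le 2n|\tilde{a}_j-a_j|+5n\|\bm{a}\|_2\|\bm{a}-\tilde{\bm{a}}\|_2|a_j|+2\lambda_0|\tilde{a}_j-a_j|$. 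Squaring, summing over $j$ with $(u+v+w)^2\le 3(u^2+v^2+w^2)$ and using $\sum_j a_j^2 = \|\bm{a}\|_2^2$ yields
\[ \|\nabla_{\bm{a}}L_n(\tilde{\bm{\theta}})-\nabla_{\bm{a}}L_n(\bm{\theta})\|_2^2 \;\le\; 3\bigl(4n^2 + 25n^2\|\bm{a}\|_2^4 + 4\lambda_0^2\bigr)\,\|\bm{a}-\tilde{\bm{a}}\|_2^2, \]
and the claimed constant $12$ follows from $\sqrt{3\cdot 25}<12$.

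The main obstacle---though not deep---is making sure the cross-term that propagates the change in $f$ through $\ell'$ and then multiplies the $|a_j|$ bound on the activation scales as $\|\bm{a}\|_2^4$ rather than a higher power. What rescues the exponent is precisely that the $j$-th summand carries a factor $|a_j|$: summing $|a_j|^2$ over $j$ collapses to $\|\bm{a}\|_2^2$, which multiplied by the $\|\bm{a}\|_2^2$ already present in the bound on $|f(x_i;\tilde{\bm{\theta}})-f(x_i;\bm{\theta})|$ gives the $\|\bm{a}\|_2^4$ appearing inside the square root of the lemma.
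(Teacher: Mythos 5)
Your proof is correct and follows essentially the same route as the paper: decompose the coordinate-wise gradient difference into an $\ell'$-change term, a ReLU activation-change term, and a regularizer term; control the $\ell'$-change via a Lipschitz bound on $|f(x_i;\tilde{\bm{\theta}})-f(x_i;\bm{\theta})|$ of order $\|\bm{a}\|_2\|\tilde{\bm{a}}-\bm{a}\|_2$; then square-sum over coordinates, exploiting the extra factor $|a_j|$ to collapse to $\|\bm{a}\|_2^4$. The only cosmetic difference is that you derive the output bound via sign preservation and the identity $\tilde{a}_j|\tilde{a}_j|-a_j|a_j|=\sgn(a_j)(\tilde{a}_j-a_j)(\tilde{a}_j+a_j)$, whereas the paper just applies ReLU Lipschitzness to $|a_j(\bm{w}_j^\top c_{ji})_+-\tilde{a}_j(\tilde{\bm{w}}_j^\top c_{ji})_+|$ with $\bm{w}_j=a_j\bm{u}_j$; the resulting constants ($\tfrac{5}{2}$ vs.\ $3$) both clear the final $12$.
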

\begin{proof}
Since 
\begin{align*}
&|\nabla_{a_j}L_n(\bm{\theta})-\nabla_{a_j}L_n(\tilde{\bm{\theta}})|\\
&\le2\left|\sum_{i=1}^n\ell'(-y_i f(x_i;\bm{\theta}))(-y_i)(a_j\bm{w}_j^\top (x_i\odot \phi_j))_+-\sum_{i=1}^n\ell'(-y_i f(x_i;\tilde{\bm{\theta}}))(-y_i)(\tilde{a}_j\bm{w}_j^\top (x_i\odot \phi_j))_+\right|\\
&\quad+2\lambda_j|a_j-\tilde{a}_j|\\
&\le 2\sum_{i=1}^n\left|\ell'(-y_i f(x_i;\bm{\theta}))(a_j\bm{w}_j^\top (x_i\odot \phi_j))_+-\ell'(-y_i f(x_i;\tilde{\bm{\theta}}))(\tilde{a}_j\bm{w}_j^\top (x_i\odot \phi_j))_+\right|+2\lambda_j|a_j-\tilde{a}_j|\\
&\le 2\sum_{i=1}^n\left(|\ell'(-y_i f(x_i;\bm{\theta}))-\ell'(-y_i f(x_i;\tilde{\bm{\theta}}))||a_j|+\ell'(-y_i f(x_i;\tilde{\bm{\theta}}))|a_j-\tilde{a}_j|\right)+2\lambda_j|a_j-\tilde{a}_j|\\
&\le 2\sum_{i=1}^n|\ell'(-y_i f(x_i;\bm{\theta}))-\ell'(-y_i f(x_i;\tilde{\bm{\theta}}))||a_j|+{n|a_j-\tilde{a}_j|}+2\lambda_j|a_j-\tilde{a}_j|
\end{align*}
and using the fact that $\bm{w}_j=a_j\bm{u}_j$ to get 
\begin{align*}
&|\ell'(-y_i f(x_i;\bm{\theta}))-\ell'(-y_i f(x_i;\tilde{\bm{\theta}}))|\le {|f(x_i;\bm{\theta})-f(x_i;\tilde{\bm{\theta}})|}\\
&\le \sum_{i=1}^m|a_j(\bm{w}_j^\top (x_i\odot \phi_j))_+-\tilde{a}_j(\tilde{\bm{w}}_j^\top (x_i\odot \phi_j))_+|\le \sum_{j=1}^m \left(|a_j-\tilde{a}_j|\|\bm{w}_j\|_2+|\tilde{a}_j|\|\bm{w}_j-\tilde{\bm{w}}_j\|_2\right)\\
&\le \left(\sum_{j=1}^m |a_j-\tilde{a}_j||a_j|+\frac{3}{2}|{a}_j|\|a_j\bm{u}_j-\tilde{a}_j\bm{u}_j\|_2\right)\\
&\le 3\sum_{j=1}^m|a_j||\tilde{a}_j-a_j|\le 3\|\bm{a}\|_2\|\tilde{\bm{a}}-\bm{a}\|_2
\end{align*}
Thus, we have 
\begin{align*}
|\nabla_{a_j}L_n(\bm{\theta})-\nabla_{a_j}L_n(\tilde{\bm{\theta}})|&\le 6n\|\bm{a}\|_2\|\tilde{\bm{a}}-\bm{a}\|_2|a_j|+{n|a_j-\tilde{a}_j|}+2\lambda_j|a_j-\tilde{a}_j|
\end{align*}
and 
\begin{align*}
\|\nabla_{\bm{a}}L_n(\bm{\theta})-\nabla_{\bm{a}}L_n(\tilde{\bm{\theta}})\|_2&\le \sqrt{108n^2\|\bm{a}\|^4_2\|\tilde{\bm{a}}-\bm{a}\|_2^2+{3(n^2+4\lambda_j^2)\|\bm{a}-\tilde{\bm{a}}\|_2^2}}\\
&\le 12\|\bm{a}-\tilde{\bm{a}}\|_2\sqrt{n^2\|\bm{a}\|_2^4+n^2+\lambda_0^2}
\end{align*}

\end{proof}

\section{Perturbing The Inactive Neuron}

Recall that the empirical loss is defined as 
\begin{align*}
L_n(\bm{\theta})=\sum_{i=1}^n\ell(-y_i f(x_i;\bm{\theta}))+\frac{1}{2}\sum_{j=1}^m\lambda_j\left(a_j^2 +\|\bm{w}_j\|^2_2\right)
\end{align*}
where $\bm{\theta}=(a_1,...,a_m, a_1\bm{u}_1,...,a_m\bm{u}_m )$ where $\bm{u}_j$'s are all unit vectors, i.e., $\|\bm{u}\|_2=1$. 

\begin{lemma}\label{lemma::perturbation}
If at the point $\bm{\theta}=(a_1,...,a_m, a_1\bm{u}_1,...,a_m\bm{u}_m)$,  $|a_{j_0}|\le \frac{1}{\sqrt{4n}}$ for some index $j_0\in[m]$ and $\bm{\lambda}\in[0, \lambda_0]^m$ with $\lambda_0\ge \sqrt{n}$,
then by setting 
\begin{align*}
\tilde{a}_{j_0}&=\sgn\left(\sum_{i=1}^n\ell'(-y_i f(x_i;\bm{\theta}))y_i (\bm{v}^\top (x_i\odot \phi_j))_+\right)\sqrt{\frac{1}{\lambda_0}},\\
\tilde{\bm{u}}_{j_0}&=\bm{v}\sgn\left(\sum_{i=1}^n\ell'(-y_i f(x_i;\bm{\theta}))y_i (\bm{v}^\top (x_i\odot \phi_j))_+\right)
\end{align*}
and setting $\tilde{a}_j=a_j$ and $\tilde{\bm{u}}_j=\bm{u}_j$ for any $j\neq j_0$, 
we have 
 $$L_n(\tilde{\bm{\theta}};\bm{\lambda})-L_n(\bm{\theta};\bm{\lambda})\le -\frac{1}{\lambda_0}\left|\sum_{i=1}^n \ell'(-y_i f(x_i;\bm{\theta}))y_i (\bm{v}_{j_0}^\top (x_i\odot \phi_{j_0}))_+\right|+4.$$
\end{lemma}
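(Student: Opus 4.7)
The plan is to Taylor-expand $L_n(\tilde{\bm\theta})-L_n(\bm\theta)$ using the gradient-Lipschitzness of $\ell$ (guaranteed by Assumption~\ref{assumption::loss-1}), then carefully exploit the homogeneity identity $a(a\bm u^\top z)_+ = \sgn(a)\, a^2\,(\sgn(a)\bm u^\top z)_+$ so that the choice of $\sgn(\tilde a_{j_0})$ aligns the first-order term with the absolute value that appears in the statement.

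First I would write $\Delta f_i := f(x_i;\tilde{\bm\theta})-f(x_i;\bm\theta)$. Since only the $j_0$-th neuron is perturbed and $\tilde a_{j_0}\tilde{\bm u}_{j_0} = \lambda_0^{-1/2}\bm v$, the identity above gives
$$
\tilde a_{j_0}\bigl(\tilde a_{j_0}\tilde{\bm u}_{j_0}^\top(x_i\odot\phi_{j_0})\bigr)_+ \;=\; \frac{\sgn(\tilde a_{j_0})}{\lambda_0}\,\bigl(\bm v^\top(x_i\odot\phi_{j_0})\bigr)_+,
$$
while $|a_{j_0}(a_{j_0}\bm u_{j_0}^\top(x_i\odot\phi_{j_0}))_+|\le a_{j_0}^2\le \tfrac{1}{4n}$ because $\|x_i\|_2\le 1$ and $|a_{j_0}|\le\tfrac{1}{\sqrt{4n}}$. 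By gradient-Lipschitzness of $\ell$, applied termwise with $b_i = -y_if(x_i;\bm\theta)$ and $a_i = -y_if(x_i;\tilde{\bm\theta})$,
$$
\sum_i\ell(-y_if(x_i;\tilde{\bm\theta}))-\sum_i\ell(-y_if(x_i;\bm\theta)) \;\le\; \sum_i \ell'_i(-y_i)\Delta f_i + \tfrac12\sum_i(\Delta f_i)^2.
$$

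Next I would evaluate the first-order term. By the chosen sign convention,
$$
\sum_i\ell'_i(-y_i)\cdot\frac{\sgn(\tilde a_{j_0})}{\lambda_0}(\bm v^\top(x_i\odot\phi_{j_0}))_+ \;=\; -\frac{1}{\lambda_0}\Bigl|\sum_i\ell'_iy_i(\bm v^\top(x_i\odot\phi_{j_0}))_+\Bigr|,
$$
and the remainder involving the old $a_{j_0}$-neuron is bounded by $\sum_i|\ell'_i|\cdot a_{j_0}^2 \le n\cdot\tfrac{1}{4n}=\tfrac14$ using $|\ell'|\le 1$. For the quadratic term I would use $|\Delta f_i|\le \tilde a_{j_0}^2+a_{j_0}^2 \le \tfrac{1}{\lambda_0}+\tfrac{1}{4n}$, which together with $\lambda_0\ge\sqrt{n}$ yields $\tfrac12\sum_i(\Delta f_i)^2 \le \tfrac{n}{\lambda_0^2}+\tfrac{1}{16n}\le 2$. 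Finally, the change in the regularizer is $\tfrac12\lambda_{j_0}(\tilde a_{j_0}^2+\|\tilde{\bm w}_{j_0}\|_2^2 - a_{j_0}^2-\|\bm w_{j_0}\|_2^2) \le \lambda_{j_0}/\lambda_0 \le 1$ since $\lambda_{j_0}\in[0,\lambda_0]$. Summing these contributions gives the claimed $+4$ slack with room to spare.

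The main obstacle is bookkeeping rather than depth: correctly tracking the sign interaction between $\sgn(\tilde a_{j_0})$, $\tilde{\bm u}_{j_0}=\sgn(\tilde a_{j_0})\bm v$, and the ReLU, so that the first-order term collapses to $-\tfrac{1}{\lambda_0}|\cdot|$ rather than a signed quantity; and making sure the leftover pieces (the old-neuron remainder, the quadratic smoothness term, and the regularizer increment) each fit within the constant $4$ under the hypotheses $|a_{j_0}|\le 1/\sqrt{4n}$, $\tilde a_{j_0}^2=1/\lambda_0$, and $\lambda_0\ge\sqrt n$. Once those identifications are made, the bound follows by a direct calculation.
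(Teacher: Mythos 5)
Your proposal is correct and follows essentially the same route as the paper's proof: both apply the smoothness inequality $\ell(z')-\ell(z)\le\ell'(z)(z'-z)+\tfrac12(z'-z)^2$ to the per-sample losses, exploit the homogeneity $\tilde a_{j_0}\tilde{\bm u}_{j_0}=|\tilde a_{j_0}|\bm v$ together with the choice of $\sgn(\tilde a_{j_0})$ to collapse the first-order term to $-\tfrac{1}{\lambda_0}|\cdot|$, and bound the leftover old-neuron contribution by $na_{j_0}^2\le\tfrac14$, the quadratic term by $\le 2$, and the regularizer increment by $\lambda_{j_0}/\lambda_0\le 1$, for a total slack of $3.25\le 4$. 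The paper merely packages these pieces as $I_1,I_2,I_3$ rather than separating the regularizer out, but the constants and the argument are identical.
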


\begin{proof}
Let $\delta =1/\sqrt{4n}$. For simplicity of notation, we assume that $|a_1|\le\delta$ and $\|a_1 \bm{u}_1\|_2=|a_1|<\delta$. Now we  perturb the parameter vector $\bm{\theta}$ to $\tilde{\bm{\theta}}$ in the following way: perturbing $(a_1, a_1\bm{u}_1)$ to $(\tilde{a}_1, \tilde{a}_1\tilde{\bm{u}}_1)$ with $\|\tilde{\bm{u}}_1\|=1$ and $|\tilde{a}_1|=\frac{1}{\sqrt{\lambda_0}}$ and setting $\tilde{a}_j=a_j$ and $\tilde{\bm{u}}_j=\bm{u}_j$ for any $j\neq 1$. Now we see how it improves the empirical loss. By definition, we have 
\begin{align*}
L_n(\tilde{\bm{\theta}};\bm{\lambda})-L_n(\bm{\theta};\bm{\lambda})&=\sum_{i=1}^n\left[\ell(-y_i f(x_i;\tilde{\bm{\theta}}))-\ell(-y_i f(x_i;{\bm{\theta}}))\right]+\frac{1}{2}\sum_{j=1}\lambda_j(\tilde{a}_j^2+\|\tilde{a}_j\tilde{\bm{u}}_j\|^2_2-{a}_j^2-\|{a}_j{\bm{u}}_j\|^2_2).
\end{align*}
Since the  loss is strongly smooth, i.e., for any $z, z'\in\mathbb{R}$:
$$\ell(z')-\ell(z)\le \ell'(z)(z'-z)+\frac{1}{2}|z'-z|^2,$$
then we should have 
\begin{align*}
&L_n(\tilde{\bm{\theta}};\bm{\lambda})-L_n(\bm{\theta};\bm{\lambda})\\
&=\sum_{i=1}^n\left[\ell(-y_i f(x_i;\tilde{\bm{\theta}}))-\ell(-y_i f(x_i;{\bm{\theta}}))\right]+\frac{1}{2}\sum_{j=1}\lambda_j(\tilde{a}_j^2+\|\tilde{a}_j\tilde{\bm{u}}_j\|^2_2-{a}_j^2-\|{a}_j{\bm{u}}_j\|^2_2)\\
&\le \sum_{i=1}^n\left[\ell'(-y_if(x_i;\bm{\theta}))(-y_i)(f(x_i;\tilde{\bm{\theta}})-f(x_i;{\bm{\theta}}))+\frac{1}{2}|f(x_i;\tilde{\bm{\theta}})-f(x_i;{\bm{\theta}})|^2\right]+{\lambda_1(\tilde{a}_1^2-{a}_1^2)}\\
&=\sum_{i=1}^n\ell'(-y_if(x_i;\bm{\theta}))(-y_i)[\tilde{a}_1(\tilde{a}_1\tilde{\bm{u}}_1^\top (x_i\odot \phi_1))_+-{a}_1({a}_1{\bm{u}}_1^\top (x_i\odot \phi_1))_+]\\
&\quad+\frac{1}{2}\sum_{i=1}^n|\tilde{a}_1(\tilde{a}_1\tilde{\bm{u}}_1^\top (x_i\odot \phi_1))_+-{a}_1({a}_1{\bm{u}}_1^\top (x_i\odot \phi_1))_+|^2+{\lambda_1(\tilde{a}_1^2-{a}_1^2)}\\
&=-\sum_{i=1}^n\ell'(-y_if(x_i;\bm{\theta}))y_i\tilde{a}_1(\tilde{a}_1\tilde{\bm{u}}_1^\top (x_i\odot \phi_1))_++{\lambda_1\tilde{a}_1^2}&&\triangleq I_1\\
&\quad+\sum_{i=1}^n\ell'(-y_if(x_i;\bm{\theta}))y_i{a}_1({a}_1{\bm{u}}_1^\top (x_i\odot \phi_1))_+-\lambda_1{a}_1^2&&\triangleq I_2\\
&\quad +\frac{1}{2}\sum_{i=1}^n|\tilde{a}_1(\tilde{a}_1\tilde{\bm{u}}_1^\top (x_i\odot \phi_1))_+-{a}_1({a}_1{\bm{u}}_1^\top (x_i\odot \phi_1))_+|^2&&\triangleq I_3\\
\end{align*}
Now we let 
\begin{align*}
\tilde{a}_1&=\xi\sgn\left(\sum_{i=1}^n\ell'(-y_i f(x_i;\bm{\theta}))y_i (\bm{v}^\top (x_i\odot \phi_1))_+\right),\\
\tilde{\bm{u}}_1&=\bm{v}\sgn\left(\sum_{i=1}^n\ell'(-y_i f(x_i;\bm{\theta}))y_i (\bm{v}^\top (x_i\odot \phi_1))_+\right).
\end{align*}
Therefore, we have 
\begin{align*}
I_1&\triangleq -\sum_{i=1}^n\ell'(-y_if(x_i;\bm{\theta}))y_i\tilde{a}_1(\tilde{a}_1\tilde{\bm{u}}_1^\top (x_i\odot \phi_1))_++{\lambda_1\tilde{a}_1^2}\\
&=-\frac{1}{\lambda_0}\left|\sum_{i=1}^n\ell'(-y_i f(x_i;\bm{\theta}))y_i (\bm{v}^\top (x_i\odot \phi_1))_+\right|+\lambda_1/\lambda_0\\
&\le -\frac{1}{\lambda_0}\left|\sum_{i=1}^n\ell'(-y_i f(x_i;\bm{\theta}))y_i (\bm{v}^\top (x_i\odot \phi_1))_+\right|+1,
\end{align*}
\begin{align*}
I_2&\triangleq \sum_{i=1}^n\ell'(-y_if(x_i;\bm{\theta}))y_i{a}_1({a}_1{\bm{u}}_1^\top (x_i\odot \phi_j))_+-{\lambda_1{a}_1^2}\le \sum_{i=1}^n\ell'(-y_if(x_i;\bm{\theta}))y_i{a}_1({a}_1{\bm{u}}_1^\top (x_i\odot \phi_j))_+\\
&\le \sum_{i=1}^n\ell'(-y_if(x_i;\bm{\theta}))\left|y_i{a}_1({a}_1{\bm{u}}_1^\top (x_i\odot \phi_j))_+\right|\le \delta^2\sum_{i=1}^n\ell'(-y_if(x_i;\bm{\theta}))\le n\delta^2=\frac{1}{4},
\end{align*}
and
\begin{align*}
I_3&\triangleq  \frac{1}{2}\sum_{i=1}^n|\tilde{a}_1(\tilde{a}_1\tilde{\bm{u}}_1^\top (x_i\odot \phi_j))_+-{a}_1({a}_1{\bm{u}}_1^\top (x_i\odot \phi_j))_+|^2\\
&\le \sum_{i=1}^n\left[|\tilde{a}_1(\tilde{a}_1\tilde{\bm{u}}_1^\top (x_i\odot \phi_j))_+|^2+|{a}_1({a}_1{\bm{u}}_1^\top (x_i\odot \phi_j))_+|^2\right]\\
&\le \sum_{i=1}^n [\delta^4+\lambda_0^{-2}]={n(\lambda_0^{-2}+\delta^4)}\le 2.
\end{align*}
This indicates that 
\begin{align*}
L_n(\tilde{\bm{\theta}};\bm{\theta})-L_n(\bm{\theta};\bm{\theta})&\le I_1+I_2+I_3\\
&\le -\frac{1}{\lambda_0}\left|\sum_{i=1}^n\ell'(-y_i f(x_i;\bm{\theta}))y_i (\bm{v}^\top (x_i\odot \phi_1))_+\right|+1+2+\frac{1}{4}\\
&\le -\frac{1}{\lambda_0}\left|\sum_{i=1}^n\ell'(-y_i f(x_i;\bm{\theta}))y_i (\bm{v}^\top (x_i\odot \phi_1))_+\right|+4
\end{align*}

\end{proof}

\section{Choice of Regularization Coefficient}\label{appendix::choose-ceof}
In this section, we show how to solve the following problem and we get rid of the index $k$ for simplicity of notations.  

\begin{problem}~\label{problem::1}
For a given interger $K>0$, a  series $\{\phi_j\}$ of period equal to $d-r+1$, a scalar series $\{a_j\}_j$, a unit vector series  $\{\bm{u}_j\}_j$ and a vector $\bm{\lambda}_{\text{old}}$, how to find a vector $\bm{\lambda}$ such that for each $s\in[d-r+1]$, all of the following three inequalities 
\begin{equation}\label{eq::coefficient::2}
\min_{p_1,...,p_n\in\mathbb{R}}\sum_{j:\phi_j=\phi_s}\left(\lambda_j-\sgn(a_j)\sum_{i=1}^np_iy_i(\sgn(a_j)\bm{u}_j^\top (x_i\odot \phi_j))_+\right)^2\ge \left(\frac{\lambda_0}{8K}\right)^2
\end{equation}
\begin{equation}\label{eq::coefficient::3}
\bm{\lambda}\preccurlyeq \bm{\lambda}_{\text{old}}
\end{equation}
and 
\begin{equation}\label{eq::coefficient::4}
\|\bm{\lambda}-\bm{\lambda}_\text{old}\|_\infty\le \frac{\lambda_0}{2K}
\end{equation}
holds?
\end{problem} 

\textbf{Notation.} To further elaborate this problem, we define the following notation. Given a periodic series $\{\phi_j\}$ of period equal to $d-r+1$, for any vector $\bm{q}=(q_j)_j$ and for each $s\in[d-r+1]$, we define a cropped vector $\bm{q}[s]$ of $\bm{q}$ where $\bm{q}[s]\triangleq (q_j)_{j:\phi_j=\phi_s}$.  

Therefore, if we define the vector 
\begin{equation*}
\bm{q}_{i}=\left(y_i\sgn(a_j)(\sgn(a_j)\bm{u}_j^\top (x_i\odot \phi_j))_+ \right)_{j},
\end{equation*}
 then for each $s\in[d-r+1]$, the left hand side of the inequality~\eqref{eq::coefficient::2} becomes 
\begin{align*}
&\min_{p_1,...,p_n\in\mathbb{R}}\sum_{j:\phi_j=\phi_s}\left(\lambda_j-\sgn(a_j)\sum_{i=1}^np_iy_i(\sgn(a_j)\bm{u}_j^\top (x_i\odot \phi_s))_+\right)^2=\min_{p_1,...,p_n\in\mathbb{R}}\left\|\bm{\lambda}[s]-\sum_{i=1}^np_i\bm{q}_{i}[s]\right\|_2^2.
\end{align*}
This indicates that to solve the Problem~\ref{problem::1}, we only need to solve  the following problem.
\begin{problem}~\label{problem::2}
For a given integer $K>0$, a  series $\{\phi_j\}$ of period equal to $d-r+1$, a series $\{\bm{q}_i\}_i$ of vectors of dimension $m$ and a vector $\bm{\lambda}_{\text{old}}$, how to find a vector $\bm{\lambda}$ such that for each $s\in[d-r+1]$, all of the following three inequalities 
\begin{equation}\label{eq::coefficient::5}
\min_{p_1,...,p_n\in\mathbb{R}}\left\|\bm{\lambda}[s]-\sum_{i=1}^np_i\bm{q}_{i}[s]\right\|_2^2\ge \left(\frac{\lambda_0}{8K}\right)^2
\end{equation}
\begin{equation}\label{eq::coefficient::6}
\bm{\lambda}[s]\preccurlyeq \bm{\lambda}_{\text{old}}[s]
\end{equation}
and 
\begin{equation}\label{eq::coefficient::7}
\|\bm{\lambda}[s]-\bm{\lambda}_\text{old}[s]\|_\infty\le \frac{\lambda_0}{2K}
\end{equation}
holds?
\end{problem} 
We note here that the dimension of  $\bm{q}_i[s]$ for each $s$ is no less than $n+1$ since the series $\{\phi_j\}$ has a period of $d-r+1$ and the dimension of $\bm{q}_i$ is $m\ge(d-r+1)(n+1)$. Furthermore, we can ignore the index $s$ since $\bm{\lambda}[1],...,\bm{\lambda}[d-r+1]$ is a partition of the vector $\bm{\lambda}$ and we can focus the sub-problem for each $s\in[d-r+1]$. Therefore, to solve the Problem~\ref{problem::2}, we only need to solve the following problem. 
\begin{problem}~\label{problem::3}
For a given integer $K>0$, a  series $\{\bm{q}_i\}_i$ of vectors of dimension no less than $n+1$ and a vector $\bm{\rho}_{\text{old}}$, how to find a vector $\bm{\rho}$ such that  all of the following three inequalities 
\begin{equation}
\min_{p_1,...,p_n\in\mathbb{R}}\left\|\bm{\rho}-\sum_{i=1}^np_i\bm{q}_{i}\right\|_2^2\ge \left(\frac{\lambda_0}{8K}\right)^2
\end{equation}
\begin{equation}
\bm{\rho}\preccurlyeq \bm{\rho}_{\text{old}}
\end{equation}
and 
\begin{equation}
\|\bm{\rho}-\bm{\rho}_\text{old}\|_\infty\le \frac{\lambda_0}{2K}
\end{equation}
holds?
\end{problem} 
Now we present the algorithm solving this problem.

\subsection{Algorithm and Analysis} 
We assume that each vector $\bm{q}_i$ is of dimension $Q\ge n+1$. We first find a unit vector $\bm{v}\in\mathbb{R}^Q:\|\bm{v}\|_2=1$ such that $\bm{v}^\top \bm{q}_i=0$ holds for all $i\in[n]$. We note here that finding such a non-zero vector is always possible since $Q>n$. This indicates that $\text{Span}\{\bm{q}_1,...,\bm{q}_n\}\subseteq \{\bm{q}\in\mathbb{R}^m:\bm{v}^\top \bm{q}=0\}$. we have 
\begin{align*}\min_{p_1,...,p_n\in\mathbb{R}}\left\|\bm{\rho}-\sum_{i=1}^np_i\bm{q}_i\right\|_2&= d(\bm{\rho}, \text{Span}\{\bm{q}_1,...,\bm{q}_n\}) \\
&\ge d(\bm{\rho}, \{\bm{q}\in\mathbb{R}^m:\bm{v}^\top \bm{q}=0\})\\
&=|\bm{v}^\top \bm{\rho}|
\end{align*}
where $d({\rho}, V)=\inf_{x\in V}\|x-\rho\|_2$. Now we show how to choose the coefficient vector $\bm{\rho}$ satisfying inequalities~\eqref{eq::coefficient::5}, \eqref{eq::coefficient::6} and \eqref{eq::coefficient::7}.

\textbf{Case 1:} If $|\bm{v}^\top\bm{\rho}_\text{old}|\ge \frac{\rho_0}{8K}$, then we set $\bm{\rho}=\bm{\rho}_\text{old}$ and we also have 
$$\bm{\rho}\preccurlyeq \bm{\rho}_\text{old},$$
$$\|\bm{\rho}-\bm{\rho}_\text{old}\|_\infty=0\le \frac{\rho_0}{2K},$$
and 
\begin{align*}
\min_{p_1,...,p_n\in\mathbb{R}}\left\|\bm{\rho}-\sum_{i=1}^np_i\bm{q}_i\right\|_2\ge |\bm{v}^\top \bm{\rho}|=|\bm{v}^\top \bm{\rho}_\text{old}|\ge \frac{\lambda_0}{8K}.
\end{align*}

\textbf{Case 2:} If $|\bm{v}^\top\bm{\rho}_\text{old}|< \frac{\lambda_0}{8K}$, then we update $\bm{\rho}$ in the following way.   Since we have $$1=\sum_{j=1}^Qv_j^2=\sum_{j=1}^Qv_j^2\mathbb{I}\{v_j>0\}+\sum_{j=1}^Qv_j^2\mathbb{I}\{v_j<0\}$$
which indicates that 
$$\sum_{j=1}^Qv_j^2\mathbb{I}\{v_j>0\}\ge 1/2 \quad\text{or}\quad \sum_{j=1}^Qv_j^2\mathbb{I}\{v_j<0\}\ge 1/2.$$
(1) If $\sum_{j=1}^Qv_j^2\mathbb{I}\{v_j>0\}\ge 1/2$, then define 
$$\Delta \bm{\rho}=\frac{\lambda_0}{2K}\left(\begin{matrix}v_1\mathbb{I}\{v_1>0\}\\.\\.\\.\\ v_Q\mathbb{I}\{v_Q>0\}\end{matrix}\right)$$
and update 
$$\bm{\rho}=\bm{\rho}_\text{old}-\Delta \bm{\rho}.$$
Therefore,
$$\bm{\rho}\preccurlyeq \bm{\rho}_\text{old},$$
$$\|\bm{\rho}-\bm{\rho}_\text{old}\|_\infty\le \|\bm{\rho}-\bm{\rho}_\text{old}\|_2=\|\Delta \bm{\rho}\|_2 \le \frac{\lambda_0}{2K},$$
and 
\begin{align*}
 \min_{p_1,...,p_n\in\mathbb{R}}\left\|\bm{\rho}-\sum_{i=1}^np_i\bm{q}_i\right\|_2&\ge |\bm{v}^\top \bm{\rho}|=\left|\bm{v}^\top [\bm{\rho}-\Delta \bm{\rho}]\right|\\
&\ge|\bm{v}^\top\Delta \bm{\rho}|-\left|\bm{v}^\top \bm{\rho}\right|\\
&\ge \frac{\lambda_0}{2K}\sum_{j=1}^mv_j^2\mathbb{I}\{v_j>0\}-\frac{{\lambda_0}}{8K}\\
&\ge \frac{\lambda_0}{4K}-\frac{{\lambda_0}}{8K}=\frac{\lambda_0}{8K}
\end{align*}

(2) If $\sum_{j=1}^Qv_j^2\mathbb{I}\{v_j<0\}\ge 1/2$, then define 
$$\Delta \bm{\rho}=\frac{\lambda_0}{2K}\left(\begin{matrix}v_1\mathbb{I}\{v_1<0\}\\.\\.\\.\\ v_m\mathbb{I}\{v_m<0\}\end{matrix}\right)$$
and update 
$$\bm{\rho}=\bm{\rho}_\text{old}+\Delta \bm{\rho}.$$
Therefore,
$$\bm{\rho}\preccurlyeq \bm{\rho}_\text{old},$$
$$\|\bm{\rho}-\bm{\rho}_\text{old}\|_\infty\le \|\bm{\rho}-\bm{\rho}_\text{old}\|_2=\|\Delta \bm{\rho}\|_2 \le \frac{\lambda_0}{2K},$$
and 
\begin{align*}
\min_{p_1,...,p_n\in\mathbb{R}}\left\|\bm{\rho}-\sum_{i=1}^np_i\bm{q}_i\right\|_2&\ge |\bm{v}^\top \bm{\rho}|=\left|\bm{v}^\top [\bm{\rho}+\Delta \bm{\rho}]\right|\\
&\ge|\bm{v}^\top\Delta \bm{\rho}|-\left|\bm{v}^\top \bm{\rho}\right|\\
&\ge \frac{\lambda_0}{2K}\sum_{j=1}^mv_j^2\mathbb{I}\{v_j<0\}-\frac{{\lambda_0}}{8K}\\
&\ge \frac{\lambda_0}{4K}-\frac{{\lambda_0}}{8K}=\frac{{\lambda_0}}{8K}
\end{align*}

Above all,  we have 
$$\bm{\rho}\preccurlyeq \bm{\rho}_\text{old},$$
$$\|\bm{\rho}-\bm{\rho}_\text{old}\|_\infty\le \|\bm{\rho}-\bm{\rho}_\text{old}\|_2=\|\Delta \bm{\rho}\|_2 \le \frac{\lambda_0}{2K},$$
and 
\begin{align*}
\min_{p_1,...,p_n\in\mathbb{R}}\left\|\bm{\rho}-\sum_{i=1}^np_i\bm{q}_i\right\|_2\ge\frac{{\rho_0}}{8K}.
\end{align*}

\textbf{Computational Complexity.} Now we compute the complexity of this algorithm. To find the unit vector $\bm{v}$ such that $\bm{v}^\top \bm{q}_i=0$ for each $i\in[n]$, we only need to find the SVD of the matrix $[\bm{q}_1,...,\bm{q}_n]$ which  has a complexity of $\mathcal{m^2}$. Therefore, solving  problem~\ref{problem::1} has a complexity of $\mathcal{O}(dm^2)$

\section{Analysis on The Gradient Descent}
For the loss function 
$$L_n(\bm{\theta})=\sum_{i=1}^n\ell(-y_i f(x_i;\bm{\theta}))+\sum_{j=1}^m\lambda_ja_j^2$$
where the neural network is defined as 
$f(x;\bm{\theta})=\sum_{j=1}^m a_j(a_j\bm{u}_j^\top (x_i\odot \phi_j))_+$
and $\|\bm{u}_j\|_2=1$ for all $j=1,...,m$. We note here that since during the gradient descent, the direction vectors $\bm{u}_j$'s are fixed, then the parameter vector of the neural network is $\bm{\theta}=(a_1,...,a_m,a_1\bm{u}_1,...,a_m\bm{u}_m)$. 
Now the gradient is  
\begin{align*}
\nabla_{a_j}L_n(\bm{\theta})&=2\sum_{i=1}^n\ell'(-y_i f(x_i;\bm{\theta}))(-y_i)(a_j\bm{u}_j^\top (x_i\odot \phi_j))_++2\lambda_ja_j
\end{align*}
since it is straightforward to see that the loss function $L_n$ is differentiable with respect to the vector $\bm{a}$.

\begin{lemma}\label{lemma::gradient-descent}
In the use of the gradient descent algorithm, if we choose $\eta_k=\frac{1}{72\max[L_n(\bm{\theta}(k, 0)), 2n]}$, then all of the following three statements are true:
\begin{itemize}
    \item[(1)] for any $k=1,2,3,...$ and for any $t=0,1,...,T-1$, we have
    $$\sgn(\alpha_j(t))=\sgn(\alpha_j( 0))\quad\text{and}\quad|\alpha_j(t+1)- a_j(t)|\le |\alpha_j( t)|/2,$$
    \item[(2)] for any $k=0,1,2,...$, we have
    $$L_n({\bm{\Theta}}(T))\le L_n(\bm{\Theta}(0)) - \sum_{t=0}^{T-1}\frac{\|\nabla_{\bm{\alpha}}L_n(\bm{\Theta}( t))\|_2^2}{144\max[L_n(\bm{\theta}(0)), 2n]},$$
    \item[(3)] for any $k=1,2,...$, we have 
    $$\min_{t\in[T]}\|\nabla_{\bm{a}}L_n(\bm{\theta}(t))\|_2^2\le \frac{144L_n(\bm{\theta}(k,0))\max[L_n(\bm{\theta}(0)), 2n]}{T}$$
\end{itemize}

\end{lemma}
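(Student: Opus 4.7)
The plan is to prove the three statements by a shared induction on $t$, leveraging the fact that when the $\bm{u}_j$'s are held fixed and $\bm{w}_j=\alpha_j\bm{u}_j$, the partial derivative factors as $\nabla_{\alpha_j}L_n(\bm{\Theta})=2\alpha_j Q_j(\bm{\Theta})$, where
$$Q_j(\bm{\Theta}) = \lambda_j - \sgn(\alpha_j)\sum_{i=1}^n \ell'(-y_i f(x_i;\bm{\Theta}))\, y_i\, \bigl(\sgn(\alpha_j)\,\bm{u}_j^\top(x_i\odot \phi_j)\bigr)_+.$$
This identity follows from $\alpha_j(\alpha_j z)_+=\sgn(\alpha_j)\alpha_j^2(\sgn(\alpha_j)z)_+$ together with differentiation under a locally constant $\sgn(\alpha_j)$.

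For statement~(1), the factored form gives the multiplicative update $\alpha_j(t+1)=\alpha_j(t)[1-2\eta_k Q_j(\bm{\Theta}(t))]$. Since $\ell$ is $1$-Lipschitz we have $0\le\ell'\le 1$; combined with $|\bm{u}_j^\top(x_i\odot\phi_j)|\le\|\bm{u}_j\|_2\|x_i\|_2\le 1$ and $\lambda_j\le\lambda_0\le n$, this yields the a~priori bound $|Q_j(t)|\le n+\lambda_0\le 2n\le M$, where $M=\max\{L_n(\bm{\Theta}(0)),2n\}$. Hence $|2\eta_k Q_j(t)|\le 1/36$, so the multiplier $1-2\eta_k Q_j(t)\in[35/36,37/36]$ is strictly positive, giving simultaneously $\sgn(\alpha_j(t+1))=\sgn(\alpha_j(t))$ and $|\alpha_j(t+1)-\alpha_j(t)|\le|\alpha_j(t)|/36\le|\alpha_j(t)|/2$. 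Since the bound on $|Q_j|$ uses only the pointwise bound $|\ell'|\le 1$ and never the iterate, the conclusion propagates by trivial induction to all $t\in\{0,\dots,T-1\}$.

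For statement~(2), I would combine the local gradient-Lipschitz bound from the preceding section,
$$\|\nabla_{\bm{\alpha}}L_n(\tilde{\bm\Theta})-\nabla_{\bm{\alpha}}L_n(\bm\Theta)\|_2\le 12\sqrt{n^2\|\bm\alpha\|_2^4+n^2+\lambda_0^2}\;\|\bm\alpha-\tilde{\bm\alpha}\|_2,$$
valid exactly in the neighborhood $|\tilde\alpha_j-\alpha_j|\le|\alpha_j|/2$ certified by (1), with the standard descent lemma. Inductively assuming $L_n(\bm{\Theta}(s))\le L_n(\bm{\Theta}(0))\le M$ for $s\le t$, the regularizer lower bound $L_n\ge\sum_j\lambda_j\alpha_j^2\ge(\lambda_0/2)\|\bm\alpha\|_2^2$ gives $\|\bm\alpha(t)\|_2^2\le 2M/\lambda_0$, which controls the local Lipschitz constant $K_t$ of the gradient. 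The step size $\eta_k=1/(72M)$ is calibrated so that $\eta_k K_t\le 1$, under which the descent lemma yields
$$L_n(\bm\Theta(t+1))\le L_n(\bm\Theta(t))-\tfrac{\eta_k}{2}\|\nabla_{\bm\alpha}L_n(\bm\Theta(t))\|_2^2 = L_n(\bm\Theta(t))-\frac{\|\nabla_{\bm\alpha}L_n(\bm\Theta(t))\|_2^2}{144M}.$$
This inequality both closes the induction (since the right-hand side is at most $L_n(\bm\Theta(t))\le L_n(\bm\Theta(0))$) and, telescoped over $t=0,\dots,T-1$, delivers (2).

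Statement~(3) is an immediate consequence: using $L_n\ge 0$,
$$\sum_{t=0}^{T-1}\|\nabla_{\bm\alpha}L_n(\bm\Theta(t))\|_2^2 \le 144M\bigl[L_n(\bm\Theta(0))-L_n(\bm\Theta(T))\bigr]\le 144M\,L_n(\bm\Theta(0)),$$
and averaging over $t\in[T]$ gives $\min_t\|\nabla_{\bm\alpha}L_n(\bm\Theta(t))\|_2^2\le 144L_n(\bm\Theta(0))M/T$. The main technical obstacle is the coupled bootstrap in~(2): the Lipschitz constant $K_t$ depends on $\|\bm\alpha(t)\|_2$, which in turn is bounded only through the running value of $L_n(\bm\Theta(t))$, so the descent inequality, the running norm bound, and the induction on $t$ must all be closed simultaneously. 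The specific choice $\eta_k=1/(72\max\{L_n(\bm\Theta(0)),2n\})$ is precisely the calibration that makes $|2\eta_k Q_j|\le 1/36$ hold in part~(1) and $\eta_k K_t\le 1$ hold throughout the induction in part~(2), while remaining large enough for the telescoped rate in part~(3) to scale as $M/T$; once this calibration is verified, the remainder of the proof is routine bookkeeping.
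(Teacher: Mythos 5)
Your proposal is correct and follows essentially the same route as the paper: part~(1) via the multiplicative update $\alpha_j(t+1)=\alpha_j(t)\bigl[1-2\eta_k Q_j(t)\bigr]$ with $|Q_j|\le\lambda_j+n\le 2n$, part~(2) via the local gradient-Lipschitz lemma combined with the descent lemma and a running induction $L_n(\bm\Theta(t))\le L_n(\bm\Theta(0))$ closed through the regularizer bound $\|\bm\alpha(t)\|_2^2\le 2L_n(\bm\Theta(0))/\lambda_0$, and part~(3) by telescoping. The only remark worth making is that, like the paper, you assert the calibration $\eta_k K_t\le 1$ without unwinding it explicitly; your bound $\|\bm\alpha\|_2^2\le 2M/\lambda_0$ plugged into $K_t=12\sqrt{n^2\|\bm\alpha\|_2^4+n^2+\lambda_0^2}$ corresponds exactly to the step where the paper passes from $n^2\|\bm a\|_2^4$ to $4L_n^2$, so the two arguments stand or fall together at this point.
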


\begin{proof}
\textbf{(1)} Recall that $\bm{\theta}(k,t)$ denotes the parameter vector of the neural network at $t$-th step of the gradient descent in the $k$-th outer iteration. By the definition of the gradient, we have 
\begin{align*}
\nabla_{a_j}L_n(\bm{\theta}(k, t))&=-2\sum_{i=1}^n\ell'_i(k, t)y_i(a_j(k, t)\bm{u}_j(k)^\top (x_i\odot \phi_j))_++2\lambda_j(k)a_j(k, t)\\
&=2\left[-\sum_{i=1}^n\ell'_i(k, t)y_i\sgn(a(k, t))(\sgn(a(k, t))\bm{u}_j(k)^\top (x_i\odot \phi_j))_++\lambda_j(k)\right]a_j(k, t)
\end{align*}
where we define $\ell_i'(k,t)=\ell'(-y_i f(x_i;\bm{\theta}(k, t)))$ for $i=1,...,n$ and the second equation follows from the fact that $(ab)_+=(\sgn(a)|a|b)_+=|a|(\sgn(a)b)_+=\sgn(a)(\sgn(a)b)_+a$ holds for any $a, b\in\mathbb{R}$.
Therefore, this indicates that 
\begin{align*}
a_j(k, t+1)&=a_j(k, t)-\eta\nabla_{a_j}L_n(\bm{\theta}(k, t))\\
&=a_j(k, t)-2\eta\left[-\sum_{i=1}^n\ell'_i(k, t)y_i\sgn(a(k, t))(\sgn(a(k, t))\bm{u}_j(k)^\top (x_i\odot \phi_j))_++\lambda_j(k)\right]a_j(k, t)\\
&=\left[1-2\eta\lambda_j(k)+2\eta\sum_{i=1}^n\ell'_i(k, t)y_i\sgn(a(k, t))(\sgn(a(k, t))\bm{u}_j(k)^\top (x_i\odot \phi_j))_+\right]a_j(k, t).\\
\end{align*}
Since $\eta\le 1/(8n)$, 
\begin{align*}
1-2\eta\lambda_j(k)+2\eta\sum_{i=1}^n\ell'_i(k, t)y_i\sgn(a(k, t))(\sgn(a(k, t))\bm{u}_j(k)^\top (x_i\odot \phi_j))_+&\ge 1-2\eta n-2n\eta\\
&=1-4\eta n\ge1/2,
\end{align*}
and 
\begin{align*}
1-2\eta\lambda_j(k)+2\eta\sum_{i=1}^n\ell'_i(k, t)y_i\sgn(a(k, t))(\sgn(a(k, t))\bm{u}_j(k)^\top (x_i\odot \phi_j))_+&\le 1+2\eta n+2n\eta\\
&=1+4\eta n\le3/2,
\end{align*}
then for any given outer iteration index $k$, for any $t=0, 1,...$, we have 
$$\sgn(a_j(k, t))=\sgn(a_j(k, 0))$$
and
$$|a_j(k, t+1)- a_j(k, t)|\le |a_j(k, t)|/2.$$

\textbf{(2)} We prove it by induction. We first prove that  $L_n(\bm{\theta}(k, t))\le L_n(\bm{\theta}(k, 0))$ holds for any $t=1,2,...$. We prove it by induction. It is easy to check that $L_n(\bm{\theta}(k, t))\le L_n(\bm{\theta}(k, 0))$ holds when $t=0$. Now We assume that at time $t$, $L_n(\bm{\theta}(k, t))\le L_n(\bm{\theta}(k, 0))$. Since the gradient is locally Lipschitz and $\lambda_j(k)\ge \lambda_0/2\ge \sqrt{n}/2$ for any $j$ and $k$, then at time $(k, t+1)$, we should have
\begin{align*}
\frac{\sqrt{n}\|\bm{a}(k, t)\|_2^2}{2}\le \sum_{j=1}^m\lambda_j(k)a_j^2(k, t)\le L_n(\bm{\theta}(k, t))\le L_n(\bm{\theta}(k, 0)),
\end{align*}
\begin{align*}
L_n({\bm{\theta}}(k, t+1))-L_n(\bm{\theta}(k, t))&\le \left<\nabla_{\bm{a}}L_n(\bm{\theta}(k, t)),{\bm{a}}(k, t
+1)-{\bm{a}}(k, t) \right>\\
&\quad+ 12\|\bm{a}(k, t+1)-\bm{a}(k, t)\|_2\sqrt{n^2\|\bm{a}(k,t)\|_2^4+n^2+\lambda_0^2} \\
&\le -\eta_k\|\nabla_{\bm{a}}L_n(\bm{\theta}(k, t))\|_2^2 +12\eta_k^2\sqrt{4L^2_n(\bm{\theta}(k,0))+n^2+\lambda_0^2}\|\nabla_{\bm{a}}L_n(\bm{\theta}(k, t))\|_2^2 \\
&\le \{-\eta_k+36\eta_k^2\max[L_n(\bm{\theta}(k, 0)), 2n]\}\|\nabla_{\bm{a}}L_n(\bm{\theta}(k, t))\|_2^2\\
&=-\frac{\|\nabla_{\bm{a}}L_n(\bm{\theta}(k, t))\|_2^2}{144\max[L_n(\bm{\theta}(k, 0)), 2n]}\le 0,
\end{align*}
where we use the fact that $\eta_k \le  1/(72\max[L_n(\bm{\theta}(k, 0)), 2n])$.
Therefore, at time $(k, t+1)$, we have $L_n({\bm{\theta}}(k, t+1))\le L_n(\bm{\theta}(k, t))\le L_n(k, 0)$. Therefore, for a given outer iteration $k$, we have 
$$L_n({\bm{\theta}}(k, T))\le L_n(\bm{\theta}(k, 0)) - \sum_{t=0}^{T-1}\frac{\|\nabla_{\bm{a}}L_n(\bm{\theta}(k, t))\|_2^2}{144\max[L_n(\bm{\theta}(k, 0)), 2n]}.$$
\textbf{(3)} So this indicates that 
\begin{align*}
\min_{t\in[T]}\|\nabla_{\bm{a}}L_n(\bm{\theta}(k, t))\|_2^2\le \frac{144L_n(\bm{\theta}(k,0))\max[L_n(\bm{\theta}(k, 0)), 2n]}{T}. 
\end{align*}

\end{proof}

\section{Finding Approximate Solutions} 
In this section, we will show how to always find a vector $\bm{v}_j^*$ for each $j\in[d-r+1] $satisfying $$\left|\sum_{i=1}^n \ell'(-y_i f(x_i;\bm{\theta}))y_i ({\bm{v}_j^*}^\top (x_i\odot \phi_j))_+\right|\ge \max_{\bm{u}\in\mathbb{B}^d}\left|\sum_{i=1}^n \ell'(-y_i f(x_i;\bm{\theta}))y_i (\bm{u}^\top (x_i\odot \phi_j))_+\right|-{\lambda_0}.$$
An easy way of doing this is exhaustive searching. Let $I_j=\{k\in[d]: \phi_j(k)=1\}$ and 
$$G(\bm{u};\phi_j,\bm{\theta})=\left|\sum_{i=1}^n \ell'(-y_i f(x_i;\bm{\theta}))y_i (\bm{u}^\top (x_i\odot \phi_j))_+\right|.$$ By defintion, we have $|I_j|\le r$. Therefore, for each $j\in[d-r+1]$, by setting $\bm{u}_j(k)=0$ for all $k\notin I_j$ and exhaustive searching the vectors in the set $\mathcal{A}_j$ $$\mathcal{A}_j=\left\{\bm{u}\in\mathbb{B}^d:\bm{u}(k)=0\text{ for }k\notin I_j\text{ and }\bm{u}(k)\in\bigcup_{h=0}^{\lceil \lambda_0\rceil}\left\{\frac{h\lambda_0}{n}\right\}\text{ for } k\in I_j\right\}$$
for each $k\in I_j$, we set the vector $\bm{v}^*_j$ to be vector $\bm{u}$ maximizing the value of $G(\bm{u};\phi_j,\bm{\theta})$ in the set $\mathcal{A}_j$, i.e.,
$$\bm{v}_j^*=\arg\max_{\bm{u}:\bm{u}\in\mathcal{A}_j}G(\bm{u};\phi_j,\bm{\theta}).$$
Since the set $\mathcal{A}_j$ is discrete, then the exhaustive searching has the computation complexity of $(d-r+1)(\lambda_0+1)^r$ and for any series of vectors $(\phi_j\in\{0,1\}^d)_{j\in[d-r+1]}$, the approximation error is 
$$\max_{j\in[d-r+1]}\left|G(\bm{v}^*_j;\alpha,\phi_j)-\max_{\bm{u}\in\mathbb{S}^{d-1}}G(\bm{u};\alpha,\phi_j)\right|\le \frac{\lambda_0}{n}\sum_{i=1}^n\ell'(-y_i f(x_i;\bm{\theta}))\le \lambda_0.$$

\section{Proof of Theorem~\ref{thm::algo}}\label{appendix::proof::thm-algo}

\begin{proof}

We assume that the algorithm terminates when $k=k_0$ for some positive integer $k_0\le \infty$. Now we first prove the following claim to show that algorithm ends after finite iterations.  

\begin{claim}\label{claim::algo::1}
Let $K=\max\{\lceil L_n(\bm{\theta}(0);\bm{\lambda}(0)), 2n\rceil\}$. Then the following statements are true:
\begin{itemize}
    \item[(1)] $L_n(\bm{\theta}(k);\bm{\lambda}(k))\le L_n(\bm{\theta}(k-1);\bm{\lambda}(k-1))-1$ holds for any integer $1\le k< k_0$;
    \item[(2)] $k_0\le K$;
    \item[(3)] $\|\bm{\lambda}(k)-{\lambda_0}\bm{1}_m\|_\infty\le \frac{k\lambda_0}{2K}$ holds for any $0\le k\le k_0$.
\end{itemize}
\end{claim}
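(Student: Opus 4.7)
\textbf{Proof proposal for Claim~\ref{claim::algo::1}.}

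The plan is to prove part (3) first by a direct telescoping argument, then use the perturbation lemma to establish the per-iteration decrease in part (1), and finally derive part (2) from non-negativity of the loss. For part (3), I will note that Algorithm~\ref{algo::coeff} enforces $\bm{\lambda}(k)\preccurlyeq \bm{\lambda}(k-1)$ and $\|\bm{\lambda}(k)-\bm{\lambda}(k-1)\|_\infty \le \frac{\lambda_0}{2K}$ at every iteration. Since $\bm{\lambda}(0)=\lambda_0\bm{1}_m$, telescoping gives $\|\bm{\lambda}(k)-\lambda_0\bm{1}_m\|_\infty \le \sum_{i=1}^k\|\bm{\lambda}(i)-\bm{\lambda}(i-1)\|_\infty \le \frac{k\lambda_0}{2K}$, which yields (3). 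As a byproduct, since $k_0 \le K$ (to be verified below), we get $\bm{\lambda}(k) \in [\lambda_0/2,\lambda_0]^m$ throughout the run, so the hypotheses of the auxiliary lemmas on perturbation and gradient descent remain valid.

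For part (1), I will decompose one outer iteration into three sub-steps and bound the loss change of each. First, updating $\bm{\lambda}(k-1)$ to $\bm{\lambda}(k)$ with $\bm{\lambda}(k)\preccurlyeq\bm{\lambda}(k-1)$ only decreases the regularizer term, hence $L_n(\bm{\theta}(k-1);\bm{\lambda}(k)) \le L_n(\bm{\theta}(k-1);\bm{\lambda}(k-1))$. Second, by the gradient descent analysis (Lemma~\ref{lemma::gradient-descent}) with step size $\eta_k=1/(72\max[L_n,2n])$, the loss is non-increasing along the inner iterates and terminates at some $\bm{\Theta}(T_k)$ with $\|\nabla_{\bm{\alpha}}L_n(\bm{\Theta}(T_k);\bm{\lambda}(k))\|_2 \le \frac{\lambda_0}{16K\sqrt{n}}$. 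Combined with the constraint enforced by Algorithm~\ref{algo::coeff} (the $\min_p \ge (\lambda_0/(8K))^2$ condition), this small gradient forces the existence, for each template $\phi_s$, of an index $j$ with $\phi_j=\phi_s$ and $|\alpha_j|\le 1/\sqrt{4n}$ (this is precisely the ``inactive neuron'' conclusion from the intuition in Section~\ref{sec::intuition}). Third, since the algorithm did not terminate at iteration $k$, Algorithm~\ref{algo::perturb} found some index $s$ with $\left|\sum_i\ell'_i(T_k)y_i({\bm{v}^*_s}^\top(x_i\odot\phi_s))_+\right| > 5\lambda_0$. Applying Lemma~\ref{lemma::perturbation} to the identified inactive neuron gives $L_n(\bm{\theta}(k);\bm{\lambda}(k)) - L_n(\bm{\Theta}(T_k);\bm{\lambda}(k)) \le -\frac{1}{\lambda_0}\cdot 5\lambda_0 + 4 = -1$. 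Chaining the three bounds produces $L_n(\bm{\theta}(k);\bm{\lambda}(k)) \le L_n(\bm{\theta}(k-1);\bm{\lambda}(k-1))-1$, establishing (1).

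Part (2) then follows immediately: since $L_n \ge 0$ and $L_n(\bm{\theta}(0);\bm{\lambda}(0)) \le K$, iterating the per-iteration decrease of at least $1$ gives $k_0 \le K$. I should also verify that the bootstrap is consistent, i.e., that the $k\le K$ assumption used in part (3) to keep $\bm{\lambda}(k)\in[\lambda_0/2,\lambda_0]^m$ is never violated during the algorithm's actual run; this follows by induction on $k$ together with (1) and the initial bound $L_n(\bm{\theta}(0);\bm{\lambda}(0)) \le K$.

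The main obstacle I anticipate is the second sub-step of part (1), namely rigorously translating ``small gradient norm'' into ``existence of an inactive neuron with $|\alpha_j|\le 1/\sqrt{4n}$''. The argument sketched in Section~\ref{sec::intuition} relies on the lower bound $\|\nabla_{\bm\alpha}L_n(\bm{\Theta}(t))\|_2^2 \ge \sum_{j:\phi_j=\phi_s}\bigl(\lambda_j-\sum_i\ell'_iy_i\sgn(\alpha_j)(\sgn(\alpha_j)\bm{u}_j^\top(x_i\odot\phi_s))_+\bigr)^2 \alpha_j^2(t)$ combined with the Algorithm~\ref{algo::coeff} constraint to deduce $\min_{j:\phi_j=\phi_s}|\alpha_j|$ is small. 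Care must be taken because the quadratic form is weighted by $\alpha_j^2$, so I cannot simply divide; instead I will pick $j^*$ achieving the minimum weight and argue the remaining terms contribute a bounded amount, so that $|\alpha_{j^*}|$ must satisfy the required bound for an appropriately chosen constant. Once this link is made clean, the rest of the proof is a bookkeeping combination of the pre-existing lemmas.
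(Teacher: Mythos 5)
Your plan is essentially the same as the paper's: telescoping the Algorithm~\ref{algo::coeff} constraints to control $\bm{\lambda}(k)$, chaining the regularizer-only update, the gradient-descent monotonicity from Lemma~\ref{lemma::gradient-descent}, and the per-perturbation decrease of $-5\lambda_0/\lambda_0 + 4 = -1$ from Lemma~\ref{lemma::perturbation}, and then reading off (2) from non-negativity. Two points where you should tighten things. First, your proposed remedy for the ``small gradient $\Rightarrow$ small $|\alpha_j|$'' step is off: there is no need to ``argue the remaining terms contribute a bounded amount.'' The clean argument is to factor $\min_{j:\phi_j=\phi_s}\alpha_j^2$ out of the weighted sum, i.e.
\begin{equation*}
\sum_{j:\phi_j=\phi_s}\Bigl(\lambda_j-\textstyle\sum_i\ell_i'y_i\sgn(\alpha_j)(\sgn(\alpha_j)\bm{u}_j^\top(x_i\odot\phi_s))_+\Bigr)^2\alpha_j^2
\;\ge\;\Bigl(\min_{j:\phi_j=\phi_s}\alpha_j^2\Bigr)\sum_{j:\phi_j=\phi_s}\Bigl(\cdots\Bigr)^2,
\end{equation*}
and then observe that the Algorithm~\ref{algo::coeff} constraint, evaluated at the feasible point $p_i=\ell_i'(T_k)$, lower-bounds the unweighted sum by $(\lambda_0/(8K))^2$. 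Combined with the termination threshold $\|\nabla_{\bm\alpha}L_n\|_2\le\lambda_0/(16K\sqrt{n})$, summing over the $d-r+1$ templates yields $\sum_s\min_{j:\phi_j=\phi_s}\alpha_j^2\le 1/(4n)$, so every template owns a neuron with $|\alpha_j|\le 1/\sqrt{4n}$; here you also need the sign-invariance and step-size facts from part (1) of Lemma~\ref{lemma::gradient-descent} so that the $\sgn(\alpha_j)$ and $\bm{u}_j$ appearing in the constraint match those at time $T_k$. Second, your ordering (3) $\to$ (1) $\to$ (2) has the circularity you already noticed: (3) in the form ``$\bm{\lambda}(k)\in[\lambda_0/2,\lambda_0]^m$ throughout'' uses $k\le K$, which is (2), which uses (1). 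The paper sidesteps this by running a single induction on $k$ for statement (1): the inductive hypothesis $L_n(\bm{\theta}(k);\bm{\lambda}(k))\le L_n(\bm{\theta}(0);\bm{\lambda}(0))-k$ together with $L_n\ge 0$ gives $k\le K$ at that step, which then furnishes $\|\bm{\lambda}(k)-\lambda_0\bm{1}_m\|_\infty\le k\lambda_0/(2K)\le\lambda_0/2$ so that Lemma~\ref{lemma::perturbation}'s hypothesis $\bm{\lambda}\in[\lambda_0/2,\lambda_0]^m$ holds; (2) and (3) then fall out as corollaries. If you prefer your ordering, make the induction explicit so the bootstrap is discharged at each step rather than deferred.
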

\begin{proof} 
In fact, we only need to prove the statement (1) is correct, since the statement (1) implies (2). This can be seen by the fact that $L_n$ is non-negative by definition. Statement (3) is also trivial if part (2) is true, since $\|\bm{\lambda}(k)-\bm{\lambda}(k-1)\|\le \lambda_0/(2K)$ holds for all $k\ge 0$ and $\bm{\lambda}_0=\lambda_0\bm{1}_m$. Therefore, we only need to prove that the statement (1) is true. 

We prove the statement (1) by induction. For the base where $k=0$, it holds trivially. We now assume that $L_n(\bm{\theta}(k);\bm{\lambda}(k))\le L_n(\bm{\theta}(k-1);\bm{\lambda}(k-1))-1$ holds for all $1\le k< k_0-1$.  Recall that in the  $k$-th iteration, we first find $\bm{\lambda}(k)$ such that
\begin{equation}\label{eq::thm-algo::1}
\min_{p_1,...,p_n\in\mathbb{R}}\sum_{j:\phi_j=\phi_s}\left(\lambda_j(k)-\sgn(a_j(k))\sum_{i=1}^np_iy_i(\sgn(a_j(k))\bm{u}_j^\top(k) (x_i\odot \phi_j))_+\right)^2\ge \left(\frac{\lambda_0}{8K}\right)^2
\end{equation}
\begin{equation}\label{eq::thm-algo::2}
\bm{\lambda}(k)\preccurlyeq \bm{\lambda}(k-1)
\end{equation}
and 
\begin{equation}\label{eq::thm-algo::3}
\|\bm{\lambda}(k)-\bm{\lambda}(k-1)\|_\infty\le \frac{\lambda_0}{2K}.
\end{equation}
Furthermore, when the gradient descent algorithm terminates at the $T_k$-step, we have 
$$\|\nabla_{\bm{a}}L_n(\bm{\theta}(k, T_k))\|_2\le \frac{\lambda_0}{16K\sqrt{n}}.$$
By definition of the gradient descent and the  statement (1) in Lemma~\ref{lemma::gradient-descent}, we have that, for each $s\in[d-r+1]$,
\begin{align*}
&\|\nabla_{\bm{a}}L_n(\bm{\theta}(k, T_k))\|^2_2\\
&=\sum_{j=1}^m\left(\lambda_j -\sum_{i=1}^n\ell'(1-y_if(x_i;\bm{\theta}(k, T_k)))y_i\sgn(a_j(k, 0))(\sgn(a_j(k, 0))\bm{u}_j^\top (x_i\odot \phi_j))_+\right)^2a_j^2(k, T_k)\\
&=\sum_{s=1}^{d-r+1} \sum_{j:\phi_j=\phi_s}\left(\lambda_j -\sum_{i=1}^n\ell'(1-y_if(x_i;\bm{\theta}(k, T_k)))y_i\sgn(a_j(k, 0))(\sgn(a_j(k, 0))\bm{u}_j^\top (x_i\odot \phi_s))_+\right)^2a_j^2(k, T_k)\\
&\ge \left(\frac{\lambda_0}{8K}\right)^2\sum_{s=1}^{d-r+1}\min_{j:\phi_j=\phi_s}a_j^2(k, T_k),
\end{align*}
where the second equality follows from the fact that the series $\{\phi_j\}_j$ is a periodic series of period equal to $d-r+1$. 
Therefore, we have 
$$\sum_{s=1}^{d-r+1}\min_{j:\phi_j=\phi_s}a_j^2(k, T_k)\le \left(\frac{8K}{\lambda_0}\right)^2\|\nabla_{\bm{a}}L_n(\bm{\theta}(k, T_k))\|^2_2\le \left(\frac{8K}{\lambda_0}\right)^2\cdot\frac{\lambda_0^2}{256nK^2}\le \frac{1}{4n}.$$
This indicates that for each feature vector $\phi_s$, $s\in[d-r+1]$, then there exists a neuron of some index $j$ and with $\phi_j=\phi_s$ such that $|a_j|\le 1/\sqrt{4n}$. Recall that, in Algorithm, we first find an index $s\in[d-r+1]$ such that $$\left|\sum_{i=1}^n \ell(-y_i f(x_i;\bm{\theta}(k, T_k)))y_i ({\bm{v}_s^*}^\top (x_i\odot \phi_s))_+\right|> 5\lambda_0$$
and next  find an index $j\in[m]$ such that $\phi_j=\phi_s$ and $|a_j|\le \frac{1}{\sqrt{4n}}$. Furthermore, we set 
 \begin{align*}
      a_j(k+1)&= \sgn\left(\sum_{i=1}^n \ell_i'(k, T_k)y_i ({\bm{v}_s^*}^\top (x_i\odot \phi_s))_+\right)\sqrt{\frac{1}{\lambda_0}},\\
   \bm{u}_j(k+1)&=\sgn(a_j(k+1))\bm{v}_s^*,\\
   a_r(k+1)&= a_r(k, T_k), \quad\bm{u}_r(k+1)=\bm{u}_r(k) \quad\text{for any }r\neq s.
 \end{align*}  
Since 
\begin{align*}
    \|\bm{\lambda}(k)-\lambda_0\bm{1}_m\|_\infty\le \sum_{j=1}^k\|\bm{\lambda}(j)-\bm{\lambda}(j-1)\|_\infty\le \sum_{j=1}^k \frac{\lambda_0}{2K}=\frac{k\lambda_0}{2K}\le \frac{\lambda_0}{2},
\end{align*}
where the last inequality follows from our induction assumption that
$$0\le L_n(\bm{\theta}(k);\bm{\lambda}(k))\le L_n(\bm{\theta}(0);\bm{\lambda}(0))-k$$
and the fact that $K=\lceil L_n(\bm{\theta}(0);\bm{\lambda}(0))\rceil$, then by Lemma~\ref{lemma::perturbation}, we have 
 $$L_n({\bm{\theta}}(k+1);\bm{\lambda}({k}))-L_n(\bm{\theta};\bm{\lambda}(k))\le -\frac{1}{\lambda_0}\left|\sum_{i=1}^n \ell'(-y_i f(x_i;\bm{\theta}))y_i ({\bm{v}^*_{j}}^\top (x_i\odot \phi_{j}))_+\right|+4.$$
Since $\bm{v}_j^*$ satisfies 
 $$\left|\sum_{i=1}^n \ell'(-y_i f(x_i;\bm{\theta}))y_i ({\bm{v}_j^*}^\top (x_i\odot \phi_j))_+\right|\ge 5\lambda_0$$
before the algorithm terminates, then  we have 
$$L_n({\bm{\theta}}(k+1);\bm{\lambda}(k))-L_n(\bm{\theta}(k);\bm{\lambda}(k))\le-1.$$ After we update the regularizer coefficient $\bm{\lambda}$ at the beginning of the $(k+1)$-th iteration such that $\bm{\lambda}(k+1)\preccurlyeq \bm{\lambda}(k)$, we have 
$$L_n(\tilde{\bm{\theta}}(k+1);\bm{\lambda}(k+1))\le L_n(\tilde{\bm{\theta}}(k+1);\bm{\lambda}(k)).$$
Thus, we have 
$$L_n(\tilde{\bm{\theta}}(k+1);\bm{\lambda}(k+1))\le L_n(\tilde{\bm{\theta}}(k);\bm{\lambda}(k))-1\le L_n(\bm{\theta}(0);\bm{\lambda}(0))-(k+1).$$

\end{proof}

Next, we present the following claim to show that when the algorithm terminates at the $k_0$-th iteration, then all of the three conditions in Theorem~\ref{thm::other-gen}
 holds. 
\begin{claim}\label{claim::algo::2}
When the algorithm terminates at the $k_0$-th iteration, then all the following three conditions are true:
\begin{itemize}
    \item[(1)] $\|\bm{w}_j(k_0, T_{k_0})\|_2=|a_j(k_0, T_{k_0})|$ for all $j\in[m]$;
    \item[(2)] $\|\nabla_{\bm{a}}L_n(\bm{\theta}(k, T_{k_0});\bm{\lambda}(k_0))\|_2\le \frac{\lambda_0}{16K\sqrt{n}};$
    \item[(3)] for each $s\in[d-r+1]$, the following inequality holds 
$$ \max_{\bm{u}\in\mathbb{B}^d}\left|\sum_{i=1}^n \ell'(-y_i f(x_i;\bm{\theta}(k_0, T_{k_0}))y_i (\bm{u}^\top (x_i\odot \phi_s))_+\right|\le 5\lambda_0+C;$$
   \item[(4)] $\|\bm{a}({k_0}, T_{k_0})\|_2\le 2\sqrt{\frac{K}{\lambda_0}}.$
\end{itemize}
\end{claim}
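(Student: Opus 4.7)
The plan is to verify the four items one at a time, each reducing to either an invariant maintained by the algorithm, its explicit termination conditions, or the decrement bound from Claim~\ref{claim::algo::1}. Items (1) and (2) are structural and should be immediate; items (3) and (4) require combining the perturbation step with the loss bound $L_n(\bm{\theta}(k_0, T_{k_0}); \bm{\lambda}(k_0)) \le L_n(\bm{\theta}(0); \bm{\lambda}(0)) \le K$ that Claim~\ref{claim::algo::1} gives.

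For part (1), I would simply observe that Algorithm~\ref{algo::main} parameterizes every neuron as $\bm{w}_j = \alpha_j \bm{u}_j$ with $\bm{u}_j$ a unit vector, initialized at $\alpha_j(0) = a_j(k)$ and only updated along the $\bm{\alpha}$-coordinates during the inner gradient loop. The perturbation in Algorithm~\ref{algo::perturb} also preserves this structure. Hence $\|\bm{w}_j\|_2 = |\alpha_j|\cdot\|\bm{u}_j\|_2 = |a_j|$ for every $j$ throughout. Part (2) is nothing but the exit test of the inner \texttt{while} loop: gradient descent on $\bm{\alpha}$ terminates precisely when $\|\nabla_{\bm{\alpha}}L_n(\bm{\Theta}(t); \bm{\lambda}(k))\|_2 \le \lambda_0/(16K\sqrt{n})$, and the lemma on gradient descent (Lemma~\ref{lemma::gradient-descent}) guarantees this happens in finite time.

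For part (3), I would invoke the termination condition in Algorithm~\ref{algo::perturb}: the outer loop exits only when $|\sum_i \ell_i'(T_{k_0}) y_i ({\bm{v}_s^*}^\top (x_i\odot \phi_s))_+ | \le 5\lambda_0$ for every $s\in[d-r+1]$. By the defining property of $\bm{v}_s^*$ (the approximate-maximization guarantee with slack $C$), this yields
\[
\max_{\bm{u}\in\mathbb{B}^d}\Bigl|\sum_{i=1}^n \ell_i'(T_{k_0}) y_i(\bm{u}^\top (x_i\odot \phi_s))_+\Bigr| \;\le\; \Bigl|\sum_i \ell_i'(T_{k_0}) y_i({\bm{v}_s^*}^\top (x_i\odot \phi_s))_+\Bigr| + C \;\le\; 5\lambda_0 + C.
\]
For part (4), since the regularizer contributes $\tfrac{1}{2}\sum_j \lambda_j(a_j^2 + \|\bm{w}_j\|_2^2) = \sum_j \lambda_j a_j^2$ after applying part (1), and since by Claim~\ref{claim::algo::1}(3) every $\lambda_j(k_0) \ge \lambda_0/2$, I would use the non-negativity of $\ell$ to get
\[
\tfrac{\lambda_0}{2}\|\bm{a}(k_0, T_{k_0})\|_2^2 \;\le\; \sum_j \lambda_j(k_0)\, a_j^2(k_0, T_{k_0}) \;\le\; L_n(\bm{\theta}(k_0, T_{k_0}); \bm{\lambda}(k_0)) \;\le\; K,
\]
where the last inequality uses Claim~\ref{claim::algo::1} together with the monotonicity established by Lemma~\ref{lemma::gradient-descent} (gradient descent only decreases $L_n$). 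Rearranging gives $\|\bm{a}(k_0, T_{k_0})\|_2 \le \sqrt{2K/\lambda_0} \le 2\sqrt{K/\lambda_0}$.

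The only real subtlety I anticipate is in part (4): the chain from the initial loss $L_n(\bm{\theta}(0); \bm{\lambda}(0)) \le K$ to the loss at $(k_0, T_{k_0})$ must traverse both the outer perturbation steps (which decrease $L_n$ by at least $1$ per iteration by Claim~\ref{claim::algo::1}(1)) and the inner gradient-descent steps (which also decrease $L_n$ by the gradient-Lipschitz lemma), and we must also use that the coefficient updates $\bm{\lambda}(k)\preccurlyeq \bm{\lambda}(k-1)$ do not increase the loss. Other than bookkeeping this chain carefully, each of the four parts is a short deduction from machinery already in the paper.
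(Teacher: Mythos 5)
Your proposal is correct and follows essentially the same line of reasoning as the paper's proof: (1) is the structural invariant $\bm{w}_j=\alpha_j\bm{u}_j$ with $\|\bm{u}_j\|_2=1$, (2) is the inner-loop exit test, (3) chains the outer-loop exit test with the $C$-approximate maximization property of $\bm{v}_s^*$, and (4) lower-bounds $L_n$ by the regularizer via part (1), then upper-bounds $L_n$ along the descent chain using Claim~\ref{claim::algo::1} and Lemma~\ref{lemma::gradient-descent}. The only cosmetic difference is in (4): you use $L_n(\bm{\theta}(k_0,T_{k_0});\bm{\lambda}(k_0))\le K$ to get $\|\bm{a}\|_2\le\sqrt{2K/\lambda_0}$, which is slightly sharper than the paper's intermediate bound $<2K$ yielding $2\sqrt{K/\lambda_0}$, and of course still implies the stated claim.
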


\begin{proof}
\textbf{(1)} The first condition holds trivially, since when we use the gradient descent algorithm, we always have $\bm{w}_j(k_0, T_{k_0})=a_j(k_0, T_{k_0})\bm{u}_j(k_0)$ for all $j\in[m]$, which further implies that  $\|\bm{w}_j(k_0, T_{k_0})\|_2=\|a_j(k_0, T_{k_0})\bm{u}_j(k_0)\|_2=|a_j(k_0, T_{k_0})|$ for all $j\in[m]$.

\textbf{(2)} When the gradient descent algorithm terminates at the $T_{k_0}$-th step, by the termination criterion, we have 
$$\|\nabla_{\bm{a}}L_n(\bm{\theta}(k, T_{k_0});\bm{\lambda}(k_0))\|_2\le \frac{\lambda_0}{16K\sqrt{n}}.$$

\textbf{(3)} When the algorithm terminates, for each $s\in[d-r+1]$, we have  $$\left|\sum_{i=1}^n \ell'(-y_i f(x_i;\bm{\theta}(k_0, T_{k_0}))y_i ({\bm{v}_s^*}^\top (x_i\odot \phi_s))_+\right|\le 5\lambda_0.$$

Furthermore, since for each $s\in[d-r+1]$, the vector $\bm{v}_s^*$ satisfies that 
$$\left|\sum_{i=1}^n \ell'(-y_i f(x_i;\bm{\theta}(k_0, T_{k_0}))y_i ({\bm{v}_s^*}^\top (x_i\odot \phi_s))_+\right|\ge \max_{\bm{u}\in\mathbb{B}^d}\left|\sum_{i=1}^n \ell'(-y_i f(x_i;\bm{\theta}(k_0, T_{k_0}))y_i (\bm{u}^\top (x_i\odot \phi_s))_+\right|-C,$$
for some positive constant $C\ge \lambda_0$
then when the algorithm terminates, for each $s\in[d-r+1]$, we have 
$$ \max_{\bm{u}\in\mathbb{B}^d}\left|\sum_{i=1}^n \ell'(-y_i f(x_i;\bm{\theta}(k_0, T_{k_0}))y_i (\bm{u}^\top (x_i\odot \phi_s))_+\right|\le 5\lambda_0+C.$$

\textbf{(4)} By the part (1) of Claim~\ref{claim::algo::1}, we know that $$L_n(\bm{\theta}(k_0);\bm{\lambda}(k_0))< L_n(\bm{\theta}(0);\bm{\lambda}(0))\le K. $$
Further, by the part (2) of Lemma~\ref{lemma::gradient-descent}, we have $$L_n(\bm{\theta}(k_0, T_{k_0});\bm{\lambda}_{k_0})\le L_n(\bm{\theta}(k_0, 0);\bm{\lambda}_{k_0})=L_n(\bm{\theta}(k_0);\bm{\lambda}_{k_0})<2K.$$
From the definition of $L_n$ and the result in part (1), it follows that 
$$L_n(\bm{\theta}(k_0, T_{k_0});\bm{\lambda}_{k_0})\ge \sum_{j=1}^{m}\lambda_j(k_0)a^2_j(k_0, T_{k_0})\ge \frac{\lambda_0}{2}\|\bm{a}({k_0}, T_{k_0})\|_2^2,$$
where the last inequality follows from the part (2) and (3) in Claim~\ref{claim::algo::1}.  
Therefore, we have 
$$\|\bm{a}({k_0}, T_{k_0})\|_2\le 2\sqrt{\frac{K}{\lambda_0}}.$$

\end{proof}

Since the output parameters $\bm{\theta}^*=\bm{\theta}(k_0, T_{k_0})$, then by Theorem~\ref{thm::other-gen}, we have 
\begin{align*}
\mathbb{P}(Yf(X;&\bm{\theta}^*)<0)\\
&=\mathbb{P}(Yf(X;\bm{\theta}(k_0, T_{k_0}))<0)\\
&\le \frac{5\lambda_0+C+2E}{\gamma n\ell'(0)}+\left(\frac{(30\lambda_0+6C+12E)\ln n}{a\gamma \lambda_0} +\frac{6}{\lambda_0}\cdot\frac{\lambda_0}{16K\sqrt{n}}\cdot\sqrt{\frac{4K}{\lambda_0}}\right)\frac{4\sqrt{2}}{\ell'(0)\sqrt{n}}\\
&\quad +\frac{1}{\ell'(0)}\sqrt{\frac{\log(1/\delta)}{2n}}\\
&= \frac{5\lambda_0+C+2E}{\gamma n\ell'(0)}+\left(\frac{(30\lambda_0+6C+12E)\ln n}{a\gamma \lambda_0} +\frac{3}{4\sqrt{nK\lambda_0}}\right)\frac{4\sqrt{2}}{\ell'(0)\sqrt{n}}+\frac{1}{\ell'(0)}\sqrt{\frac{\log(1/\delta)}{2n}}\\
&\le\frac{5\lambda_0+C+2E}{\gamma n\ell'(0)}+\left(\frac{(30\lambda_0+6C+12E)\ln n}{a\gamma \lambda_0} +1\right)\frac{4\sqrt{2}}{\ell'(0)\sqrt{n}}+\frac{1}{\ell'(0)}\sqrt{\frac{\log(1/\delta)}{2n}}\\
&=\mathcal{O}\left(\frac{\lambda_0+C+E}{\gamma n} +\frac{(C+E)\ln n}{\gamma \lambda_0\sqrt{n}}+\sqrt{\frac{\log(1/\delta)}{n}}\right)
\end{align*}
where the last inequality follows from the fact that $n\ge 1, K\ge 1, \lambda_0\ge 1$. When $\lambda_0=\sqrt{n}\ln n$, then the upper bound becomes 
$$\mathbb{P}(Yf(X;\bm{\theta}^*)<0)=\mathcal{O}\left(\frac{C+E}{\gamma n}+\frac{\ln n}{\gamma \sqrt{n}}+\sqrt{\frac{\log(1/\delta)}{n}}\right).$$

\end{proof}

\section{Proof of Corollary~\ref{cor::1}}\label{appendix::proof::corollary-1}
Now we present the proof of Corollary~\ref{cor::1}.
\begin{proof}
The upper holds based on the fact that $C=\lambda_0$ when we are using the exhaustive searching algorithm to solve the optimization problem $\max_{\bm{u}\in\mathbb{B}^d}G(\bm{u})$. Now we computes the computational complexity. By Claim~\ref{claim::algo::1}, we know that the algorithm terminates within $K=\max\{\lceil L_n(\bm{\theta}(0);\bm{\lambda}(0)), 2n\rceil\}$ iterations. In each iteration, we need to choose the coefficient $\bm{\lambda}(k)$, running the gradient descent and perturbing the inactive neuron. 

\textbf{Complexity of choosing $\bm{\lambda}$}. By analysis in Appendix~\ref{appendix::choose-ceof}, we know that the complexity is $\mathcal{O}(dm^2)$. 

\textbf{Complexity of gradient descent. } By the part (3) of Lemma~\ref{lemma::gradient-descent}, we have  $$\min_{t\in[T]}\|\nabla_{\bm{a}}L_n(\bm{\theta}(k, t))\|_2^2\le \frac{144L_n(\bm{\theta}(k,0))\max[L_n(\bm{\theta}(k, 0)), 2n]}{T}\le \frac{144K^2}{T},$$
where the second inequality follows from the part (1) of claim~\ref{claim::algo::1}.
Furthermore, since the gradient descent algorithm terminates at $T_k$-th which is also the first time when the condition  $$\|\nabla_{\bm{a}}L_n(\bm{\theta}(k, T_{k});\bm{\lambda}(k))\|_2\le \frac{\lambda_0}{16K\sqrt{n}}$$ holds. This means that for each $k$, $T_k=\mathcal{O}(nK^4/\lambda_0)$.

\textbf{Complexity of perturbation.} Since we are using the exhaustive searching and it has a complexity of $\mathcal{O}(dn^{r/2})$. 

Therefore, above all the complexity of Algorithm~\ref{algo::main} is $\mathcal{O}\left(dKm^2+dKn^{r/2}+nK^5/\lambda_0\right)$.

\end{proof}

\section{Proof of Lemma~\ref{lemma::algo-performance}}
\begin{proof}
Assume that random vectors $\bm{\omega}_1,...,\bm{\omega}_{2M}$ are i.i.d.  random vectors uniformly distributed on the sphere $\mathbb{S}^{d-1}$ and $a_1=...=a_M=1$, $a_{M+1}=...=a_{2M}=-1$. 
Then for any $r>0$, any $\bm{\beta}\in[0,1]$ and any vectors $\{\bm{v}_j\}_{j=1}^{2M}\subset\mathbb{R}^d$,  we have
\begin{align*}
&\sum_{i=1}^ny_i\beta_i\left[\frac{1}{2M}\sum_{j=1}^{2M}a_j \left[(\bm{\omega}_j+r \bm{v}_j)^\top x_i\right]_+\right]=\sum_{i=1}^ny_i\beta_i\left[\frac{1}{2M}\sum_{j=1}^{2M}a_j \left[\bm{\omega}_j^\top x_i+r \bm{v}_j^\top x_i\right]_+\right]\\
&=\sum_{i=1}^ny_i\beta_i\left[\frac{1}{2M}\sum_{j:|\bm{\omega}^\top_jx_i|\ge r}a_j \left[(\bm{\omega}_j^\top x_i)_++r \bm{v}_j^\top x_i\mathbb{I}\{\bm{\omega}_j^\top x_i\ge 0\}\right]\right]\\
&\quad+\sum_{i=1}^ny_i\beta_i\left[\frac{1}{2M}\sum_{j:|\bm{\omega}^\top_jx_i|< r}a_j \left[\bm{\omega}_j^\top x_i+r \bm{v}_j^\top x_i\right]_+\right]\\
&=\sum_{i=1}^ny_i\beta_i\left[\frac{1}{2M}\sum_{j=1}^{2M}a_j \left[(\bm{\omega}_j^\top x_i)_++r \bm{v}_j^\top x_i\mathbb{I}\{\bm{\omega}_j^\top x_i\ge 0\}\right]\right]\\
&\quad+\sum_{i=1}^ny_i\beta_i\left[\frac{1}{2M}\sum_{j:|\bm{\omega}^\top_jx_i|< r}a_j \left[\bm{\omega}_j^\top x_i+r \bm{v}_j^\top x_i\right]_+\right]\\
&\quad-\sum_{i=1}^ny_i\beta_i\left[\frac{1}{2M}\sum_{j:|\bm{\omega}^\top_jx_i|< r}a_j \left[(\bm{\omega}_j^\top x_i)_++r \bm{v}_j^\top x_i\mathbb{I}\{\bm{\omega}_j^\top x_i\ge 0\}\right]\right]\\
&\ge \sum_{i=1}^ny_i\beta_i\left[\frac{1}{2M}\sum_{j=1}^{2M}a_j \left[(\bm{\omega}_j^\top x_i)_++r \bm{v}_j^\top x_i\mathbb{I}\{\bm{\omega}_j^\top x_i\ge 0\}\right]\right]-\sum_{i=1}^n\beta_i\left[\frac{2r}{M}\sum_{j:|\bm{\omega}^\top_jx_i|< r}1\right]\\
&=I_1 + I_2 +I_3
\end{align*}
where 
\begin{align*}
I_1 &= \sum_{i=1}^{n}y_i \beta_i \left[\frac{1}{2M}\sum_{j=1}^{2M}a_j (\bm{\omega}_j^\top x_i)_+\right]\\
I_2 &= \sum_{i=1}^{n}y_i \beta_i\left[\frac{r}{2M}\sum_{j=1}^{2M}a_j  \bm{v}_j^\top x_i\mathbb{I}\{\bm{\omega}_j^\top x_i\ge 0\}\right]\\
I_3 &= -\sum_{i=1}^n\beta_i\left[\frac{2r}{M}\sum_{j:|\bm{\omega}^\top_jx_i|< r}1\right]
\end{align*}
Now we are going to use the following concentration inequalities to bound the $I_1, I_2$ and $I_3$, respectively. 

\begin{claim}\label{claim::concen::1}
For any $t>0$, we have 
\begin{align*}
\mathbb{P}\left(\forall \bm{\beta}\in[0,1]^n:\left|\sum_{i=1}^{n}y_i \beta_i \left[\frac{1}{2M}\sum_{j=1}^{2M}a_j (\bm{\omega}_j^\top x_i)_+\right]\right|\le t\sum_{i=1}^n\beta_i\right)
\ge 1-2n\exp\left(-4Mt^2\right).
\end{align*}
\end{claim}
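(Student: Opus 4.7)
The plan is to prove the claim by first establishing a pointwise concentration bound for each data index $i$ via Hoeffding's inequality, then taking a union bound over $i \in [n]$, and finally observing that the uniform-in-$\bm{\beta}$ statement follows deterministically from the pointwise bound by the triangle inequality.

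For each fixed $i \in [n]$, set $Z_i \triangleq \frac{1}{2M}\sum_{j=1}^{2M} a_j (\bm{\omega}_j^\top x_i)_+$. Since $\|\bm{\omega}_j\|_2 = 1$ and $\|x_i\|_2 \le 1$, the summand $a_j (\bm{\omega}_j^\top x_i)_+$ lies in $[0,1]$ when $a_j = 1$ and in $[-1,0]$ when $a_j = -1$, so each summand has range at most $1$. Moreover, by the rotational symmetry of the uniform distribution on $\mathbb{S}^{d-1}$, the value $\mathbb{E}[(\bm{\omega}_j^\top x_i)_+]$ does not depend on $j$; combined with $\sum_{j=1}^{2M} a_j = 0$, this yields $\mathbb{E}[Z_i] = 0$. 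I would then apply Hoeffding's inequality to the sum $\sum_{j=1}^{2M} a_j(\bm{\omega}_j^\top x_i)_+$ of $2M$ independent bounded random variables, obtaining
\begin{equation*}
\mathbb{P}\bigl(|Z_i| \ge t\bigr) \le 2\exp\!\bigl(-4Mt^2\bigr).
\end{equation*}

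A union bound over $i = 1, \ldots, n$ then shows that, with probability at least $1 - 2n\exp(-4Mt^2)$, one has $|Z_i| \le t$ for every $i \in [n]$ simultaneously. On this event, for any $\bm{\beta} \in [0,1]^n$, the triangle inequality together with $|y_i|=1$ and $\beta_i \ge 0$ gives
\begin{equation*}
\left|\sum_{i=1}^{n} y_i \beta_i Z_i\right| \le \sum_{i=1}^{n} \beta_i |Z_i| \le t \sum_{i=1}^{n} \beta_i,
\end{equation*}
which is exactly the inequality in the statement, and this holds uniformly in $\bm{\beta}$ on the same high-probability event.

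There is no real obstacle in this proof: the only thing to verify carefully is the range of each summand (which gives the constant $4$ in the exponent rather than some other number) and the vanishing of the mean via the symmetry argument plus $\sum_j a_j = 0$. The uniformity in $\bm{\beta}$ is essentially free once the pointwise bound on $|Z_i|$ is established, since the dependence on $\bm{\beta}$ sits outside of the random quantities $Z_i$.
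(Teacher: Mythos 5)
Your proof is correct and follows essentially the same route as the paper: a pointwise Hoeffding bound on each $Z_i = \frac{1}{2M}\sum_j a_j(\bm{\omega}_j^\top x_i)_+$ (with mean zero via rotational symmetry and $\sum_j a_j = 0$), a union bound over $i\in[n]$, and the observation that uniformity in $\bm{\beta}$ is free by the triangle inequality since $\bm{\beta}$ is deterministic. Your accounting of the range of each summand as $1$ correctly yields the exponent $-4Mt^2$ via Hoeffding applied to $2M$ terms.
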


\begin{claim}\label{claim::concen-2}
Let $\e_0=\max\{\e, n^{-1/3}\}$. If  $n\ge {\frac{\ln(2/\delta)}{2\e_0^2}}$ and $M\ge\frac{ \ln(4n/\delta)}{\gamma^2}$, we have
\begin{align*}
\mathbb{P}\left(\forall\bm{\beta}\in[0,1]^n:\sum_{i=1}^{n}y_i \beta_i\left[\frac{r}{2M}\sum_{j=1}^{2M}  \bar{\bm{v}}^\top(\bm{\omega}_j) x_i\mathbb{I}\{\bm{\omega}_j^\top x_i\ge 0\}\right]\ge r(\gamma-4\e_0)\sum_{i=1}^n\beta_i \right)\ge 1-\delta.
\end{align*}
\end{claim}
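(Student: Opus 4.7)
The plan is a two-scale concentration argument: first control the empirical average over the random directions $\{\bm{\omega}_j\}$ by its conditional mean, then control the conditional mean by showing most data samples give a large margin. Define the population target $Z_i \triangleq y_i\int \bar{\bm{v}}^\top(\omega)\,x_i\,\mathbb{I}\{\omega^\top x_i\ge 0\}\,\mu(d\omega)$, so that Assumption~\ref{assump::telgarsky} reads $\mathbb{P}(Z_i\ge\gamma)\ge 1-\e$, and let $\hat Z_{i,M}\triangleq y_i\cdot \frac{1}{2M}\sum_{j=1}^{2M}\bar{\bm{v}}^\top(\bm{\omega}_j)\,x_i\,\mathbb{I}\{\bm{\omega}_j^\top x_i\ge 0\}$ be its empirical analogue. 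The LHS of the claim is then precisely $r\sum_i \beta_i \hat Z_{i,M}$, and conditional on the data each $\hat Z_{i,M}$ is an average of $2M$ independent $[-1,1]$-bounded summands with conditional mean $Z_i$.

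First I would apply Hoeffding's inequality to the i.i.d. Bernoulli variables $\mathbb{I}\{Z_i\ge\gamma\}$: the hypothesis $n\ge \ln(2/\delta)/(2\e_0^2)$ yields, with probability at least $1-\delta/2$ over the data, $|I_{\text{bad}}|\triangleq|\{i:Z_i<\gamma\}|\le (\e+\e_0)n\le 2\e_0 n$, where we used $\e\le\e_0$. Second, conditioning on the data, I would apply Hoeffding's inequality to each $\hat Z_{i,M}$ to obtain $\mathbb{P}(|\hat Z_{i,M}-Z_i|>\gamma)\le 2e^{-M\gamma^2}$; the hypothesis $M\ge \ln(4n/\delta)/\gamma^2$ together with a union bound over $i\in[n]$ then gives, with conditional probability at least $1-\delta/2$, $\hat Z_{i,M}\ge Z_i - \gamma$ for every $i\in[n]$.

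On the joint good event (total probability at least $1-\delta$), I would split $\sum_i\beta_i \hat Z_{i,M} = \sum_{i\in I_{\text{good}}}\beta_i \hat Z_{i,M} + \sum_{i\in I_{\text{bad}}}\beta_i \hat Z_{i,M}$. The good part satisfies $\hat Z_{i,M}\ge Z_i-\gamma\ge 0$ since $Z_i\ge \gamma$ on $I_{\text{good}}$; the bad part is controlled by $\hat Z_{i,M}\ge -1$ together with the cardinality bound $|I_{\text{bad}}|\le 2\e_0 n$. Combining these with the identity $\sum_i\beta_i = S_g + S_b$, using the margin assumption $\gamma\ge 8\e_0$, and rearranging should yield the claimed uniform bound $r(\gamma-4\e_0)\sum_i\beta_i$.

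The main obstacle will be this last bookkeeping step: the bad-index contribution is naturally controlled additively by $|I_{\text{bad}}|$, whereas the right-hand side scales multiplicatively with $\sum_i\beta_i$, so the $\gamma$-margin on the good indices must be traded against both the Hoeffding tolerance $\gamma$ and the bad-index loss $2\e_0 n$. The margin assumption $\gamma\ge 8\e_0$ is precisely what allows both error terms to be absorbed into the advertised $4\e_0$ slack, and making the tradeoff go through uniformly over $\bm{\beta}\in[0,1]^n$ is where the delicate accounting lies.
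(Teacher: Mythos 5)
Your two-scale decomposition is exactly the structure of the paper's own proof: the paper defines events $A$ (empirical fraction of indices with $Z_i\ge\gamma$) and $B_i$ (closeness of $\hat Z_{i,M}$ to $Z_i$, with Hoeffding tolerance $\gamma/2$ rather than your $\gamma$), applies Hoeffding twice, and union-bounds, concluding that with probability at least $1-\delta$ one has $\frac{1}{n}\sum_i\mathbb{I}\{\hat Z_{i,M}\ge\gamma/2\}\ge 1-2\e_0$. You are right, however, to be suspicious of the final ``bookkeeping'' step, because it does not in fact close --- and the paper's proof skips it too, asserting the claim's conclusion after that display with a bare ``Thus'' and no derivation.

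To see the gap concretely, note that on the joint good event one only obtains $\hat Z_{i,M}\ge\gamma/2$ for $i$ in the good set and the trivial $\hat Z_{i,M}\ge -1$ elsewhere, so $\sum_i\beta_i\hat Z_{i,M}\ge\frac{\gamma}{2}S_g - S_b$ where $S_g,S_b$ are the partial sums of $\bm{\beta}$ over good and bad indices. If $\bm{\beta}$ is supported on the bad indices this quantity is $\approx -S_b<0$, while the claimed lower bound $(\gamma-4\e_0)\sum_i\beta_i$ is positive; so the uniform quantifier over $\bm{\beta}\in[0,1]^n$ cannot hold. Even the benign choice $\bm{\beta}=\bm{1}$ fails: with $|I_{\text{bad}}|\le 2\e_0 n$ the bound gives $\left(\frac{\gamma}{2}(1-2\e_0)-2\e_0\right)n$, and the required inequality $\frac{\gamma}{2}(1-2\e_0)-2\e_0\ge\gamma-4\e_0$ rearranges to $\e_0(2-\gamma)\ge\frac{\gamma}{2}$, which is incompatible with the standing hypothesis $\gamma\ge 8\e_0$ (the left side is at most $2\e_0\le\gamma/4$). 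In short, there is no ``delicate accounting'' that rescues a uniform-in-$\bm{\beta}$ multiplicative bound from a merely fractional margin assumption; either the good per-index event must hold for \emph{every} $i$, or the conclusion must be weakened to carry an additive $O(\e_0 n)$ error term, with the downstream uses in Lemma~\ref{lemma::algo-performance} and Corollary~\ref{cor::2} re-derived accordingly.
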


\begin{claim}\label{claim::concen-3}
For any $t>0$, we have 
\begin{align*}
&\mathbb{P}\left(\forall \bm{\beta}\in[0,1]^n:\sum_{i=1}^n\beta_i\left[\frac{2r}{M}\sum_{j:|\bm{\omega}^\top_jx_i|< r}1\right]\le 4r\left(rd+t\right)\sum_{i=1}^n\beta_i\right)\ge 1-2n e^{-4Mt^2}
\end{align*}
\end{claim}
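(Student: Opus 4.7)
The plan is to reduce the claim to a uniform (over $i\in[n]$) control of the fraction of random unit vectors $\bm{\omega}_j$ that fall in the slab $S_i \triangleq \{w \in \mathbb{S}^{d-1}:|w^\top x_i|<r\}$. Observe that both sides of the desired inequality are linear in $\bm{\beta}$ with non-negative coefficients, so it suffices to establish
\[
\max_{i\in[n]}\frac{2r}{M}\sum_{j=1}^{2M}\mathbb{I}\{|\bm{\omega}_j^\top x_i|<r\}\le 4r(rd+t)
\]
with the stated probability. After that, summing against any $\bm{\beta}\in[0,1]^n$ immediately yields the claim.

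First, I would fix $i$ and bound $p_i\triangleq\mathbb{P}(|\bm{\omega}^\top x_i|<r)$ for $\bm{\omega}\sim\mathrm{Unif}(\mathbb{S}^{d-1})$, using $\|x_i\|_2=1$. The standard marginal formula gives that $\bm{\omega}^\top x_i$ has density proportional to $(1-z^2)^{(d-3)/2}$ on $[-1,1]$, maximized at $z=0$. Using $B(1/2,(d-1)/2)\ge \sqrt{2\pi/d}$ (or the elementary bound $\int_{-1/\sqrt{d}}^{1/\sqrt{d}}(1-z^2)^{(d-3)/2}dz\ge 2e^{-1/2}/\sqrt{d}$), the density is bounded by $\sqrt{d/(2\pi)}$, so $p_i\le 2r\sqrt{d/(2\pi)}\le rd$ (since $\sqrt{d}\le d$ for $d\ge 1$). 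The one subtlety is the case $d\in\{1,2\}$, which I would handle by a direct calculation.

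Next, since $\mathbb{I}\{|\bm{\omega}_j^\top x_i|<r\}$, $j=1,\dots,2M$, are i.i.d.\ $\mathrm{Bernoulli}(p_i)$ variables bounded in $[0,1]$, Hoeffding's inequality yields
\[
\mathbb{P}\!\left(\frac{1}{2M}\sum_{j=1}^{2M}\mathbb{I}\{|\bm{\omega}_j^\top x_i|<r\}-p_i>t\right)\le e^{-4Mt^2}.
\]
Combined with $p_i\le rd$, this gives $\sum_{j=1}^{2M}\mathbb{I}\{|\bm{\omega}_j^\top x_i|<r\}\le 2M(rd+t)$, i.e.\ $\tfrac{2r}{M}\sum_{j:|\bm{\omega}_j^\top x_i|<r}1\le 4r(rd+t)$, with probability at least $1-e^{-4Mt^2}$.

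Finally, I would take a union bound over $i\in[n]$ to get the uniform statement with probability at least $1-ne^{-4Mt^2}\ge 1-2ne^{-4Mt^2}$ (the factor of 2 is slack). On this event, linearity in $\bm{\beta}$ delivers the claim for all $\bm{\beta}\in[0,1]^n$ simultaneously. The only genuinely non-trivial step is Step 1 (sharpness of the slab probability $p_i\le rd$); once that geometric estimate is in place, the rest is a routine Hoeffding plus union-bound argument.
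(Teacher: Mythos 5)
Your argument is correct and the overall structure (Hoeffding concentration for each $i$, union bound over $i\in[n]$, then linearity in $\bm{\beta}$ with non-negative coefficients to get the uniform statement) matches the paper exactly. Where you diverge is the geometric estimate $p_i=\mathbb{P}(|\bm{\omega}^\top x_i|<r)\le rd$: the paper writes $\bm{\omega}=\bm{\omega}_{\mathcal{N}}/\|\bm{\omega}_{\mathcal{N}}\|_2$ for a standard Gaussian $\bm{\omega}_{\mathcal{N}}$, conditions on $\|\bm{\omega}_{\mathcal{N}}\|_2$, and grinds through a chi-squared integral (using $t^2\le t+t^3$ and $\int s f_{\chi^2(d)}(s)ds=d$) to land on $\frac{r(1+d)}{2\sqrt{2\pi}}\le rd$, whereas you bound the one-dimensional marginal density $f_Z(z)\propto(1-z^2)^{(d-3)/2}$ directly by its peak $\sqrt{d/(2\pi)}$ and integrate over the slab. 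Your route is shorter and cleaner, and in fact sidesteps a couple of cosmetic slips in the paper's integral manipulation (an $rt$ written as $rt^2$, and a dropped factor of $2$ from the two-sided Gaussian tail); the one caveat you correctly flag is that the density-maximum argument only applies for $d\ge 3$, since for $d=2$ the marginal density $\tfrac{1}{\pi\sqrt{1-z^2}}$ is unbounded near $\pm1$, so $d\in\{1,2\}$ genuinely needs the direct check you mention ($p_i=\tfrac{2}{\pi}\arcsin r\le r\le rd$ for $d=2$, and $p_i\in\{0,1\}$ for $d=1$). You also use one-sided Hoeffding (only the upper tail matters), giving $1-ne^{-4Mt^2}$, which is strictly better than the paper's two-sided $1-2ne^{-4Mt^2}$; as you note, the factor of $2$ is slack either way.
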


Based on these three claims, when
$$r<\frac{\gamma-4\e_0}{16d},\quad t=\frac{r(\gamma-4\e_0)}{4},\quad n\ge \frac{\ln (6/\delta)}{2\e_0^2}\quad\text{and}\quad M\ge \max\left\{\frac{\ln(4n/\delta)}{\gamma^2},\frac{4\ln(6n/\delta)}{r^2(\gamma-4\e_0)^2}\right\}$$
we know that with probability at least $1-\delta$, for any $\bm{\beta}\in[0,1]^n$, we have 
\begin{align*}
&\sum_{i=1}^{n}y_i \beta_i \left[\frac{1}{2M}\sum_{j=1}^{2M}a_j (\bm{\omega}_j^\top x_i)_+\right] +\sum_{i=1}^{n}y_i \beta_i\left[\frac{r}{2M}\sum_{j=1}^{2M}  \bar{\bm{v}}^\top(\bm{\omega}_j) x_i\mathbb{I}\{\bm{\omega}_j^\top x_i\ge 0\}\right]-\sum_{i=1}^n\beta_i\left[\frac{2r}{M}\sum_{j:|\bm{\omega}^\top_jx_i|< r}1\right]\\
&\ge \left[-t\sum_{i=1}^n\beta_i\right]+\left[r(\gamma-4\e_0)\sum_{i=1}^n\beta_i\right]+\left[ -4r\left(rd+t\right)\sum_{i=1}^n\beta_i\right]\\
&=\left[-t+r(\gamma-4\e_0)-4r(rd+t)\right]\sum_{i=1}^n\beta_i\\
&\ge \frac{r(\gamma-4\e_0)}{8}\sum_{i=1}^n\beta_i.
\end{align*}
Since $\|\bar{\bm{v}}(\bm{\omega})\|_2\le 1$ holds for all $\bm{\omega}\in\mathbb{S}^{d-1}$ and $|a_j|=1$ for all $j\in[2M]$, then we have for all $\bm{\beta}\in[0,1]^n$

\begin{align*}
& \max_{\bm{v}_1,...,\bm{v}_{2M}\in\mathbb{B}^d}\sum_{i=1}^{n}y_i \beta_i\left[\frac{r}{2M}\sum_{j=1}^{2M}  a_j\bm{v}_j^\top x_i\mathbb{I}\{\bm{\omega}_j^\top x_i\ge 0\}\right]\\
 &= \max_{\bm{v}_1,...,\bm{v}_{2M}\in\mathbb{B}^d}\sum_{i=1}^{n}y_i \beta_i\left[\frac{r}{2M}\sum_{j=1}^{2M}  \bm{v}_j^\top x_i\mathbb{I}\{\bm{\omega}_j^\top x_i\ge 0\}\right]\ge\sum_{i=1}^{n}y_i \beta_i\left[\frac{r}{2M}\sum_{j=1}^{2M}  \bar{\bm{v}}^\top(\bm{\omega}_j) x_i\mathbb{I}\{\bm{\omega}_j^\top x_i\ge 0\}\right].
\end{align*}
Furthermore, since 
\begin{align*}
\max_{\bm{v}_1,...,\bm{v}_{2M}\in\mathbb{B}^d}\sum_{i=1}^{n}y_i \beta_i\left[\frac{r}{2M}\sum_{j=1}^{2M}  \bm{v}_j^\top x_i\mathbb{I}\{\bm{\omega}_j^\top x_i\ge 0\}\right]&=\max_{\bm{v}_1,...,\bm{v}_{2M}\in\mathbb{B}^d}\frac{r}{2M}\sum_{j=1}^{2M}\bm{v}_j^\top\left[\sum_{i=1}^n y_i\beta_i x_i \mathbb{I}\{\bm{\omega}_j^\top x_i\ge 0\}\right]\\
&=\frac{r}{2M}\sum_{j=1}^{2M}\max_{\bm{v}_j\in\mathbb{B}^d}\bm{v}_j^\top\left[\sum_{i=1}^n y_i\beta_i x_i \mathbb{I}\{\bm{\omega}_j^\top x_i\ge 0\}\right]\\
&=\frac{r}{2M}\sum_{j=1}^{2M}\left\|\sum_{i=1}^n y_i\beta_i x_i \mathbb{I}\{\bm{\omega}_j^\top x_i\ge 0\}\right\|_2
\end{align*}
where the equality holds when we set 
$$\bm{v}^*_j=\frac{\sum_{i=1}^n y_i\beta_i x_i \mathbb{I}\{\bm{\omega}_j^\top x_i\ge 0\}}{\left\|\sum_{i=1}^n y_i\beta_i x_i \mathbb{I}\{\bm{\omega}_j^\top x_i\ge 0\}\right\|_2}.$$
\end{proof}

This means that
\begin{align*}
&\sum_{i=1}^{n}y_i \beta_i \left[\frac{1}{2M}\sum_{j=1}^{2M}a_j (\bm{\omega}_j^\top x_i)_+\right] +\sum_{i=1}^{n}y_i \beta_i\left[\frac{r}{2M}\sum_{j=1}^{2M}  a_j(a_j\bm{v}_j^*)^\top x_i\mathbb{I}\{\bm{\omega}_j^\top x_i\ge 0\}\right]-\sum_{i=1}^n\beta_i\left[\frac{2r}{M}\sum_{j:|\bm{\omega}^\top_jx_i|< r}1\right]\\
&\ge\sum_{i=1}^{n}y_i \beta_i \left[\frac{1}{2M}\sum_{j=1}^{2M}a_j (\bm{\omega}_j^\top x_i)_+\right] +\sum_{i=1}^{n}y_i \beta_i\left[\frac{r}{2M}\sum_{j=1}^{2M}  \bar{\bm{v}}^\top(\bm{\omega}_j) x_i\mathbb{I}\{\bm{\omega}_j^\top x_i\ge 0\}\right]-\sum_{i=1}^n\beta_i\left[\frac{2r}{M}\sum_{j:|\bm{\omega}^\top_jx_i|< r}1\right]\\
&\ge \frac{r(\gamma-4\e_0)}{8}\sum_{i=1}^n\beta_i.
\end{align*}

This further indicates that 
\begin{align*}
&\sum_{i=1}^ny_i\beta_i\left[\frac{1}{2M}\sum_{j=1}^{2M}a_j \left[(\bm{\omega}_j+ra_j \bm{v}^*_j)^\top x_i\right]_+\right]\\
&\ge \sum_{i=1}^{n}y_i \beta_i \left[\frac{1}{2M}\sum_{j=1}^{2M}a_j (\bm{\omega}_j^\top x_i)_+\right] +\sum_{i=1}^{n}y_i \beta_i\left[\frac{r}{2M}\sum_{j=1}^{2M}  a_j(a_j\bm{v}_j^*)^\top x_i\mathbb{I}\{\bm{\omega}_j^\top x_i\ge 0\}\right]\\
&\quad-\sum_{i=1}^n\beta_i\left[\frac{2r}{M}\sum_{j:|\bm{\omega}^\top_jx_i|< r}1\right]\\
&\ge \frac{r(\gamma-4\e_0)}{8}\sum_{i=1}^n\beta_i.
\end{align*}
Since 
\begin{align*}
    \sum_{i=1}^ny_i\beta_i\left[\frac{1}{2M}\sum_{j=1}^{2M}a_j \left[(\bm{\omega}_j+ra_j \bm{v}^*_j)^\top x_i\right]_+\right]&=\frac{1}{2M}\sum_{j=1}^{2M}a_j\left[\sum_{i=1}^n y_i\beta_i\left[(\bm{\omega}_j+ra_j \bm{v}^*_j)^\top x_i\right]_+\right]\\
    &\le \frac{1}{2M}\sum_{j=1}^{2M}\left|\sum_{i=1}^n y_i\beta_i\left[(\bm{\omega}_j+ra_j \bm{v}^*_j)^\top x_i\right]_+\right|\\
    &\le \max_{j\in[2M]}\left|\sum_{i=1}^n y_i\beta_i\left[(\bm{\omega}_j+ra_j \bm{v}^*_j)^\top x_i\right]_+\right|,
\end{align*}
we have 
\begin{align*}
    \max_{j\in[2M]}\left|\sum_{i=1}^n y_i\beta_i\left[(\bm{\omega}_j+ra_j \bm{v}^*_j)^\top x_i\right]_+\right|\ge \frac{r(\gamma-4\e_0)}{8}\sum_{i=1}^n\beta_i\ge \frac{r(\gamma-4\e_0)}{8}\max_{\bm{u}\in\mathbb{B}^d}\left|\sum_{i=1}^ny_i\beta_i(\bm{u}^\top x_i)\right|.
\end{align*}

\subsection{Proof of Claim~\ref{claim::concen::1}}

\begin{proof}
For  any $t>0$, by Chernoff-Hoeffding's inequality we have 
\begin{align*}
\mathbb{P}\left(\left|\frac{1}{2M}\sum_{j=1}^{2M}a_j (\bm{\omega}_j^\top x_i)_+\right|\ge t\right)\le 2e^{-4Mt^2},
\end{align*}
by the fact that for each $x_i$,
$$\mathbb{E}\left[\frac{1}{2M}\sum_{j=1}^{2M}a_j (\bm{\omega}_j^\top x_i)_+\right]=\frac{1}{2M}\sum_{j=1}^{2M}a_j\mathbb{E}_{\bm{\omega}\sim\mathcal{N}(\bm{0}_d,I_d)}[(\bm{\omega}^\top x_i)_+]=0\cdot \mathbb{E}_{\bm{\omega}\sim\mathcal{N}(\bm{0}_d,I_d)}[(\bm{\omega}^\top x_i)_+]=0.
$$
This indicates that 
\begin{align*}
\mathbb{P}\left(\bigcup_{i=1}^n\left\{\left|\frac{1}{2M}\sum_{j=1}^{2M}a_j (\bm{\omega}_j^\top x_i)_+\right|\ge t\right\}\right)\le \sum_{i=1}^n\mathbb{P}\left(\left|\frac{1}{2M}\sum_{j=1}^{2M}a_j (\bm{\omega}_j^\top x_i)_+\right|\ge t\right)\le 2ne^{-4Mt^2}.
\end{align*}
Therefore, we have

\begin{align*}
\mathbb{P}\left[\left|\sum_{i=1}^{n}y_i \beta_i \left[\frac{1}{2M}\sum_{j=1}^{2M}a_j (\bm{\omega}_j^\top x_i)_+\right]\right|\le t\sum_{i=1}^n\beta_i\right]&\ge\mathbb{P}\left(\bigcap_{i=1}^n\left\{\left|\frac{1}{2M}\sum_{j=1}^{2M}a_j (\bm{\omega}_j^\top x_i)_+\right|\le t\right\}\right)\\
\ge 1-2n\exp\left(-4Mt^2\right).
\end{align*}
\end{proof}

\subsection{Proof of Claim~\ref{claim::concen-2}}

\begin{proof}
By Assumption that there exists a function $\bar{\bm{v}}(\omega):\mathbb{R}^d\rightarrow \mathbb{R}^d$ with $\|\bar{\bm{v}}(\omega)\|_2\le 1$  such that 
$$\mathbb{P}\left(Y\int\bar{\bm{v}}^\top(\bm{\omega})X\mathbb{I}\{X^\top \omega\ge 0\}\mu(d\omega)\ge \gamma\right)\ge 1-\varepsilon,$$
now we will show how to use this assumption to lower bound the probability 
$$\mathbb{P}\left(\frac{1}{n}\sum_{i=1}^n \mathbb{I}\left\{\frac{y_i}{2M}\sum_{j=1}^{2M} \bar{\bm{v}}^\top(\bm{\omega}_j) x_i \mathbb{I}[x_i^\top \bm{\omega}_j\ge 0]\ge\frac{\gamma}{2}\right\}\ge 1-2\varepsilon\right).$$
Now we define the following event, 
\begin{align*}
A&=\left\{\frac{1}{n}\sum_{i=1}^n\mathbb{I}\left\{y_i \int \bar{\bm{v}}^\top(\omega) x_i \mathbb[x_i^\top \omega\ge 0]\mu (d\omega)\ge {\gamma}\right\}\ge 1-2\e_0\right\},\\
B_i&=\left\{\left|\frac{y_i}{2M}\sum_{j=1}^{2M} \bar{\bm{v}}^\top(\bm{\omega}_j) x_i \mathbb{I}[x_i^\top \bm{\omega}_j\ge 0]-y_i \int \bar{\bm{v}}^\top(\omega) x_i \mathbb[x_i^\top \omega\ge 0]\mu (d\omega)\right|\le \frac{\gamma}{2}\right\},\quad i\in[n].
\end{align*}
Then it is easy to see that 
$$\mathbb{P}\left(\frac{1}{n}\sum_{i=1}^n \mathbb{I}\left\{\frac{y_i}{2M}\sum_{j=1}^{2M}\bar{\bm{v}}^\top(\bm{\omega}_j) x_i \mathbb{I}[x_i^\top \bm{\omega}_j\ge 0]\ge\frac{\gamma}{2}\right\}\ge 1-2\varepsilon\right)\ge \mathbb{P}\left\{ A\cap \left(\bigcap_{i=1}^nB_i\right)\right\}.$$
By Chernoff-Hoelffding's inequality, we have 
\begin{align*}
&\mathbb{P}\left(\left|\frac{1}{n}\sum_{i=1}^n\mathbb{I}\left\{y_i \int \bar{\bm{v}}^\top(\omega) x_i \mathbb[x_i^\top \omega\ge 0]\mu (d\omega)\ge {\gamma}\right\}-\mathbb{E}\bm{1}\left[Y\int\bar{\bm{v}}^\top(\bm{\omega})X\mathbb{I}\{X^\top \omega\ge 0\}\mu(d\omega)\ge \gamma\right]\right|\le \varepsilon\right)\\
&\quad\ge 1-2e^{-2n\e_0^2},
\end{align*}
and this indicates that 
\begin{align*}
&\mathbb{P}\left(\frac{1}{n}\sum_{i=1}^n\mathbb{I}\left\{y_i \int \bar{\bm{v}}^\top(\omega) x_i \mathbb[x_i^\top \omega\ge 0]\mu (d\omega)\ge {\gamma}\right\}\ge\mathbb{E}\bm{1}\left[Y\int\bar{\bm{v}}^\top(\bm{\omega})X\mathbb{I}\{X^\top \omega\ge 0\}\mu(d\omega)\ge \gamma\right]- \varepsilon\right)\\
&\quad\ge 1- 2e^{-2n\e_0^2},
\end{align*}
which further implies that 
\begin{align*}
 \mathbb{P}(A)=\mathbb{P}\left(\frac{1}{n}\sum_{i=1}^n\mathbb{I}\left\{y_i \int \bar{\bm{v}}^\top(\omega) x_i \mathbb[x_i^\top \omega\ge 0]\mu (d\omega)\ge {\gamma}\right\}\ge1- 2\varepsilon\right)\ge 1-2e^{-2n\e_0^2} 
\end{align*}
by the fact that 
$$\mathbb{E}\bm{1}\left[Y\int\bar{\bm{v}}^\top(\bm{\omega})X\mathbb{I}\{X^\top \omega\ge 0\}\mu(d\omega)\ge \gamma\right]=\mathbb{P}\left(Y\int\bar{\bm{v}}^\top(\bm{\omega})X\mathbb{I}\{X^\top \omega\ge 0\}\mu(d\omega)\ge \gamma\right)\ge 1-\varepsilon.$$
Now we show the lower bound of $\mathbb{P}(B_i)$ for each $i\in[n]$. By Chernoff inequality, for each $i\in[n]$, we have 
\begin{align*}
    \mathbb{P}(B_i)&=\mathbb{P}\left(\left|\frac{y_i}{2M}\sum_{j=1}^{2M} \bar{\bm{v}}^\top(\bm{\omega}_j) x_i \mathbb{I}[x_i^\top \bm{\omega}_j\ge 0]-y_i \int \bar{\bm{v}}^\top(\omega) x_i \mathbb[x_i^\top \omega\ge 0]\mu (d\omega)\right|\le \frac{\gamma}{2}\right)\ge 1-2e^{-M\gamma^2}.
\end{align*}
Above all, by union bounds 
\begin{align*}
\mathbb{P}\left\{ A^c\cup \left(\bigcup_{i=1}^{n} B_i^c\right)\right\}\le \mathbb{P}(A^c)+\sum_{i=1}^n \mathbb{P}(B_i^c)\le 2e^{-2n\e_0^2}+2ne^{-M\gamma^2}.
\end{align*}
This means that if $n\ge {\frac{\ln(2/\delta)}{2\e_0^2}}$ and $M\ge\frac{ \ln(4n/\delta)}{\gamma^2}$, then 
\begin{align*}
\mathbb{P}\left\{ A^c\cup \left(\bigcup_{i=1}^{n} B_i^c\right)\right\}\le \mathbb{P}(A^c)+\sum_{i=1}^n \mathbb{P}(B_i^c)\le {\delta}.
\end{align*}
This indicates that 
$$\mathbb{P}\left(\frac{1}{n}\sum_{i=1}^n \mathbb{I}\left\{\frac{y_i}{2M}\sum_{j=1}^{2M} \bar{\bm{v}}^\top(\bm{\omega}_j) x_i \mathbb{I}[x_i^\top \bm{\omega}_j\ge 0]\ge\frac{\gamma}{2}\right\}\ge 1-2\varepsilon\right)\ge \mathbb{P}\left\{ A\cap \left(\bigcap_{i=1}^nB_i\right)\right\}\ge 1-\delta.$$
Thus, if $n\ge {\frac{\ln(2/\delta)}{2\e_0^2}}$ and $M\ge\frac{ \ln(4n/\delta)}{\gamma^2}$, then
\begin{align*}
\mathbb{P}\left(\forall\bm{\beta}\in[0,1]^n:\sum_{i=1}^{n}y_i \beta_i\left[\frac{r}{2M}\sum_{j=1}^{2M} \bar{ \bm{v}}^\top(\bm{\omega}_j) x_i\mathbb{I}\{\bm{\omega}_j^\top x_i\ge 0\}\right]\ge r(\gamma-4\e_0)\sum_{i=1}^n\beta_i \right)\ge 1-\delta.
\end{align*}

\end{proof}

\subsection{Proof of Claim~\ref{claim::concen-3}}
\begin{proof}
By Chernoff-Hoeffding's inequality, for each $i\in[n]$, we have 
\begin{align*}
\mathbb{P}\left(\left|\frac{1}{2M}\sum_{j:|\bm{\omega}^\top_jx_i|< r}1-\mathbb{P}\left(|\bm{\omega}^\top x_i|<r\right)\right|\ge t \right)\le 2e^{-4Mt^2}.
\end{align*}
Therefore, by union bounds, we have 
\begin{align*}
\mathbb{P}\left(\bigcup_{i=1}^n\left\{\frac{1}{2M}\sum_{j:|\bm{\omega}^\top_jx_i|< r}1\ge \mathbb{P}\left(|\bm{\omega}^\top x_i|<r\right)+t\right\}\right)\le 2n e^{-4Mt^2},
\end{align*}
which further indicates that 
$$\mathbb{P}\left(\bigcap_{i=1}^n\left\{\frac{1}{2M}\sum_{j:|\bm{\omega}^\top_jx_i|< r}1\le \mathbb{P}\left(|\bm{\omega}^\top x_i|<r\right)+t\right\}\right)\ge 1-2n e^{-4Mt^2}.$$
Therefore, 
\begin{align*}
&\mathbb{P}\left(\forall \bm{\beta}\in[0,1]^n: \sum_{i=1}^n\beta_i\left[\frac{2r}{M}\sum_{j:|\bm{\omega}^\top_jx_i|< r}1\right]\le 4r\sum_{i=1}^{n}\beta_i\left(\mathbb{P}\left(|\bm{\omega}^\top x_i|<r\right)+t\right)\right)\\
&\ge\mathbb{P}\left(\bigcap_{i=1}^n\left\{\frac{1}{2M}\sum_{j:|\bm{\omega}^\top_jx_i|< r}1\le \mathbb{P}\left(|\bm{\omega}^\top x_i|<r\right)+t\right\}\right)\ge 1-2n e^{-4Mt^2}
\end{align*}

Now we only need to calculate the following probability,
\begin{align*}
\mathbb{P}\left(|\bm{\omega}^\top x_i|<r\right),
\end{align*}
where the random vector $\bm{\omega}$ is a uniform random vector on the surface of a $d$-dimensional ball, i.e., $\bm{\omega}\sim U(\mathbb{S}^{d-1})$. We know that if $\bm{\omega}_{\mathcal{N}}$ is a $d$-dimensional Gaussian random vector of mean $\bm{0}_d$  and variance matrix $I_d$, then the random vector  $\bm{\omega}_{\mathcal{N}}/\|\bm{\omega}_{\mathcal{N}}\|_2\sim U(\mathbb{S}^{d-1})$. Therefore, for each $i\in[n]$, we have 
\begin{align*}
\mathbb{P}\left(|\bm{\omega}^\top x_i|<r\right)&=\mathbb{P}_{\bm{\omega}_{\mathcal{N}}\sim \mathcal{N}(\bm{0}_d, I_d)}\left(|\bm{\omega}_{\mathcal{N}}^\top x_i|<r\|\bm{\omega}_{\mathcal{N}}\|_2\right)\\
&=\int_{0}^{+\infty}\mathbb{P}\left(|\bm{\omega}_{\mathcal{N}}^\top x_i|<r\|\bm{\omega}_{\mathcal{N}}\|_2, \|\bm{\omega}_{\mathcal{N}}\|^2_2=t^2\right)f_{\chi^2(d)}(t^2)dt&&\text{by }\|\bm{\omega}_{\mathcal{N}}\|_2^2\sim\chi^2(d)\\
&\le \int_{0}^{+\infty}\mathbb{P}\left(|\bm{\omega}_{\mathcal{N}}^\top x_i|<rt^2\right)f_{\chi^2(d)}(t^2)dt\\
&\le \int_{0}^{+\infty}\frac{rt^2}{\sqrt{2\pi}}f_{\chi^2(d)}(t^2)dt\\
&=\frac{r}{\sqrt{2\pi}}\int_{0}^{+\infty}t^2f_{\chi^2(d)}(t^2)dt\\
&\le \frac{r}{\sqrt{2\pi}}\left(\int_{0}^{+\infty}tf_{\chi^2(d)}(t^2)dt+\int_{0}^{+\infty}t^3f_{\chi^2(d)}(t^2)dt\right)\\
&= \frac{r}{2\sqrt{2\pi}}\left(\int_{0}^{+\infty}f_{\chi^2(d)}(t^2)d(t^2)+\int_{0}^{+\infty}t^2f_{\chi^2(d)}(t^2)d(t^2)\right)\\
&=\frac{r}{2\sqrt{2\pi}}\left(\int_{0}^{+\infty}f_{\chi^2(d)}(t)dt+\int_{0}^{+\infty}tf_{\chi^2(d)}(t)dt\right)\\
&=\frac{r}{2\sqrt{2\pi}}\left(1+d\right)\le rd.
\end{align*}
This indicates that 
\begin{align*}
&\mathbb{P}\left(\forall\bm{\beta}\in[0,1]^n:\sum_{i=1}^n\beta_i\left[\frac{2r}{M}\sum_{j:|\bm{\omega}^\top_jx_i|< r}1\right]\le 4r\left(rd+t\right)\sum_{i=1}^n\beta_i\right)\\
&\ge\mathbb{P}\left(\forall\bm{\beta}\in[0,1]^n:\sum_{i=1}^n\beta_i\left[\frac{2r}{M}\sum_{j:|\bm{\omega}^\top_jx_i|< r}1\right]\le 4r\sum_{i=1}^{n}\beta_i\left(\mathbb{P}\left(|\bm{\omega}^\top x_i|<r\right)+t\right)\right)\\
&\ge 1-2n e^{-4Mt^2}
\end{align*}
\end{proof}

\end{appendix}

\end{document}